\theoremstyle{plain}
\newtheorem{theorem}{Theorem}
\newtheorem{proposition}[theorem]{Proposition}
\newtheorem{lemma}[theorem]{Lemma}
\newtheorem{corollary}[theorem]{Corollary}
\theoremstyle{definition}
\theoremstyle{remark}
\def\b{\bm{b}}
\def\c{\bm{c}}
\def\d{\bm{d}}
\def\f{\bm{f}}
\def\g{\bm{g}}
\def\p{\bm{p}}
\def\r{\bm{r}}
\def\u{\bm{u}}
\def\v{\bm{v}}
\def\w{\bm{w}}
\def\x{\bm{x}}
\def\y{\bm{y}}
\def\z{\bm{z}}
\def\K{\bm{K}}
\def\W{\bm{W}}
\def\alphav{\bm{\alpha}}
\def\betav{\bm{\beta}}
\def\muv{\bm{\mu}}
\def\deltav{\bm{\delta}}
\def\lambdav{\bm{\lambda}}
\def\zeros{\mathbf{0}}
\newcommand{\RR}{\mathbb{R}}
\newcommand{\prox}{\operatorname{prox}}
\DeclareMathOperator*{\argmax}{argmax}
\DeclareMathOperator*{\argmin}{argmin}
\newcommand{\diag}{\operatorname{\bf diag}}
\newcommand{\idm}{\operatorname{I}}
\newcommand{\ones}{\operatorname{\mathbf 1}}
\newcommand{\E}{\mathbb{E}}
\newcommand{\obj}{h}
\newcommand{\softmax}{\operatorname{softmax}}
\begin{document}

\twocolumn[

\aistatstitle{Dual Gauss-Newton Directions for Deep Learning}

\aistatsauthor{Vincent Roulet \And Mathieu Blondel}

\aistatsaddress{Google DeepMind}
]

\begin{abstract}
  Gauss-Newton (a.k.a.\ prox-linear) directions can be computed by solving an
  optimization subproblem that trade-offs between a partial linearization of the
  objective function and a proximity term. In this paper, we study for the first
  time the possibility to leverage the convexity of this subproblem in order to
  instead solve the corresponding dual. As we show, the dual can be advantageous
  when the number of network outputs is smaller than the number of network
  parameters. We propose a conjugate gradient algorithm to solve the dual, that
  integrates seamlessly with autodiff through the use of linear operators and
  handles dual constraints. We prove that this algorithm produces descent
  directions, when run for any number of steps. Finally, we
  study empirically the advantages and current limitations of our approach
  compared to various popular deep learning solvers.
\end{abstract}

\vspace{-0.4cm}
\section{Introduction}\label{sec:intro}

We consider deep learning objectives of the form
\begin{equation}
\min_{\w \in \RR^p}
\left[\frac{1}{n} \sum_{i=1}^n \obj_i(\w)
= \frac{1}{n} \sum_{i=1}^n \ell_i(f_i(\w))\right],
\label{eq:objective}
\end{equation}
where 
$\obj_i \coloneqq \ell_i \circ f_i$, 
$\ell_i \colon \RR^k \to \RR$ is a convex loss on a sample $i \in [n]$,
and $f_i \colon \RR^p \to \RR^k$ is a neural network
with parameters $\w \in \RR^p$, applied on the sample $i$.
Such objectives are generally tackled by algorithms ranging from
stochastic gradient descent to adaptive methods, incorporating
momentum~\citep{sutskever2013importance, kingma2015adam} or linesearch
~\citep{vaswani2019painless}. Stochastic gradients can be computed efficiently
using autodiff frameworks such as JAX~\citep{bradbury2018jax} or
Pytorch~\citep{paszke2017automatic}. However, the information they provide on
the objective function is intrinsically limited, since they amount to using 
a \textbf{full linearization} of $h_i$.

In this paper, we consider constructing update directions through a
\textbf{partial linearization} of the function $h_i$. Such an approach has a
long history starting from the Gauss-Newton and Levenberg-Marquardt
methods~\citep{levenberg1944method, marquardt1963algorithm,
bjorck1996numerical,kelley1999iterative}, used for nonlinear least squares. The
extension to arbitrary convex loss functions of this approach has been called
\textbf{modified Gauss-Newton}~\citep{burke1985descent, nesterov2007modified,
tran2020stochastic,zhang2021stochastic} or \textbf{prox-linear}
\citep{drusvyatskiy2019efficiency, pillutla2019smoother}, name that we will use
in the rest of this paper. In the context of deep learning, several authors have
considered tackling Gauss-Newton-like oracles by decomposing the resulting
problem through the layers of a deep network~\citep{yu2018levenberg,
martens2015optimizing,botev2017practical, gupta2018shampoo, anil2020scalable},
see Appendix~\ref{app:related_work} for a detailed literature review.

In this paper, we propose instead to exploit the \textbf{convexity} of the
subproblem associated with such prox-linear directions
by switching to their \textbf{dual formulation}.
We make the following contributions.
\begin{itemize}[nosep, leftmargin=10pt]

\item After reviewing prox-linear (a.k.a. modified Gauss-Newton) directions,
computed by solving an
optimization subproblem that trade-offs between a partial linearization of the
objective function and a proximity term,
we cleanly derive the subproblem's {\bf Fenchel-Rockafellar dual}. 

\item To solve this dual, we present a {\bf conjugate gradient} algorithm.
Our proposal integrates seamlessly in an autodiff framework, by leveraging
Jacobian-vector products (JVP) and vector-Jacobian products (VJP)
and can handle dual constraints.

\item We prove that the proposed algorithm produces a {\bf descent
      direction}, when run for any number of steps, enabling its use as a
      drop-in replacement for the stochastic gradient in existing
      algorithms, such as SGD.

\item Finally, we present comprehensive empirical results demonstrating the 
    \textbf{advantages} and \textbf{current limitations} of our approach.

\end{itemize}

\vspace{-0.4cm}
\section{Prox-linear directions via the primal}\label{sec:proxlin}

In this section, we review prox-linear directions, which are based on the idea of 
\textbf{partial linearization}.

\subsection{Variational perspective}

To motivate prox-linear directions, we review the
variational perspective on gradient and Newton directions.
Let the \textbf{linear approximation} of $h_i$ around $\w^t$ be
\begin{equation*}
\mathrm{lin}(\obj_i, \w^t)(\w) 
\coloneqq 
\obj_i(\w^t) 
+ \langle \nabla \obj_i(\w^t), \w - \w^t \rangle
\approx \obj_i(\w).
\end{equation*}
From a variational perspective, the \textbf{stochastic gradient} can be seen as
the minimization of this linear approximation and of a quadratic
regularization term,
\begin{equation*}
\gamma \nabla \obj_i(\w^t)
= \argmin_{\d \in \RR^p} 
\mathrm{lin}(\obj_i, \w^t)(\w^t - \d) + \frac{1}{2\gamma} \|\d\|^2_2.
\end{equation*}
The information provided by a gradient is therefore limited by the quality of a
linear approximation of the objective. 
Alternatively, we may consider the \textbf{quadratic approximation} of $h_i$
around $\w^t$,
\begin{align*}
\mathrm{quad}(h_i, \w^t)(\w) 
\coloneqq&
h_i(\w^t) + \langle \nabla h_i(\w^t), \w - \w^t \rangle + \\
&\frac{1}{2} \langle \w - \w^t, \nabla^2 h_i(\w^t) (\w - \w^t) \rangle.
\end{align*}
The \textbf{regularized Newton direction} is then
\begin{align*}
&\argmin_{\d \in \RR^p} 
\mathrm{quad}(\obj_i, \w^t)(\w^t - \d) + \frac{1}{2\gamma} \|\d\|^2_2 \\
=&
(\nabla^2 \obj_i(\w^t) + \gamma^{-1} \idm)^{-1} \nabla \obj_i (\w^t).
\end{align*}
Unfortunately, when $f_i$ is a neural network, $h_i = \ell_i \circ f_i$ 
is typically nonconvex and
the Hessian $\nabla^2 h_i(\w)$ is an \textbf{indefinite} matrix. 
Therefore, the minimization above is that of a \textbf{nonconvex
quadratic} subproblem, that may be hard to solve. Furthermore, the obtained
direction is \textbf{not} guaranteed to define a \textbf{descent direction}.

\subsection{Convex-linear approximations}

Instead of \textbf{fully linearizing} $\obj_i$, which amounts to linearizing
\textbf{both} $\ell_i$ and $f_i$, we may linearize \textbf{only} $f_i$ but keep
$\ell_i$ as is. That is, we may use the \textbf{partial linear approximation} of
$\obj_i$,
\begin{equation*}
\begin{aligned}
\operatorname{plin}(\ell_i, f_i, \w^t)(\w)
&\coloneqq \ell_i(f_i(\w^t) + \partial f_i(\w^t)(\w - \w^t)) \\
&= \ell_i(\f_i^t + J_i^t (\w - \w^t)),
\end{aligned}
\end{equation*}
where we defined the shorthands
\begin{align*}
\f_i^t &\coloneqq f_i(\w^t)
\quad \text{and} \quad
J_i^t \coloneqq \partial f_i(\w^t).
\end{align*}
We say that the approximation is \textbf{convex-linear}, since it is the
composition of the convex $\ell_i$ and of the linear approximation of $f_i$.
We view $J_i^t$ as a \textbf{linear map} from $\RR^p$ to $\RR^k$, 
with \textbf{adjoint operator} $(J_i^t)^*$, a linear map from $\RR^k$ to
$\RR^p$.
That is, we assume that we can compute the Jacobian-vector product (JVP)
$(J_i^t) \u$ for any direction $\u \in \RR^p$ and the vector-Jacobian product
(VJP) $(J_i^t)^* \v_i$ for any direction $\v_i \in \RR^k$.
This will be the case if $f_i$ is implemented using an autodiff framework such
as JAX or PyTorch.

The minimization of the convex-linear approximation and of a quadratic
regularization term leads to the definition of the \textbf{prox-linear}
(a.k.a. \textbf{modified Gauss-Newton}) \textbf{direction}:
\begin{align}\label{eq:subproblem_i}
& \d(\gamma \ell_i, f_i)(\w^t)\nonumber\\
&  \coloneqq \argmin_{\d \in \RR^p} 
\mathrm{plin}(\ell_i, f_i, \w^t)(\w^t - \d)
+ \frac{1}{2\gamma}\|\d\|_2^2 \nonumber\\
& = \argmin_{\d \in \RR^p}  \ell_i(\f_i^t - J_i^t \d) 
+ \frac{1}{2\gamma}\|\d\|_2^2.
\end{align}
Typically, we will only obtain an \textbf{approximate solution},
that we denote $\d_i^t \approx \d(\gamma \ell_i, f_i)(\w^t)$.
Once we obtained $\d_i^t$, we can update the parameters as
\begin{equation*}
\w^{t+1} \coloneqq \w^t - \d_i^t. 
\end{equation*}
We emphasize, however, that unlike the gradient, the prox-linear direction is
not linear in $\gamma$, i.e.,
\begin{equation*}
\w^t - \d(\gamma \ell_i, f_i)(\w^t)
\neq
\w^t - \gamma \d(\ell_i, f_i)(\w^t).
\end{equation*}

\paragraph{Mini-batch extension.}

The prox-linear direction presented earlier is obtained from a single sample $i
\in [n]$. 
We now consider the extension to a mini-batch 
$S \coloneqq \{i_1, \dots, i_m\} \subseteq [n]$ of size $m = |S|$.
Let us define
\begin{align*}
f_S(\w) 
&\coloneqq (f_{i_1}(\w), \ldots, f_{i_m}(\w))^\top \in \RR^{m \times k}\\
\ell_S(\f_S) 
&\coloneqq \sum_{j=1}^m \ell_{i_j}(\f_i) \in \RR.
\end{align*}
We then define the mini-batch direction
\begin{align}\label{eq:subproblem_minibatch}
&\d(\gamma \ell_S, f_S)(\w^t) \nonumber \\
\coloneqq&  \argmin_{\d \in \RR^p} 
\frac{1}{m}\sum_{i \in S} \mathrm{plin}(\ell_i, f_i)(\w^t)
+ \frac{1}{2\gamma}\|\d\|_2^2 \nonumber \\
=&  \argmin_{\d \in \RR^p} 
 \sum_{i \in S} \ell_i(\f_i^t -J_i^t \d) 
+ \frac{m}{2\gamma}\|\d\|_2^2.
\end{align}
Typically, we only compute an approximate solution,
denoted $\d_S^t \approx \d(\gamma \ell_S, f_S)(\w^t)$.
The above subproblem is again convex.
It can be solved
directly in the primal, or, as we propose in Section \ref{sec:dual}, 
by switching to the dual.
Once we obtained $\d_S^t$, we may perform the update
\begin{equation*}
\w^{t+1} \coloneqq \w^t - \d^t_S,
\end{equation*}
or use the direction $\d_S^t$ in an existing stochastic algorithm.
Unlike the mini-batch stochastic gradient, the mini-batch prox-linear direction
is not the average of the individual stochastic directions, i.e., 
\begin{equation*}
\d(\ell_S, f_S)(\w^t) 
\neq 
\frac{1}{m} \sum_{i \in S} \d(\ell_i, f_i)(\w^t).
\end{equation*}
The mini-batch prox-linear direction can therefore take advantage of the
correlations between the samples of the mini-batch, unlike the gradient oracle.

\subsection{Quadratic-linear approximations}

In the previous section, we studied convex-linear approximations, i.e.,
the composition of a convex $\ell_S$ and of the linear approximation of $f_S$,
on a sample $S$.
As in the so-called \textbf{``generalized'' Gauss-Newton}
algorithm~\citep{gargiani2020promise}, we can replace $\ell_S$ with its
quadratic approximation around $f_S(\w^t)$, namely,
\begin{equation*}
q_S^t \coloneqq \mathrm{quad}(\ell_S, f_S(\w^t)).
\end{equation*}
Replacing $\ell_S$ with $q_S^t$,
we get after simple algebraic manipulations (see Appendix~\ref{app:proofs})
a \textbf{convex quadratic}
\begin{align}\label{eq:quadratic_subproblem_minibatch} 
&\d(\gamma q^t_S, f_S)(\w^t) \nonumber \\
=& \argmin_{\d \in \RR^p} 
\frac{1}{m} \sum_{i \in S} q_i^t(\f_i^t - J_i^t \d) + \frac{1}{2\gamma}
\|\d\|^2_2 \\
=&
\argmin_{\d \in \RR^p} \frac{1}{2}
\langle  \d, (J_S^t)^* H_S^t J_S^t \d \rangle  
- \langle  (J_S^t)^* \g_S^t, \d \rangle + \frac{m}{2\gamma}\|\d\|_2^2 \nonumber \\
\approx& \d(\gamma \ell_S, f_S)(\w^t) \nonumber,
\end{align}
where, denoting $\f_S^t \coloneqq f_S(\w^t)$, we defined
\begin{small}
\begin{align*}
J_S^t \u 
&\coloneqq \partial \f_S(\w^t)\u   
= (J_{i_1}^t \u, \ldots, J_{i_m}^t \u)^\top \\
(J_S^t)^*\v 
&\coloneqq \partial \f_S(\w^t)^* \v  
= \sum_{j=1}^m (J_{i_j}^t)^* \v_j \\
H_S^t \v 
&\coloneqq \nabla \ell_S^2(\f_S^t)\v
=  \left(\nabla^2 \ell_{i_1}(\f_1^t)\v_1, \ldots, \nabla^2
\ell_{i_m}(\f_m^t)\v_m\right) \\
\g_S^t
&\coloneqq \nabla \ell_S (\f_S^t) 
=(\nabla \ell_{i_1}(\f_1^t), \ldots, \nabla \ell_{i_m}(\f_m^t)).
\end{align*}
\end{small}
\hspace*{-7pt}
In constrast, we recall that the subproblem associated with the Newton direction
is a \textbf{nonconvex quadratic}.

Without a regularization term, that is by using $\gamma=+\infty$
in~\eqref{eq:quadratic_subproblem_minibatch}, $\d(q^t_S, f_S)$ amounts to a
generalized Gauss-Newton step~\citep{gargiani2020promise}, which itself matches
a natural gradient step~\citep{amari1998natural} if $\ell_i$ is the negative
log-likelihood of a regular exponential family~\citep{martens2020new}.

\paragraph{Examples.}

For the \textbf{squared loss}, we have 
\begin{align*}
    \ell_i(\f_i) & \coloneqq \frac{1}{2} \|\f_i - \y_i\|^2_2, \
    \nabla \ell_i(\f_i) = \f_i - \y_i, \\
    \nabla^2 \ell_i(\f_i) & = \idm, \hspace*{41pt}
    \nabla^2 \ell_i(\f_i) \v_i = \v_i.
\end{align*}
For the \textbf{logistic loss}, we have
\begin{align*}
    \ell_i(\f_i) & \coloneqq 
\mathrm{LSE}(\f_i) - \langle \f_i, \y_i \rangle, \ 
\nabla \ell_i(\f_i) = \sigma(\f_i) - \y_i, \\
\nabla^2\ell_i(\f_i) & = \diag(\sigma(\f_i)) -
\sigma(\f_i)\sigma(\f_i)^\top, \\
\nabla^2\ell_i(\f_i)\v_i & = \sigma(\f_i) \odot \v_i -
\langle \v_i, \sigma(\f_i)\rangle \sigma(\f_i),
\end{align*}
where 
$\sigma(\f_i) \coloneqq \softmax(\f_i)$.

\subsection{Practical implementation}

\paragraph{Computing an approximate direction.}

As we saw, computing a direction involves the (approximate) resolution of the
convex subproblem \eqref{eq:subproblem_minibatch} or of the convex quadratic
subproblem \eqref{eq:quadratic_subproblem_minibatch}. In general, we must resort
to an iterative convex optimization algorithm.

On one extreme, performing a \textbf{single} gradient step on the convex
subproblem would bring no advantage, since it would be equivalent
to a gradient step on the nonconvex original problem. Indeed, we have
\begin{equation*}
\nabla[\mathrm{plin}(\ell_i, f_i, \w^t)](\w^t)
= \nabla h_i(\w^t)
= (J_i^t)^* \nabla \ell_i(f_i(\w^t)),
\end{equation*}
and generally
$
\nabla[\mathrm{plin}(\ell_S, f_S, \w^t)](\w^t)
= \nabla h_S(\w^t)$.

On the other extreme, if we solve the subproblem to high accuracy, the overhead
of solving the subproblem may hinder the benefit of a better direction than
the gradient.
A trade-off must be struck between the additional computational
complexity of the subproblem and the benefits of a refined
direction ~\citep{lin2018catalyst,drusvyatskiy2019efficiency}. 

We argue that a good inner solver should meet the following requirements.
\begin{enumerate}[topsep=0pt,itemsep=2pt,parsep=2pt,leftmargin=10pt]
\item It should be easy to implement or widely available.
\item It should be compatible with linear maps, i.e., it should not require to
materialize $J_S^t$ as a matrix in memory. Such an algorithm is called
\textbf{matrix-free}.
\item It should not require to tune hyperparameters.
\item It should leverage the specificities of the subproblem.
\end{enumerate}

If we decide to solve the primal instead, as was done in the existing
literature, LBFGS~\citep{nocedal1999numerical} is a good generic candidate, but
that does not especially leverage the nature of the subproblem.
In the full-batch setting, where $m=n$, 
\citet{drusvyatskiy2019efficiency,pillutla2023modified} considered
variance-reduced algorithms such as SVRG.
In the case of the convex quadratic \eqref{eq:quadratic_subproblem_minibatch},
we can use the conjugate gradient method.

Whatever the inner algorithm used, we can control the trade-off between
computational cost
and precision using a maximum number of inner iterations.

\paragraph{Performing an update.}

Once we obtained an approximate direction
$\d_S^t \approx \d(\gamma \ell_S, f_S)(\w^t)$,
we already saw that we can simply perform the update
\begin{equation}
\w^{t+1} \coloneqq \w^t - \d_S^t.
\label{eq:minibatch_update}
\end{equation}
This may require the tuning of the regularization parameter $\gamma$, which
effectively act as a stepsize by analogy with the variational formulation of the
gradient.

More generally, we can use $\d_S^t$ as a \textbf{drop-in replacement} for the
stochastic gradient direction in other algorithms such as SGD with momentum or
SGD with linesearch.
For example, we may fix the regularization parameter to some value, say
$\gamma = 1$, and instead perform the update
\begin{equation}
\w^{t+1} \coloneqq \w^t - \eta^t \d_S^t,
\label{eq:minibatch_linesearch_update}
\end{equation}
where $\eta^t$ is a stepsize (we use a different letter to distinguish it from
the regularization parameter $\gamma$), selected by
a backtracking Armijo linesearch. That is,
we seek $\eta^t$ satisfying
\begin{equation*}
h_S(\w^t - \eta^t \d_S^t)
\le
h_S(\w^t) - \beta \eta^t \langle \d_S^t, \g_S^t \rangle,
\end{equation*}
where $h_S = \frac{1}{m} \ell_S\circ f_S(\w)$, $\g_S^t \coloneqq \nabla
h_S(\w^t)= \frac{1}{m} \sum_{i \in S} \nabla h_i(\w^t)$ is the mini-batch
stochastic gradient evaluated at $\w^t$, and where $\beta \in (0,1)$ is a
standard constant, typically set to $\beta =10^{-4}$.
The entire procedure is summarized in Algorithm \ref{algo:proxlin_template}.

\begin{algorithm}[t]
\caption{Primal-based prox-linear direction\label{algo:proxlin_template}}
  \begin{algorithmic}[1]
    \STATE{\bf Inputs:} parameters $\w^t$, batch $S =\{i_1, \ldots, i_m\}
    \subseteq [n]$, regularization $\gamma>0$ (set to $1$ if linesearch is used)
    \STATE Compute
    $
    \f_S^t = (f_{i_1}(\w^t), \dots, f_{i_m}(\w^t))^\top \in \RR^{m \times k}
    $
    \STATE Instantiate JVP and VJP operators by autodiff:
    \begin{align*}
        &\u \mapsto J_S^t \u = (J_{i_1}^t \u, \dots, J_{i_m}^t \u) \in
        \RR^{m \times k},
        \quad \forall \u \in \RR^p \\
        &\v \mapsto (J_S^t)^* \v = \sum_{j=1}^m (J_{i_j}^t)^* \v_j \in \RR^p,
        \quad \forall \v \in \RR^{m \times k}
    \end{align*}
    \vspace*{-10pt}
    \STATE Run inner solver to approximately solve
    \eqref{eq:subproblem_minibatch}, i.e.,
    \[
        \d_S^t \approx \argmin_{\d \in \RR^p} \frac{1}{m} \sum_{i \in S}
      \ell_i(\f_i^t -J_i^t \d) + \frac{1}{2\gamma }\|\d\|_2^2,
    \]
    \vspace*{-5pt}
    or its quadratic approximation \eqref{eq:quadratic_subproblem_minibatch},
    i.e., \label{line:descent_dir_primal}
    \begin{align*}
        \d_S^t &\approx \argmin_{\d \in \RR^p} \frac{1}{m} \sum_{i \in S}
      q_i^t(\f_i^t -J_i^t \d) + \frac{1}{2 \gamma}\|\d\|_2^2
    \end{align*}

\STATE Set next parameters $\w^{t+1}$ by 
\begin{align*}
  \w^{t+1} & \coloneqq \w^t - \d_S^t 
  \hspace*{30pt} \mathrm{(fixed~stepsize~\eqref{eq:minibatch_update})} \\
  \mbox{or} \quad \w^{t+1} & \coloneqq \w^t - \eta^t \d_S^t 
  \hspace*{35.5pt} \mathrm{(linesearch~\eqref{eq:minibatch_linesearch_update})}
\end{align*}
for $\eta^t$ s.t.  $h_S(\w^t - \eta^t \d_S^t)
\le
h_S(\w^t) - \beta \eta^t \langle \d_S^t, \g_S^t \rangle.$
\STATE{\bf Output:} $\w^{t+1} \in \RR^p$
  \end{algorithmic}
\end{algorithm}

\section{Prox-linear directions via the dual}\label{sec:dual}

In this section, we study how to obtain an approximate prox-linear direction,
by solving the convex subproblem
\eqref{eq:subproblem_minibatch} or the convex quadratic subproblem
\eqref{eq:quadratic_subproblem_minibatch} in the dual.

\subsection{Convex-linear approximations}

By taking advantage of the availibility of the conjugate
\begin{equation*}
\ell^*_i(\alphav_i) 
\coloneqq \sup_{\f_i \in \RR^k} \langle \f_i, \alphav_i \rangle -
\ell_i(\f_i),
\end{equation*}
we can express the prox-linear direction~\eqref{eq:subproblem_i} on a single
sample $i \in [n]$ as 
\begin{equation*}
\d(\gamma\ell_i, f_i)(\w^t) = \gamma (J_i^t)^* \alphav(\gamma \ell_i, f_i)(\w^t),
\end{equation*}
where we defined the solution of the dual of \eqref{eq:subproblem_i},
\begin{equation*}
\alphav(\gamma \ell_i, f_i)(\w^t) \coloneqq
\argmin_{\alphav_i \in \RR^k} 
\ell_i^*(\alphav_i) 
- \langle \alphav_i, \f_i^t \rangle
+ \frac{\gamma}{2} \|(J_i^t)^*\alphav_i\|_2^2.
\end{equation*}
Let us compare this with a stochastic gradient:
\begin{equation*}
\gamma \nabla h_i(\w^t) = \gamma (J_i^t)^* \nabla \ell_i(\f_i^t).
\end{equation*}
The dual viewpoint reveals that
the prox-linear direction can be seen as replacing the gradient of
the loss $\nabla \ell_i(\f_i^t)$ by the solution of the subproblem's dual
$\alphav(\gamma \ell_i, f_i)(\w^t)$. This also suggests that
$\nabla \ell_i(\f_i^t)$ is a \textbf{good initialization} for $\alphav_i$.

\paragraph{Benefit of using the dual.}

The dual subproblem involves $k$ variables while the primal subproblem involves
$p$ variables. Typically, $k$ is the number of network outputs (e.g., classes),
while $p$ is the number of network parameters. The dual subproblem is therefore
advantageous when $k \ll p$, which is often the case.

\paragraph{Mini-batch extension.}

For the mini-batch case, 
the prox-linear direction can be computed as
\begin{equation*}
\d(\gamma \ell_S, f_S) (\w^t)
= \frac{\gamma}{m} (J_S^t)^* \alphav(\gamma \ell_S, f_S)(\w^t),
\end{equation*}
where we defined the dual solution of \eqref{eq:subproblem_minibatch} by
\begin{align}
&\alphav(\gamma \ell_S, f_S)(\w^t) \coloneqq\nonumber \\
&\argmin_{\alphav_S \in \RR^{m \times k}} 
\ell_S^*(\alphav_S) 
- \langle \alphav_S, \f_S^t \rangle
+ \frac{\gamma}{2m} \|(J_S^t)^* \alphav_S\|_2^2,
\label{eq:minibatch_dual_subproblem}
\end{align}
where $\ell_S^*(\alphav_S) \coloneqq \sum_{i \in S} \ell_i^*(\alphav_i)$.

This time, the dual subproblem involves $m \times k$ variables, 
where $m$ is the mini-batch size and $k$ is the number of network outputs,
while the primal subproblem involves $p$ variables, as before. 
If the mini-batch is not too large, we typically have
$mk \ll p$ and therefore the dual subproblem is still advantageous.
Algorithm~\ref{algo:dual_proxlin} summarizes our approach.

\paragraph{Examples.}

For the \textbf{squared loss}, the conjugate is
\[
\ell_i^*(\alphav_i) =
\frac{1}{2}\|\alphav_i\|_2^2 + \langle \alphav_i, \y_i \rangle.
\]
The dual subproblem therefore becomes
\begin{align*}
&\alphav(\gamma \ell_S, f_S)(\w^t) = \\
&\argmin_{\alphav_S \in \RR^{m \times k}} 
\sum_{i=1}^m \frac{1}{2} \|\alphav_i\|^2_2 
- \langle \alphav_i, \g_i^t \rangle
+ \frac{\gamma}{2m} \|(J_S^t)^* \alphav_S\|^2_2, 
\end{align*}
where $\g_i^t \coloneqq \f_i^t - \y_i = \nabla \ell_i(\f_i^t)$.
Setting the gradient to zero, this gives the linear system
\begin{equation*}
\left(\frac{\gamma}{m} J_S^t (J_S^t)^* + \idm\right) \alphav_S^t =\g_S^t, 
\end{equation*}
where $\g_S^t \coloneqq (\g_i^t)_{i \in S} \in \RR^{m\times k}$ and $\alphav_S^t
= \alphav(\gamma \ell_S, f_S)(\w^t)$. We can solve this system using the
conjugate gradient method. Note that if $\alphav_i^t$ is equal to the residual
$\y_i - \f_i^t$, then $(J_i^t)^* \alphav_i^t = \nabla h_i(\w^t)$. Therefore, the
residual is a good initialization for the conjugate gradient method.

For the \textbf{logistic loss}, the conjugate is
\begin{equation*}
\ell_i^*(\alphav_i) = \langle \muv_i,  \log \muv_i \rangle +
\iota_{\triangle^k}(\muv_i), \quad \mbox{for} \ \muv_i \coloneqq \y_i + \alphav_i,
\end{equation*}
where $\iota_{\triangle^k}$ is the indicator function of the probability simplex
$\triangle^k \coloneqq \{\muv \in \RR^k \colon \muv \geq 0, \muv^\top \ones =
\ones\}$.
 
Applying this conjugate, we arrive at the subproblem
\begin{align*}
\muv_S^t\coloneqq 
\argmin_{\substack{\muv_S \in \RR^{m\times k}\\ \muv_i \in \triangle^k}} 
& 
\langle \muv_S, \log \muv_S \rangle
- \langle \muv_S -\y_S, \f_S^t \rangle \\
&  
+ \frac{\gamma}{2m} \|(J_S^t)^* (\muv_S- \y_S)\|^2_2,
\end{align*}
where $\y_S \coloneqq (\y_{i_1}, \dots, \y_{i_m})^\top \in \RR^{m\times k}$.
This is a \textbf{constrained} convex optimization problem,
that can be solved by, e.g., projected gradient descent. 
Changing the variable back, we obtain
\begin{equation*}
\alphav(\gamma \ell_S, f_S)(\w^t) = \muv_S^t - \y_S. 
\end{equation*}

\subsection{Quadratic-linear approximations}

To enable the use of the conjugate gradient method, we consider the dual of the
quadratic approximation presented earlier. 
If the Hessian of the loss $\ell_i$ is invertible, the dual solution 
$\alphav(\gamma q_S^t, f_S)(\w^t)$ 
of
\eqref{eq:quadratic_subproblem_minibatch} equals
\begin{align*}
\argmin_{\alphav_S \in \RR^{m \times k}} 
(q_S^t)^*(\alphav_S) - \langle \alphav_S, \f_S^t\rangle 
+ \frac{\gamma}{2m} \|(J_S^t)^* \alphav_S\|_2^2,
\end{align*}
for $(q_S^t)^*(\alphav_S) - \langle \alphav_S, \f_S^t\rangle = 
\langle \g_S^t - \alphav_S, H_S^{-1}
(\g_S^t - \alphav_S) \rangle, \nonumber$
where we used the inverse-Hessian-vector product
\begin{equation*}
H_S^{-1} \alphav_S \coloneqq (\nabla \ell^2_{i_1}(\f_{i_1}^t)^{-1}\alphav_1,
\ldots, \nabla \ell^2_{i_m}(\f_{i_m}^t)^{-1}\alphav_m).
\end{equation*}
For the \textbf{squared loss}, we naturally get back the solution presented
earlier. For the \textbf{logistic loss}, while the Hessian is positive
semi-definite, it is not invertible, as we have $\nabla^2 \ell_i(\f_i^t)^\top
\ones_k = 0$ for all $i \in [n]$. 

Generally for any positive-semi-definite Hessian, the dual can still be
formulated as an \textbf{equality-constrained QP}, see
Appendix~\ref{app:dual_quad_logistic}. The direction can be computed as $\d_S^t
\approx \frac{\gamma}{m}(J_S^t)^* \alphav_S^t$, with $\alphav_S^t = \g_S^t -
\betav_S^t$  and
\begin{align}
\betav_S^t \approx
  \argmin_{\substack{\betav \in \RR^{m\times k}}} & 
  \frac{1}{2} \langle \betav ,
  (H_S^t)^{\dagger} \betav \rangle
  {+} \frac{\gamma}{2m} \|(J_S^t)^* (\g_S^t - \betav)\|_2^2 \nonumber\\
  \mbox{s.t.} \ & (\idm - H_S^t(H_S^t)^{\dagger})\betav ={\bm 0}.
\label{eq:dual_quadratic_approx}
\end{align}
where we used the pseudo-inverse Hessian product
\begin{equation}\label{eq:pseudo_inverse}
  (H_S^t)^{\dagger} \alphav \coloneqq ((H_{i_1}^t)^{\dagger} \alphav_1, \ldots,
(H_{i_m}^t)^\dagger \alphav_m)
\end{equation}
The above
equality-constrained QP can be solved efficiently with a \textbf{conjugate
gradient} method, using $H_S^t(H_S^t)^{\dagger}$ as a preconditioner and
initializing at a dual variable respecting the constraints, see
Appendix~\ref{app:dual_quad_logistic}. 
Note that if the
subproblem in $\betav$ is initialized at ${\bm 0}$ as done in the experiments,
the output direction is a gradient at iteration 0. Each subsequent iteration
therefore improves on the gradient.

\paragraph{Example.} 

For the {\bf logistic loss}, 
the pseudo-inverse enjoys a \textbf{closed form}. 
The direction can then be computed as $\d_S^t
\approx \frac{\gamma}{m}(J_S^t)^* \alphav_S^t$, with $\alphav_S^t = \g_S^t - \betav_S^t$  for
\begin{align}
\betav_S^t \approx
\argmin_{\substack{\betav \in \RR^{m \times k}}} & 
\frac{1}{2}\langle \betav, D_S^{-1} \betav \rangle +
\frac{\gamma}{2m} \|(J_S^t)^*(\g_S^t -\betav)\|_2^2 \nonumber\\
\mbox{s.t.} \ & \ones_k^\top \betav_i = {\bm 0}, \ i \in [m].
\label{eq:dual_quad_logistic}
\end{align}
Here, 
denoting $\sigma$ the softmax, we defined
\begin{equation*}
D_S^{-1} \betav
\coloneqq (\betav_1 / \sigma(\f_{i_1}^t), \ldots, \betav_m /\sigma(\f_{i_m}^t)),
\end{equation*}
where division is performed element-wise.

\subsection{Linear case: connection with SDCA}

We now discuss the setting when $f_i$ is linear.
In linear multiclass classification,
with $k$ classes and $d$ features,
we set $f_i(\w) = \W \x_i$, where $\W \in \RR^{k \times d}$ is a matrix
representation of $\w \in \RR^p$, with $p=kd$.
We then have
\begin{equation*}
(J_i^t)^* \alphav_i 
= \partial f_i(\w^t)^* \alphav_i 
= \mathrm{vec}(\alphav_i \x_i^\top) \in \RR^{kd}.
\end{equation*}
The key difference with the nonlinear $f_i$ setting is that the Jacobian $J_i^t$
is actually independent of $\w^t$, so that
\begin{equation*}
\|(J_i^t)^* \alphav_i\|^2_2
= \langle \alphav_i, \|\x_i\|_2^2 \idm \alphav_i \rangle
= \|\x_i\|_2^2 \cdot \|\alphav_i\|^2_2.
\end{equation*}
Contrary to the setting where $f_i$ is nonlinear, 
the Hessian of $\|(J_i^t)^* \alphav_i\|^2_2$
is diagonal, which makes the subproblem easier to solve.
Indeed, when the batch size is $m=1$, 
denoting $\sigma_i \coloneqq (\gamma \|\x_i\|_2^2)^{-1}$, we arrive at the dual
subproblem 
\begin{align*}
\alphav(\gamma \ell_i, f_i)(\w^t)
&=
\argmin_{\alphav_i \in \RR^k} 
\ell_i^*(\alphav_i) 
- \langle \alphav_i, \f_i^t \rangle 
+ \frac{1}{2\sigma_i} \|\alphav_i\|_2^2 \\
&= \mathrm{prox}_{\sigma_i \ell_i^*}(\sigma_i \f_i^t).
\end{align*}
This is exactly the dual subproblem used in SDCA \citep{shalev2013stochastic}.
It enjoys a closed form in the case of the squared, hinge and sparsemax loss
functions \citep{blondel2020learning}.
When the batch size is $m$, we obtain that the dual subproblem
solution $\alphav(\gamma \ell_S, f_S)(\w^t)$ is equal to
\begin{equation*}
\argmin_{\alphav_S \in \RR^{m \times k}} 
\ell_S^*(\alphav_S) 
- \langle \alphav_S, \f_S^t \rangle 
+ \frac{\gamma}{2m} \langle \alphav_S, \K \alphav_S \rangle,
\end{equation*}
where we defined the kernel matrix $[\K]_{i,j} \coloneqq \langle \x_i, \x_j
\rangle$. There is no closed form in this case.

\begin{algorithm}[t]
\caption{Dual-based prox-linear direction \label{algo:dual_proxlin}}
\begin{algorithmic}[1]
\STATE{\bf Inputs:} network outputs $\f_S^t$, JVP $J_S^t$ and VJP $(J_S^t)^*$ as
in Algorithm~\ref{algo:proxlin_template}, regularization $\gamma$ ($1$ if
linesearch is used)

\STATE Run inner solver to approximately solve \eqref{eq:minibatch_dual_subproblem}
\begin{equation*}
  \hspace*{-10pt}
\alphav_S^t \approx 
\argmin_{\alphav_S \in \RR^{m \times k}} 
\ell_S^*(\alphav_S) - \langle \alphav_S, \f_S^t \rangle
+ \frac{\gamma}{2m} \|(J_S^t)^*\alphav_S\|_2^2,
\end{equation*}
or its quadratic approximation \label{line:descent_dir_dual}
\begin{align*}
\hspace*{-10pt}
\alphav_S^t \approx 
\argmin_{\alphav_S \in \RR^{m \times k}}
(q_S^t)^*(\alphav_S) {-} \langle \alphav_S, \f_S^t\rangle 
{+} \frac{\gamma}{2m} \|(J_S^t)^* \alphav_S\|_2^2,
\end{align*}
detailed in \eqref{eq:dual_quadratic_approx}, and in \eqref{eq:dual_quad_logistic} for the logistic loss.

\STATE Map back to primal direction \label{line:dual_mapping}
\begin{equation*}
\d^t_S 
\coloneqq \frac{\gamma}{m} (J_S^t)^* \alphav_S^t
= \frac{\gamma}{m} \sum_{i \in S} (J_i^t)^* \alphav_i^t
\end{equation*}

\STATE Set next parameters $\w^{t+1}$ by 
\begin{align*}
  \w^{t+1} & \coloneqq \w^t - \d_S^t 
  \hspace*{30pt} \mathrm{(fixed~stepsize~\eqref{eq:minibatch_update})} \\
  \mbox{or} \quad \w^{t+1} & \coloneqq \w^t - \eta^t \d_S^t 
  \hspace*{35.5pt} \mathrm{(linesearch~\eqref{eq:minibatch_linesearch_update})}
\end{align*}
for $\eta^t$ s.t.  $h_S(\w^t - \eta^t \d_S^t)
\le
h_S(\w^t) - \beta \eta^t \langle \d_S^t, \g_S^t \rangle.$

\STATE{\bf Output:} $\w^{t+1} \in \RR^p$
\end{algorithmic}
\label{algo:dual}
\end{algorithm}

\section{Analysis}\label{sec:analysis}

We now review prox-linear directions theoretically.

\paragraph{Approximation error.}

When $\ell_i$ is $C_\ell$-Lipschitz continuous and $f_i$ is $L_{f}$-smooth, 
the partial linearization of $h_i \coloneqq \ell_i \circ f_i$ satisfies a
quadratic approximation error~\citep[Lemma
3.2]{drusvyatskiy2019efficiency}, for all $\w, \u \in \RR^p$,
\[
|h_i(\w + \u) -  \ell_i(f_i(\w) + \partial f_i(\w)\u)| \leq 
 \frac{C_\ell L_{f}}{2}\|\u\|_2^2.
\]
In comparison, if in addition, $\ell_i$ is $L_\ell$ smooth and $f_i$ is
$C_{f}$-Lipschitz continuous, a linear approximation of $h_i$
has a quadratic error of the form
\[
  |h_i(\w+ \u) - h_i(\w)  - \nabla h_i(\w)\u| 
  \leq \frac{C_\ell L_{f} + C_{f}^2 L_\ell}{2} \|\u\|_2^2.
\]
The above result confirms that, unsurprisingly, the partial linearization is
theoretically a more accurate approximation than the full linearization.

\paragraph{Descent direction.}

To integrate a prox-linear direction $\d=\d(\gamma\ell_S, f_S)$ or $
\d(\gamma q_S^t, f_S)$, into generic optimization algorithms, it is preferable
if $-\d$ defines a \textbf{descent direction} w.r.t. the mini-batch stochastic
gradient $\nabla h_S(\w)$, namely, $-\d$ should satisfy
\begin{equation}
\langle -\d,  \nabla h_S(\w)\rangle \leq 0.
\label{eq:descent_direction}
\end{equation}
Thanks to the convexity of the subproblem, we can show that the exact
prox-linear direction satisfies \eqref{eq:descent_direction}.
\begin{proposition}
    If each $\ell_i$ is convex, the negative
    direction $-\d(\gamma \ell_S, f_S)(\w^t)$ or its
    quadratic approximation direction
    $-\d(\gamma q_S^t, f_S)(\w^t)$ define descent directions for the composition
    $h_S = \ell_S \circ f_S$ at $\w^t$.
\label{prop:descent_direction}
\end{proposition}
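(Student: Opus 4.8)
The plan is to exploit the first-order optimality condition of the strongly convex subproblem together with the monotonicity of the gradient of a convex function. Write the data-fitting part of the objective in~\eqref{eq:subproblem_minibatch} as $g(\d) \coloneqq \frac{1}{m}\sum_{i \in S}\ell_i(\f_i^t - J_i^t\d)$, so that $\d^\star \coloneqq \d(\gamma\ell_S, f_S)(\w^t)$ minimizes $g(\d) + \frac{1}{2\gamma}\|\d\|_2^2$. Since each $\ell_i$ is convex and $\d \mapsto \f_i^t - J_i^t\d$ is affine, $g$ is convex; the regularizer then makes the subproblem strongly convex, so $\d^\star$ is its unique minimizer and satisfies the stationarity condition $\nabla g(\d^\star) + \gamma^{-1}\d^\star = \zeros$, i.e. $\d^\star = -\gamma\nabla g(\d^\star)$.

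The key identity is that the gradient of $g$ at $\d = \zeros$ is exactly the negative mini-batch gradient. By the chain rule, $\nabla g(\zeros) = -\frac{1}{m}\sum_{i \in S}(J_i^t)^*\nabla\ell_i(\f_i^t) = -\g_S^t$, which is precisely the first-order consistency of the partial linearization already recorded in the excerpt, namely $\nabla[\mathrm{plin}(\ell_S, f_S, \w^t)](\w^t) = \nabla h_S(\w^t) = \g_S^t$. It then remains to combine stationarity with monotonicity. From convexity of $g$ we have $\langle \nabla g(\d^\star) - \nabla g(\zeros), \d^\star - \zeros\rangle \geq 0$. Substituting $\nabla g(\d^\star) = -\gamma^{-1}\d^\star$ and $\nabla g(\zeros) = -\g_S^t$ yields $-\gamma^{-1}\|\d^\star\|_2^2 + \langle \g_S^t, \d^\star\rangle \geq 0$, hence $\langle \d^\star, \nabla h_S(\w^t)\rangle = \langle \d^\star, \g_S^t\rangle \geq \gamma^{-1}\|\d^\star\|_2^2 \geq 0$. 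This is exactly the descent condition~\eqref{eq:descent_direction} for $-\d^\star$, and it even supplies a quantitative margin.

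For the quadratic-linear direction $\d(\gamma q_S^t, f_S)(\w^t)$, I would run the identical argument with $g_q(\d) \coloneqq \frac{1}{m}\sum_{i \in S} q_i^t(\f_i^t - J_i^t\d)$ in place of $g$. Two facts make this work: first, each $q_i^t = \mathrm{quad}(\ell_i, \f_i^t)$ is convex because $\nabla^2\ell_i(\f_i^t) \succeq 0$ by convexity of $\ell_i$, so $g_q$ is convex and the subproblem is again strongly convex; second, the quadratic approximation matches $\ell_i$ to first order at the expansion point, $\nabla q_i^t(\f_i^t) = \nabla\ell_i(\f_i^t)$, so that $\nabla g_q(\zeros) = -\g_S^t$ exactly as before, and the rest is verbatim. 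The only point requiring care, and the main (mild) obstacle, is the smoothness implicit in writing $\nabla\ell_i$ and $\nabla h_S$: if some $\ell_i$ is nondifferentiable, one should replace gradients by subgradients, read stationarity as $\zeros \in \partial g(\d^\star) + \gamma^{-1}\d^\star$, and interpret $\nabla h_S(\w^t)$ as the corresponding $(J_S^t)^*$-image of a subgradient; monotonicity of the subdifferential then delivers the same inequality. For the smooth losses of interest (squared, logistic) no such subtlety arises.
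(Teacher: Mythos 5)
Your proof is correct, and its mechanism differs from the paper's even though both rest on the same key identity, namely that the gradient of the partially linearized subproblem objective at $\d = \zeros$ equals $-\nabla h_S(\w^t)$ (up to the positive factor $m$). The paper treats the full subproblem objective $F_S^t(\d) \coloneqq \ell_S\bigl(f_S(\w^t) - \partial f_S(\w^t)\d\bigr) + \frac{m}{2\gamma}\|\d\|_2^2$ as a single strongly convex function and runs a value-comparison argument: since $\d^\star$ is the minimizer, $F_S^t(\d^\star) \le F_S^t(\zeros)$, while strong convexity gives the strict linearization bound $F_S^t(\d^\star) > F_S^t(\zeros) + \langle \nabla F_S^t(\zeros), \d^\star \rangle$ whenever $\d^\star \neq \zeros$; combining the two and using $\nabla F_S^t(\zeros) = -\nabla(\ell_S \circ f_S)(\w^t)$ yields strict descent. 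You instead split off the regularizer, invoke the first-order stationarity condition $\nabla g(\d^\star) = -\gamma^{-1}\d^\star$, and apply monotonicity of the gradient of the convex data-fit term $g$ --- no function values are compared. The two arguments are equally short, but yours buys a quantitative margin, $\langle \d^\star, \nabla h_S(\w^t)\rangle \geq \gamma^{-1}\|\d^\star\|_2^2$, which in particular recovers the strict-descent refinement the paper proves in its appendix (descent is strict unless $\d^\star = \zeros$, i.e., unless $\w^t$ is a critical point, cf.\ Proposition~\ref{prop:critical_point}). Your handling of the quadratic case coincides in substance with the paper's: both reduce it to the facts that $q_i^t$ is convex (since $\nabla^2 \ell_i(\f_i^t) \succeq 0$) and matches $\ell_i$ to first order at $\f_i^t$, so the gradient identity at $\d = \zeros$ is unchanged. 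Your closing caveat about nonsmooth losses is sensible but not needed here, since the descent condition \eqref{eq:descent_direction} already presupposes that $\nabla h_S(\w^t)$ exists.
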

In practice, we never compute the exact direction. We show below that the
approximate directions obtained by the conjugate gradient method run in the
primal or in the dual define descent directions for any number of iterations.
Proofs are presented in Appendix~\ref{app:proof_descent_dir}. 

\begin{proposition}\label{prop:descent_dir_cg_primal} Let $\d_S^{t, \tau}$ be
  the direction obtained after $\tau$ iterations of the conjugate gradient
  method, for solving line~\ref{line:descent_dir_primal} of
  Algo.~\ref{algo:proxlin_template} (primal) or line~\ref{line:descent_dir_dual} of
  Algo.~\ref{algo:dual_proxlin}, followed by line~\ref{line:dual_mapping} (dual). 
  Then $-\d_S^{t, \tau}$ is a descent direction for 
  $h_S = \ell_S\circ f_S$ at $\w^t$. 
\end{proposition}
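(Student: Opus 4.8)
The plan is to establish the descent property by showing that the conjugate gradient iterates, when applied to these specific quadratic subproblems, produce directions whose inner product with the gradient is nonnegative at every iteration $\tau$. The crucial observation is that CG for solving $\min_x \frac{1}{2}\langle x, Ax\rangle - \langle b, x\rangle$ with $A \succeq 0$ generates iterates of the form $x^\tau \in \mathrm{span}\{b, Ab, \ldots, A^{\tau-1}b\}$ (the Krylov subspace), starting from $x^0 = 0$. I would first recall the key descent identity established in the proof of \cref{prop:descent_direction}: for the exact direction one has $\langle \d, \nabla h_S(\w^t)\rangle \geq \frac{1}{\gamma}\|\d\|_2^2 \geq 0$ via convexity. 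For the approximate CG setting, the analogous quantity to control is $\langle \d_S^{t,\tau}, \g_S^t\rangle$ where $\g_S^t = \nabla h_S(\w^t)$.

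First I would treat the \textbf{primal} case. Here CG is applied to the convex quadratic \eqref{eq:quadratic_subproblem_minibatch}, whose gradient at $\d=\zeros$ equals $-\frac{1}{m}(J_S^t)^* \g_S^t = -\g_S^t$ (the negative mini-batch gradient), so the CG right-hand side is precisely $\g_S^t$. Since CG initialized at $\zeros$ produces $\d_S^{t,\tau} = P_\tau(A)\, \g_S^t$ for the system matrix $A = \frac{1}{m}(J_S^t)^* H_S^t J_S^t + \gamma^{-1}\idm$ and some polynomial $P_\tau$ arising from the Krylov iteration, the inner product becomes $\langle \d_S^{t,\tau}, \g_S^t\rangle = \langle P_\tau(A)\,\g_S^t, \g_S^t\rangle$. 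The key step is to argue that $P_\tau(A) \succeq 0$ on the relevant Krylov subspace, which follows from the standard fact that CG iterates satisfy $\langle x^\tau, b\rangle \geq 0$ whenever $A \succ 0$ and $x^0 = 0$ — indeed CG monotonically decreases the $A$-norm of the error while the iterates grow from the origin toward the solution $A^{-1}b$, keeping the alignment with $b$ nonnegative.

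For the \textbf{dual} case, the structure is slightly more delicate because we solve for $\alphav_S$ and then map back via $\d_S^t = \frac{\gamma}{m}(J_S^t)^*\alphav_S$ in line~\ref{line:dual_mapping}. I would show that $\langle \d_S^{t,\tau}, \g_S^t\rangle = \frac{\gamma}{m}\langle (J_S^t)^*\alphav_S^{t,\tau}, \g_S^t\rangle$, and then relate $(J_S^t)^* \g_S^t$ back to the structure of the dual problem. The essential fact is that the dual objective's gradient at the initialization $\alphav = \g_S^t$ (for the squared/quadratic case) or at $\betav = \zeros$ corresponds to the primal gradient direction, so the same Krylov/CG nonnegativity argument transfers. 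For the equality-constrained variant \eqref{eq:dual_quadratic_approx}, I would invoke that CG run with the preconditioner $H_S^t(H_S^t)^\dagger$ and a feasible initialization stays in the affine constraint set, reducing to an unconstrained CG on the appropriate subspace, where the same positivity reasoning applies.

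The \textbf{main obstacle} I anticipate is handling the constrained and singular-Hessian cases cleanly — specifically, verifying that the preconditioned CG on the equality-constrained QP \eqref{eq:dual_quadratic_approx} genuinely reduces to unconstrained CG on the range of $H_S^t(H_S^t)^\dagger$, so that the Krylov-positivity argument remains valid, and that the pseudo-inverse terms do not break the alignment $\langle \d_S^{t,\tau}, \g_S^t\rangle \geq 0$. I expect this requires carefully tracking that the initialization is feasible (as emphasized in the remark after \eqref{eq:dual_quad_logistic}, initializing $\betav$ at $\zeros$ yields the gradient at iteration $0$), and that each subsequent CG iterate preserves both feasibility and nonnegative alignment. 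The unconstrained primal and squared-loss dual cases should follow directly from the standard CG monotonicity property, so the bulk of the technical work will concentrate on justifying the reduction for the degenerate/constrained setting.
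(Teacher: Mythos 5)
Your primal argument is essentially the paper's: CG started at $\zeros$ on $\min_{\d}\frac{1}{2}\langle \d, Q\d\rangle - \langle \c,\d\rangle$ with $Q = (J_S^t)^* H_S^t J_S^t + (m/\gamma)\idm$ and $\c \propto \nabla h_S(\w^t)$ keeps $\langle \x^\tau, \c\rangle \ge 0$ at every iteration, which is exactly Lemma~\ref{lem:cg_positive}. Note, however, that your justification of this fact (``CG monotonically decreases the $A$-norm of the error \ldots keeping the alignment with $b$ nonnegative'') is not a proof; the paper proves it by induction on the search directions, using $a^\tau, b^\tau \ge 0$ and orthogonality of residuals, and an even quicker route is to observe that the $\tau$-th Krylov iterate has objective value at most that of $\zeros$, whence $\langle \c, \x^\tau\rangle \ge \frac{1}{2}\langle \x^\tau, Q\x^\tau\rangle \ge 0$.

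The dual half has a genuine gap. After line~\ref{line:dual_mapping} the direction is $\d_S^{t,\tau} = \frac{\gamma}{m}(J_S^t)^*(\g_S^t - \betav_S^{t,\tau})$, so the descent condition reads as an \emph{upper} bound, $\langle \betav_S^{t,\tau}, J_S^t (J_S^t)^* \g_S^t\rangle \le \langle \g_S^t, J_S^t (J_S^t)^*\g_S^t\rangle$. But the CG-positivity fact, applied to the dual subproblem whose right-hand side is proportional to $P J_S^t (J_S^t)^* \g_S^t$, only yields the \emph{lower} bound $\langle \betav_S^{t,\tau}, J_S^t(J_S^t)^*\g_S^t\rangle \ge 0$ --- the wrong direction. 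So your claim that ``the same Krylov/CG nonnegativity argument transfers'' fails, even in the unconstrained squared-loss case; the constrained/pseudo-inverse issues you flag as the main obstacle are actually the easy part (handled by Proposition~\ref{prop:ecqp_via_precond}, since CG from $\zeros$ automatically stays in the range of $P$). What is missing is the paper's mechanism for converting positivity of the search directions into the needed upper bound: by Lemma~\ref{lem:cg_positive} each search direction $\p^s$ satisfies $\langle \p^s, J_S^t(J_S^t)^*\g_S^t\rangle \ge 0$ and the CG step sizes are nonnegative, so $\tau \mapsto \langle \betav_S^{t,\tau}, J_S^t(J_S^t)^*\g_S^t\rangle$ is nondecreasing along the iterations; hence, if the descent inequality failed at some $\tau_0$, it would fail for all later iterates and in the limit --- but the limit is the exact dual solution, whose mapped-back direction is the exact prox-linear direction, and that direction satisfies \emph{strict} descent unless it is zero (the sharpened form of Proposition~\ref{prop:descent_direction}). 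This contradiction argument, coupling monotonicity along CG iterations with convergence of CG and the exact-case result, is absent from your proposal, and without it (or a substitute) the dual statement remains unproven.
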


\paragraph{Computational cost.} 

The computational costs associated to the resolution of the inner subproblems in
the primal and dual formulations depend on (i) the cost of each iteration of the
inner solver, (ii) the convergence rate associated to the subproblem. For
quadratic inner subproblems, arising for example with a squared loss, we detail
in Appendix~\ref{app:comput_cost} that the cost of running $\tau$ inner
iterations of CG in the primal or the dual are respectively
\begin{align*}
  \mbox{Primal:} & \ \tau(\mathcal{T}_{\mathrm{JVP-VJP}} + O(p)),  \\
\ \mbox{Dual}: & \ \tau(\mathcal{T}_{\mathrm{JVP-VJP}} + O(m k)),
\end{align*}
where $\mathcal{T}_{\mathrm{JVP-VJP}}$ denotes the computational complexity of a
call to a JVP $\partial \f_S(\w)$ and a call to a VJP $\partial \f_S(\w)^*$.
Therefore, the dual formulation, which operates in the dual space leads to less
expansive inner updates. In addition, the convergence rate of CG on the inner
subproblems depend theoretically on their condition numbers and the distribution
of eigenvalues of the operators considered as detailed in
Appendix~\ref{app:comput_cost}. Since we consider in practice only few inner
iterations, the convergence rate of the method has much less influence than the
per-iteration cost.

\section{Experiments}\label{sec:exp}

We consider image classification tasks with a \textbf{logistic loss} using a
prox-linear direction approximated via the dual formulation of the quadratic
approximation of the loss, using a conjugate gradient method with 2 inner
iterations. We use iterates of the form $\w^{t+1} = \w^t - \d( \gamma^t q_S^t,
f_S)(\w^t)$  denoted \textbf{SPL} for \textbf{stochastic prox-linear} and
iterates of the form $\w^{t+1} = \w^t - \eta^t \d(q_S^t, f_S)(\w^t)$ for
$\eta^t$ chosen by an Armijo line-search, denoted {\bf Armijo SPL}. 

For preliminary diagnosis, we compare the performance of Armijo SPL to several
stochastic gradient based optimization schemes to classify images from the
CIFAR10 datset~\citep{krizhevsky2009learning} with a three layer ConvNet
presented in Appendix~\ref{app:exp} in terms of epochs. Stepsizes are searched
on a log10 scale with a batch-size 256.

\subsection{Prox-linear vs. stochastic gradient}

In Figure~\ref{fig:proxlin-x}, we employ the prox-linear direction as a
replacement for the stochastic gradient in existing algorithms, ranging from
ADAM to SGD with momentum or AdaFactor. Namely, we simply replace
$\nabla h_S(\w^t)$
by
$\d(q_S^t, f_S)(\w^t)$
in each solver's update rule.
Results in time are in Appendix~\ref{app:exp}.

We observe that for most update rules, SPL generally
speeds up the convergence or performs on par in terms of epochs, except for
AdaFactor for which using prox-linear directions perform similarly to gradients. 

\subsection{Dual vs. primal}

In Figure~\ref{fig:compa_time}, we compare the runtime performance of 
the prox-linear direction, depending on whether the primal or the dual was used.
In both cases, we use an Armijo-SPL. While both approaches perform
similarly in terms of iterations, we observe that the dual approach brings some
gains in total runtime.

\begin{figure}[t]
  \centering
  \includegraphics[width=0.95\linewidth]{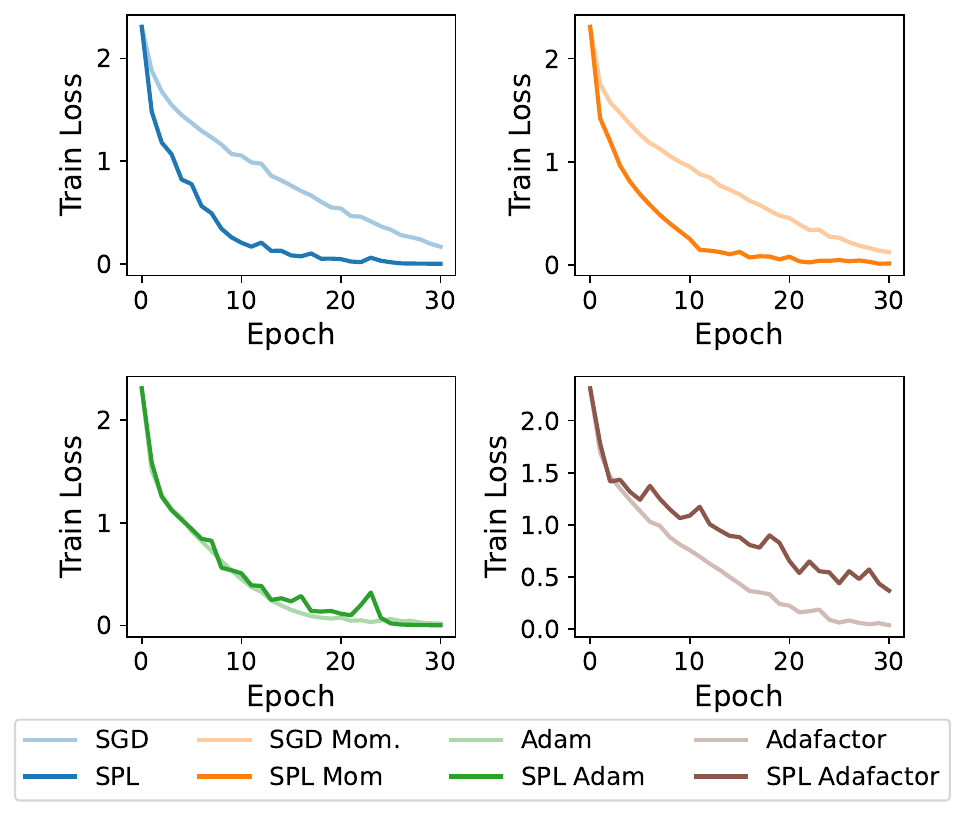}
  \caption{Benefit of using prox-linear directions as replacement for 
  stochastic gradients in existing solvers (CIFAR10, ConvNet). 
  \label{fig:proxlin-x}
  }
\end{figure}
\begin{figure}[t]
    \centering
    \includegraphics[width=0.95\linewidth]{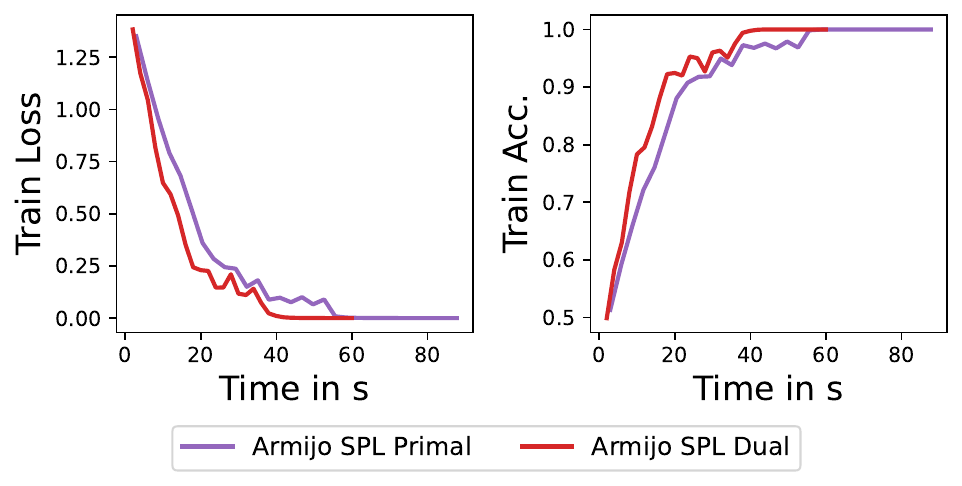}
    \caption{Runtime comparison between primal-based and dual-based
    computations (CIFAR10, ConvNet). \label{fig:compa_time}}
\end{figure}

\begin{figure}[t]
  \centering
  \includegraphics[width=0.95
  \linewidth]{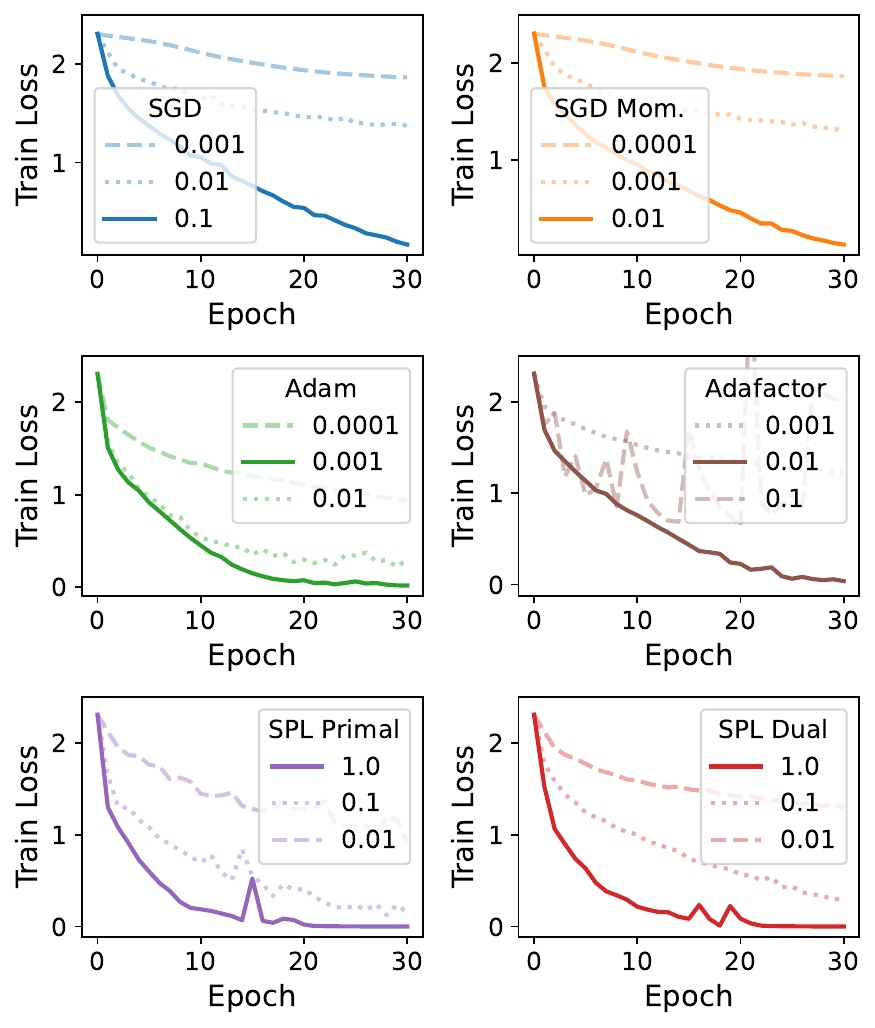}
  \caption{Robustness to stepsize\label{fig:sensitivity} (CIFAR10, ConvNet).
  }
\end{figure}
\subsection{Robustness to hyperparameters}

We analyze the sensitivity to hyperparameters of algorithms based on the
prox-linear direction.
Our goal here is to understand the trade-offs involved in using prox-linear
directions. 

\paragraph{Robustness to stepsize.}

We study in Figure~\ref{fig:sensitivity} the robustness to the stepsize
selection of different algorithms. For each algorithm, we display in solid line
the best stepsize found among $(10^{i})_{i=-4}^0$ and in transparent lines the
two other best stepsizes. Here we use prox-linear udpates with varying
``inner-stepsize'' $\gamma$ to analyze the benefits of
incorporating the stepsize inside the oracle with $\gamma$ chosen in
$(10^{i})_{i=-4}^0$. Namely, here we consider updates $\w^{t+1} = \w^t -
\d(\gamma q_S^t, f_S)(\w^t)$ , denoted {\bf SPL}, rather than $\w^{t+1} = \w^t -
\eta^t \d(q_S^t, f_S)(\w^t)$.

We observe that SPL provides competitive performance for a
larger range of stepsizes than other algorithms such as SGD, SGD with
momentum, or AdaFactor, while exhibting a smilar robustness as Adam.

\paragraph{Robustness to number of inner iterations and batch size.}

One of the hyper-parameters of the algorithm is the number of inner iterations. On
the top row of Figure~\ref{fig:robust_hyperparam}, we analyzed the behavior of
{\bf Armijo SPL} when varying the number of inner iterations. We did not observe
an important sensitivity in this setting. 

On the bottom row of Figure~\ref{fig:robust_hyperparam}, we observe that for too
small or too large mini-batch sizes, {\bf Armijo SPL} does not perform as well
as for medium sizes. Indeed, if the batch is too small, the variance may be too
big. On the other hand, if the batch is too big, since the batch-size influences
the conditioning of subproblem, making only a few steps of the subroutine may
not be sufficient to get a good direction.

\begin{figure}[t]
    \centering
    \includegraphics[width=0.95\linewidth]{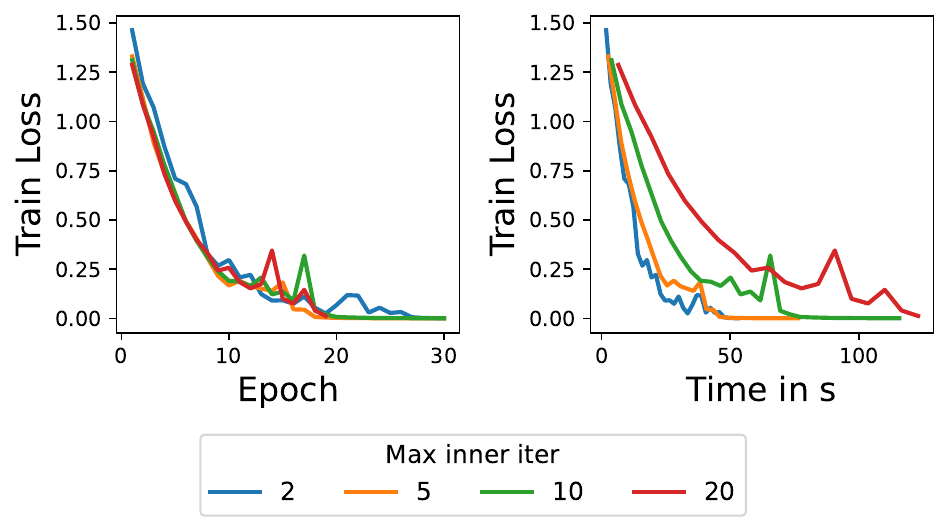}
    \includegraphics[width=0.95\linewidth]{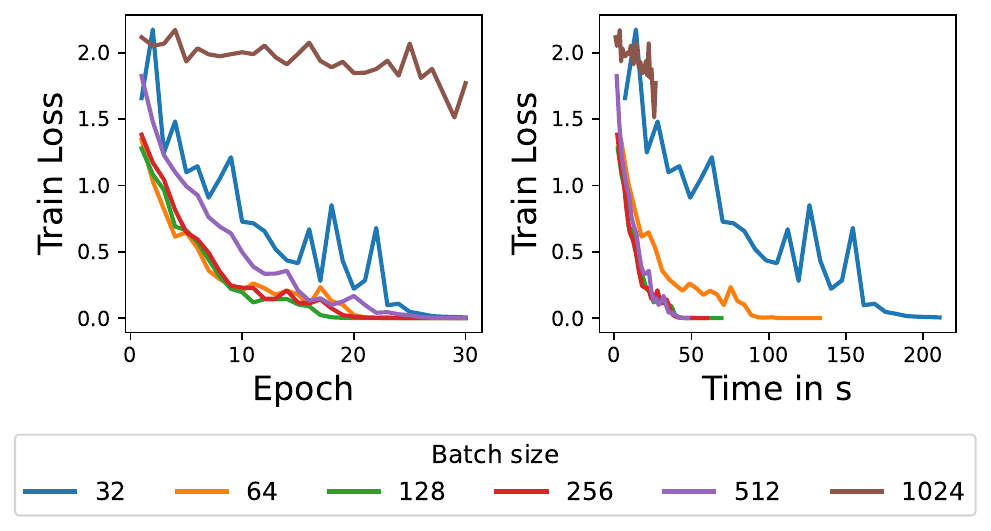}
    \caption{Robustness to \#inner iterations (top) and batch size
        (bottom), on CIFAR10 with a ConvNet. Left: epochs. Right: wallclock time.
        \label{fig:robust_hyperparam} 
    }
\end{figure}

\subsection{Comparison with existing algorithms}

Initial comparisons in Figure~\ref{fig:proxlin-x} demonstrated the benefits of
SPL in terms of epochs. In Figure~\ref{fig:cifar10}, we compare the performance
of Armijo SPL to several stochastic gradient based optimization on the same
ConvNet on CIFAR10 in terms of wallclock time. 
In this experiment, Armijo SPL and SPL are able to reach higher
test accuracy overall. However, even with only two inner CG steps, SPL remains
twice longer to reach convergence.

Additional experiments on ResNets on Imagenet and various ConvNets on CIFAR10
are presented in Appendix~\ref{app:additional_exp}. These additional experiments
confirm with the ones presented here: while our results are promising, adaptive
optimizers like ADAM remain very competitive thanks to their low computational
cost. In Appendix \ref{app:additional_exp}, we show that our approach is
competitive with Shampoo and KFAC. In particular, for small batch sizes, our
approach is faster than KFAC in time. The sensitivity of such methods to the
batch size remains a challenge, to make them more widely applicable.

\begin{figure}[t]
  \centering
  \includegraphics[width=\linewidth]{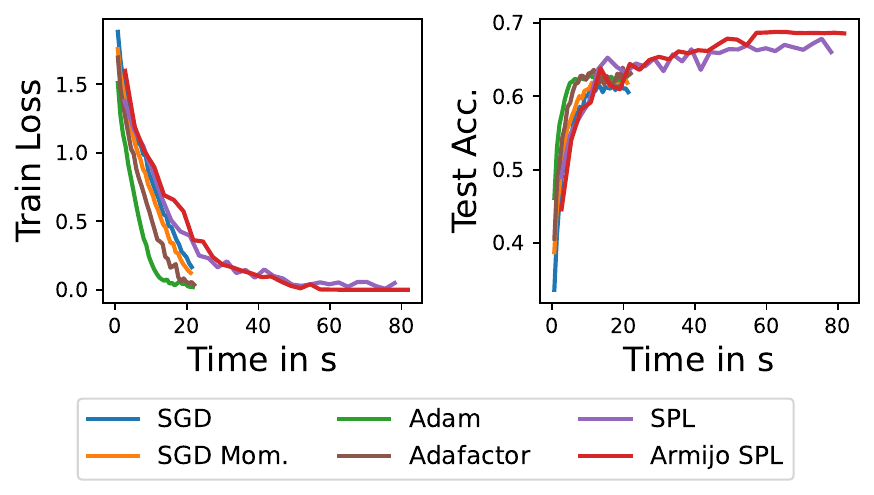}
  \caption{CIFAR-10 with a ConvNet. \label{fig:cifar10}}
\end{figure}

\section{Conclusion}

Prox-linear (a.k.a. modified Gauss-Newton) directions can be thought as a
\textbf{principled middle ground} between the stochastic gradient and the
stochastic (regularized) Newton direction. 
In this paper, we derived the \textbf{Fenchel-Rockafellar dual} associated with
the corresponding subproblem, which to our knowledge had not been studied
before. 
To solve the dual when a quadratic approximation of the loss is used, we
proposed a \textbf{conjugate gradient} algorithm, that integrates seamlessly
with autodiff through the use of \textbf{linear operators} and can handle
\textbf{equality constraints}. We proved that this algorithm produces
\textbf{descent directions}, when run for \textbf{any number} of inner
iterations. Empirically, we found that prox-linear directions work best with
\textbf{medium batch sizes} and are more \textbf{robust} than stochastic
gradients to \textbf{stepsize specification}. 

While we
demonstrated promising results, first-order adaptive methods such as ADAM remain
an excellent trade-off between accuracy and computational cost. Further work is
needed to reduce the computational cost of Gauss-Newton methods in general and
of our dual approach in particuliar. We hope this study brings new insights and
is a first step in that direction.
To conclude, we point out that our approach can
be easily extended to incorpore a \textbf{non-differentiable regularizer},
enabling the use of sparsity-inducing penalties on the network weights, as
studied in Appendix \ref{app:non_diff_reg}.

\newpage
\bibliographystyle{apalike}
\bibliography{gn_refs}

\newpage
\appendix
\onecolumn

\begin{center}
  {\Large \bf Dual Gauss-Newton Directions for Deep Learning} \\[1em]
  {\large Appendix}
\end{center}

Appendix~\ref{app:related_work} expands on related work.
Appendix~\ref{app:non_diff_reg} presents how to incorporate additional
regularizers into the computation of the Gauss-Newton direction.
Appendix~\ref{app:proofs} contains all proofs. Appendix~\ref{app:comput_cost}
details the computational costs. Finally Appendix~\ref{app:exp} presents some
additional experiments and experimental details.

\section{Detailed literature review}\label{app:related_work}

\paragraph{Variational perspective on optimization oracles.}

The variational perspective on optimization oracles is well-known, see, e.g.,
\citet{parikh2014proximal,beck2017first}. This perspective can be used together
with full linearization to motivate SGD variants based on Polyak stepsize
\citep{gower2022cutting}. 

In addition to the interpretation of prox-linear/Gauss-Newton as an intermediate
oracle between gradient and Newton updates, the prox-linear can be interpreted
through the lense of proximal point methods. With the notations of
Section~\ref{sec:intro}, the proximal point algorithm
\citep{rockafellar1976monotone} computes the next iterate as
\begin{align*}
\w^{t+1} 
&\coloneqq \prox(\gamma \obj_i)(\w^t) \\
&\coloneqq \argmin_{\w \in \RR^p} \obj_i(\w) + \frac{1}{2\gamma}\|\w -
\w^t\|_2^2 \\
&= \w^t - \argmin_{\d \in \RR^p} \obj_i(\w^t - \d) +
\frac{1}{2\gamma}\|\d\|_2^2,
\end{align*}
where we used the change of variable $\w \coloneqq \w^t - \d$.
Unfortunately, when $h_i = \ell_i \circ f_i$ and $f_i$ is a neural network,
this subproblem is nonconvex.
In comparison, 
we can write the prox-linear update as
\begin{align*}
\mathrm{prox\_linear}(\gamma \ell_i, f_i)(\w^t) 
\coloneqq& \prox(\mathrm{plin}(\gamma \ell_i, f_i, \w^t))(\w^t) \\
=& \argmin_{\w \in \RR^p} 
\mathrm{plin}(\ell_i, f_i, \w^t)(\w)
+ \frac{1}{2\gamma}\|\w^t - \w\|_2^2 \\
=& \w^t - \d(\gamma \ell_i, f_i)(\w^t) \approx \prox(\gamma h_i)(\w^t).
\end{align*}
Therefore, we can see the resulting update as the proximal point iteration on
a partially linearized function, hence the name prox-linear.
Importantly however, unlike the proximal point update,
the associated subproblem is convex. 

\paragraph{Nonlinear least-squares (deterministic case).}

The idea of exploiting the compositional structure of an objective to partially
linearize the objective and minimize the resulting simplified subproblem, stems
from the resolution of nonlinear least-squares
problems~\citep{bjorck1996numerical} of the form
\[
  \min_{\w\in \RR^p} \|f(\w)\|_2^2,
\]
where $f$ is a nonlinear differentiable function. In this context, the
Gauss-Newton algorithm~\citep{kelley1999iterative} proceeds originally by
computing 
\[
  \d^t \coloneqq \argmin_{\v \in \RR^p} \|f(\w^t) - \partial f(\w^t)\d \|_2^2,
\]
and computing $\w^{t+1}$ by means of a line-search along the direction
$-\d^t$ (which is guaranteed to be a descent direction). The Gauss-Newton
algorithm can converge locally to a solution at a quadratic rate provided that
the initial point is close enough to the solution~\citep{kelley1999iterative}. 
The Gauss-Newton algorithm can be subject to numerical instability as soon as 
the operator $\partial f(\w^t)$ is non-singular. To circumvent this issue, 
\citet{levenberg1944method} and
\citet{marquardt1963algorithm} introduced the now called Levenberg-Marquardt algorithm 
that computes directions according to a regularized version of the Gauss-Newton 
direction
\[
    \d^t \coloneqq \argmin_{\d \in \RR^p} \|f(\w^t) 
    - \partial f(\w^t)\d \|_2^2 + \frac{\lambda}{2} \|\d\|_2^2.
\]
A line-search along $-\d^t$ is then taken to define the next iterate (the
direction $-\d^t$ is again a descent direction). The parameter $\lambda$
(equivalent to $1/\gamma$ in the derivation of our algorithm) acts as a
regularization to ensure that the direction is relevant at the current iterate:
larger $\lambda$ induce smaller directions, closer to a negative gradient
direction. The parameter $\lambda$ may be modified along the iterations by a
trust-region mechanism~(see e.g. \citet{bergou2020convergence}). Namely a
trust-region mechanism computes
\[
  r_t \coloneqq \frac{||f(\w^t)||^2_2 - \|f(\w^t - \d^t)\|^2_2}{m_t({\bm 0}) - m_t(\d^t)},
\]
where $m_t(\d) \coloneqq \|f(\w^t) - \partial f(\w^t)\d \|_2^2 + \lambda_t
\|\d\|_2^2/2$ the approximate model of the objective. Here, $r_t$ is a measure
of how good the model was for the current $\lambda_t$. For $r_t \gg 0$, the step
taken reduced efficiently the original objective, while for, e.g., $r_t\ll 0$,
the model provided a bad direction. At each iteration, if $r_t> \delta_1$,
$\lambda_{t+1} = 0.5\lambda_t$ for example, is decreased, if $\rho_t <
\delta_2$, the iteration is a priori redone with an increased $\lambda_t$, or we
may simply set $\lambda_{t+1} = 2 \lambda_t$ for the next iteration.
Gauss-Newton-like algorithms have been applied succesfully to
phase-retrieval~\citep{herring2019gauss,repetti2014nonconvex}, nonlinear
control~\citep{sideris2005efficient}, non-negative matrix
factorization~\citep{huang2019low} to cite a few. 

\paragraph{Compositional problems (deterministic case).}

Gauss-Newton-like algorithms have then been generalized beyond nonlinear
least-squares to tackle generic compositional problems of the form $\ell\circ f
+ \rho$ with $\ell$ convex, $f$ differentiable nonlinear, $\rho$ a simple
function with computable proximity operator. The resulting algorithm, called
prox-linear~\citep{burke1985descent} computes the next oracle as 
\[
  \w^{t+1} \coloneqq  \argmin_{\w\in \RR^p} \ell(f(\w^t) 
  + \partial f(\w^t)(\w-\w^t)) + \rho(\w) + \frac{\lambda}{2} \|\w- \w^t\|_2^2.
\]
\citet{nesterov2007modified} proposed to minimize nonlinear residuals with a
generic sharp metric such as $\ell = \|\cdot\|_2$ with $\rho = 0$.
\citet{nesterov2007modified} proved global convergence rates of the above method
given that $\sigma_{\min}(\partial f(\w)^*) \geq \sigma >0$ for any $\w$ and
also gave local convergence rates for $\sigma_{\min}(\partial f(\w)) \geq \sigma
>0$ around an initial point close to a local solution.
\citet{drusvyatskiy2019efficiency} considered prox-linear algorithms for finite
sum objectives, that is, problems of the form $\frac{1}{n}\sum_{i=1}^n \ell_i
\circ f_i + \rho$. \citet{drusvyatskiy2019efficiency} considered the norm of the
scaled difference between iterates as a stationary measure of the algorithm by
relating it to the gradient of the Moreau envelope of the objective. They
proposed to solve the sub-problem up to a near-stationarity criterion defined by
the norm of the (sub)-gradient of the dual objective of the sub-problem. They
derived the total computational complexity of the algorithm when using various
inner solvers from an accelerated gradient algorithm to fast incremental solvers
such as SVRG or its accelerated version. Rates are provided for the case where
$\ell$ is smoooth or non-smooth but smoothable. 
\citet{pillutla2019smoother} evaluated these algorithms in the context of
structured prediction with smoothed oracles. The reported performance was on par
with SGD. \citet{pillutla2023modified} performed more synthetic experiments with
similar conclusions, casting doubt on the usefulness of the method. Note however
that~\citet{pillutla2019smoother, pillutla2023modified} did not consider varying
the regularization, nor a mini-batch version of the algorithm, nor incorporating
the algorithm into other first-order mechanisms.

For compositional problems with general loss $\ell$, an alternative to the
prox-linear algorithm is to use a quadratic approximation of $\ell$ together
with the linearization of $f$ \citep{messerer2021survey, roulet2022complexity}.
\citet{roulet2022complexity} showed the global convergence and local convergence
of such a method with the same assumption as \citet{nesterov2007modified}, i.e.,
$\sigma_{\min}(\partial f(\w)^*) \geq \sigma >0$, while asserting this condition
in some nonlinear control problems. \citet{diehl2019local} considered the local
convergence of generalized Gauss-Newton algorithms under suitable assumptions.

\paragraph{Stochastic case.}

\citet{duchi2018stochastic} studied the asymptotic convergence of stochastic
versions of the proxlinear algorithms. They considered objectives of the form
$\E_{z\sim p}[\ell(f(\cdot; z); z)] + \rho$ for some unknown distribution $p$.
The setting considered in our paper is an instance of such a problem. The
prox-linear algorithm consists then in iterates of the form
\[
    \w^{t+1} =  \argmin_{\w\in \RR^p} 
    \frac{1}{|S|}\sum_{i \in S}\ell(f(\w^t; z_i) 
  + \partial f(\w^t; z_i)(\w-\w^t); z_i) 
  + \rho(\w) + \frac{\lambda}{2} \|\w- \w^t\|_2^2,
\]
for $(z_i)_{i \in S} \stackrel{i.i.d.}{\sim} p$ a mini-batch of samples.
\citet{duchi2018stochastic} presented asymptotic rates of convergence for this
method with experiments on phase retrieval problems. In a slightly different
spirit, namely, objectives of the form $\ell(\E_{z\sim p}[f(\cdot; z)]) + \rho$,
\citet{tran2020stochastic,zhang2021stochastic} presented convergence rates using
estimators of the Jacobian and the function values of $f$, with stochastic
estimators such as SPIDER, SARAH or simply large mini-batches.
\citet{gargiani2020promise} considered stochastic versions of the generalized
Gauss-Newton algorithm for deep learning applications with simple experiments on
MNIST, FashionMNIST an CIFAR10. However, they did not use the dual as we do and
they did not consider using the oracle as a replacement for the gradient in
existing solvers such as ADAM.

\paragraph{Jacobian/Hessian-free Gauss-Newton methods in deep learning.} 

To harness the potential power of second-order optimization algorithms such as a
Newton method, \citet{martens2010deep} considered implementing a Newton method
by accessing Hessian-vector products and inverting the Hessian by a conjugate
gradient method. \citet{martens2010deep} introduced usual techniques from the
Newton method such as damping and trust-region techniques to tune the
regularization. An issue quickly pointed out by \citet{martens2010deep} is that
the Hessian of the network need not be positive definite which means that
$H^{-1}g$ for $H$ the Hessian of the objective and $g$ the gradient does not
necessarily lead to a descent direction. While the Hessian could be regularized
to prevent this issue, this may add a non-negligible overhead. In contrast,
oracles based on partial linearizations \`a la Gauss-Newton provably return a
descent direction. Therefore \citet{martens2010deep} considered in practice a
Gauss-Newton algorithm. \citet{martens2010deep} also considered some
preconditioning techniques for the conjugate gradient method and argues for
large mini-batch sizes, see Section 4 of the aforementioned paper.
\citet{martens2011learning} applied this technique to recurrent neural networks. 

Recently, \citet{ren2019efficient} revised such a technique by solving the
subproblem associated to a generalized Gauss-Newton iteration by means of the
Woodbury formula and using a form of trust-region technique. The Woodbury
formula aims to reduce computational costs associated to inversions by
exploiting the structure of the subproblems. In the case of the squared loss,
the two approaches coincide. However, for the quadratic approximation of the
logistic loss, \citet{ren2019efficient} considered a non-symmetric development
of the Woodbury formula, that differs from our approach. By carefully tackling
the dual formulation of the problem in the case of a quadratic approximation of
the logistic loss, we are able to keep the same inner solver, i.e., a conjugate
gradient method, and to prove that the resulting direction is always a descent
direction. \citet{ren2019efficient} performed experiments on CIFAR10, MNIST and
webspam with two hidden layers MLPS. In a similar spirit,
\citet{henriques2019small} considered a Hessian-free optimization algorithm to
compute exactly a Newton step. \citet{henriques2019small} considered performing
one gradient step on the subproblem by means of a Hessian-vector product and use
the resulting direction in place of the gradient. The algorithm of
\citet{henriques2019small} can therefore be seen as an extreme case of our
algorithm with only one iteration in the subproblem (with the additional
difference that they consider a Newton step). \citet{henriques2019small}
conducted an extensive set of experiments on CIFAR10 with ConvNets, ResNets,
ImageNet with VGG ConvNet and MNIST with MLPS as well as standard difficult
nonconvex objectives such as the Rosenberg function. They observed some gains in
accuracy and speed in epochs. On the other hand, gains in time were only
reported for a small architecture \citep[Figure 3]{henriques2019small}.

\paragraph{Approximating the Gauss-Newton matrix by block diagonal blocks.}

Rather than solving approximately for a Gauss-Newton-like direction, a line of
work starting from~\citet{martens2015optimizing} considered using a block
diagonal approximation of the Gauss-Newton matrix. Such a direction often took
the terminology of natural gradient descent algorithms since, for losses
stemming from an exponential family, a generalized Gauss-Newton method coincides
with a natural gradient descent~\citep{martens2020new}. The resulting algorithm
called KFAC (Kronecker Factored Approximate Curvature) stems from the
observation that the block of the Gauss-Newton matrix corresponding to the
$k$\textsuperscript{th} can be factorized as a Kronecker product whose
expectation may be approximated as the Krinecker product of the expectations.
\citet{george2018fast} extended the KFAC method in an algorithm EKFAC that
further tries to compute an adequate eigenbasis along which to compute the
approximate blocks.
\citet{botev2017practical} considered a finer decomposition of the block
diagonal Gauss-Newton matrix computed by back-propagating the information
through the graph of a feed-forward network leading to the algorithm KFRA. Both
KFAC and KFRA a priori depend on the proposed architecture. They were developed
for MLPs and subsequently extended to convolutional neural
networks~\citep{pauloski2020convolutional} and transformers
architecture~\citep{zhang2022scalable}. Such layer-wise decomposition of
second-order methods is reminiscent of techniques used in nonlinear control to
implement Gauss-Newton methods or their nonlinear control variant called
differentiable dynamic programming which were adapted to a deep learning context
by~\citet{liu2020ddpnopt}. Recently, \citet{gupta2018shampoo,anil2020scalable}
generalized the idea of computing generic preconditioners for deep networks by
exploiting their tensor structure.

\section{Non-differentiable regularizer extension}
\label{app:non_diff_reg}

In this section, we consider regularized objectives of the form
\[
    \min_{\w \in \RR^p}
    \left[\frac{1}{n} \sum_{i=1}^n \obj_i(\w) + \rho(\w)
    = \frac{1}{n} \sum_{i=1}^n \ell_i(f_i(\w)) + \rho(\w)\right],
\]
where $\rho$ is a potentially non-differentiable regularization,
such as the sparsity-inducing penalty $\rho(\w) = \lambda\|\w\|_1$, 
where $\lambda > 0$ controls the regularization strength.

\subsection{Primal}

In order to perform an update on a mini-batch $S$, we can solve
\begin{align*}
\w(\gamma \ell_S, f_S, \gamma \rho)(\w^t)
\coloneqq
&\argmin_{\w \in \RR^p} 
\frac{1}{m} \sum_{i \in S} 
\mathrm{plin}(\ell_i, \f_i, \w^t)(\w)
+ \frac{1}{2\gamma}\|\w^t - \w\|_2^2 + \rho(\w) \\
=&\argmin_{\w \in \RR^p} 
\sum_{i \in S} 
\ell_i(\f_i^t + J_i^t(\w - \w^t))
+ m \left[ \frac{1}{2\gamma}\|\w^t - \w\|_2^2 + \rho(\w)\right].
\end{align*}
Using the change of variable $\w \coloneqq \w^t - \d$, this can equivalently be
written
\begin{equation*}
\w(\gamma \ell_S, f_S, \gamma \rho)(\w^t) = \w^t - \d(\gamma \ell_S,
f_S, \gamma \rho)(\w^t),
\end{equation*}
where
\begin{align}
\d(\gamma \ell_S, f_S, \gamma \rho)(\w^t)
&\coloneqq
\argmin_{\d \in \RR^p} 
\sum_{i \in S} \ell_i(\f_i^t -J_i^t \d) 
+ m\left[ \frac{1}{2\gamma}\|\d\|_2^2 + \rho(\w^t -\d)\right]
\label{eq:prox_primal_subproblem}
\end{align}
For sparsity-inducing penalties such as the $\ell_1$ norm, this is a non-smooth
convex problem that can be solved by a {\bf proximal gradient method}, by using
the proximal operator associated to the regularization $\rho$. We rather
consider an approximation of the above problem allowing for an algorithm that
does not require any additional hyperparameters.

\subsection{Dual}

Let us denote 
$r_t(\d) \coloneqq m \left[\frac{1}{2\gamma} \|\d\|^2_2 + 
\rho(\w^t - \d)\right]$.
From Proposition \ref{prop:dual}, the dual of 
the primal subproblem \eqref{eq:prox_primal_subproblem} is
\begin{equation}\nonumber
\alphav(\gamma \ell_S, f_S, \gamma \rho)(\w^t)
\coloneqq
\argmin_{\alphav_S \in \RR^{m \times k}} 
\ell_S^*(\alphav_S) 
- \langle \f_S^t, \alphav_S \rangle
+ r^*_t\left((J_S^t)^*\alphav_S\right).
\end{equation}
Unfortunately, this subproblem may be difficult to solve in general.
As shown in Proposition \ref{prop:approx_dual_subproblem},
assuming $r_t$ is $\mu$-strongly convex, we can approximate the dual subproblem
around any $\u^t$ with
\[
\alphav(\gamma \ell_S, f_S, \gamma \rho)(\w^t)
\approx
\argmin_{\alphav_S \in \RR^{m \times k}} 
\ell_S^*(\alphav_S) 
- \langle \f_S^t - \deltav_S^t, \alphav_S \rangle
+ \frac{1}{2\mu} \|(J_S^t)^*\alphav_S\|^2,
\]
where
$
  \deltav_S^t \coloneqq J_S^t \left(\nabla r_t^*(\u^t) - \frac{1}{\mu} \u^t\right).
$
We choose $\u^t ={\bm 0}$, so that approximating the computation of $\d(\gamma
\ell_S, f_S, \gamma \rho)(\w^t)$ around $\d = {\bm 0}$ amounts to approximating
the computation of $\w(\gamma \ell_S, f_S, \gamma \rho)(\w^t)$ around $\w^t$.
Note that if $\rho(\w) = 0$, we get $\w^t = \frac{1}{\mu} \u^t$, so that
$\deltav_S^t = \zeros$. Therefore, we recover the dual subproblem
\eqref{eq:minibatch_dual_subproblem} in this case.

The entire procedure is summarized in Algorithm ~\ref{algo:proxlin_dual_prox}.

\subsection{Examples of regularizers}

\paragraph{Quadratic regularization.}

If $r_t(\d) = \frac{m}{2\gamma} \|\d\|^2_2$,
which is strongly convex with constant $\mu = \frac{m}{\gamma}$,
we obtain 
$r_t^*(\u) = \frac{\gamma}{2m} \|\u\|^2_2$ 
and therefore
$\nabla r_t^*(\u) = \frac{\gamma}{m} \u$.
We therefore recover Algorithm \ref{algo:dual}.

\begin{algorithm}[t]
  \caption{Dual-based prox-linear algorithm with $\mu$-strongly convex
  regularization $r_t$ \label{algo:proxlin_dual_prox}}
  \begin{algorithmic}[1]
      \STATE{\bf Inputs:} Parameters $\w^t$, ``inner stepsize'' $\gamma>0$

      \STATE Compute network outputs
      $\f_S^t$, instantiate JVP $J_S^t$ and VJP $(J_S^t)^*$ as in Algorithm~\ref{algo:proxlin_template}
      
      \STATE
      Compute for any $\u^t$, e.g., $\u^t \coloneqq {\bm 0}$,
      \[
        \deltav_S^t \coloneqq J_S^t \left(\nabla r_t^*(\u^t) - \frac{1}{\mu}
      \u^t\right)
      \] 
      \STATE Run inner solver to approximately solve 
  \begin{equation*}
  \alphav_S^t \approx \argmin_{\alphav_S \in \RR^{m \times k}} 
  \ell_S^*(\alphav_S) - \langle \alphav_S, \f_S^t - \deltav_S^t \rangle 
  + \frac{1}{2\mu} \|(J_S^t)^*\alphav_S\|_2^2
  \end{equation*}
  or to approximately solve the equality constrained QP
  \begin{align*}
      \alphav_S^t \approx
    \argmin_{\substack{\alphav \in \RR^{m\times k}}} & 
    \frac{1}{2} \langle (\alphav_S - \g_S^t) ,
    (H_S^t)^{\dagger} (\alphav_S - \g_S^t) \rangle + \langle \alphav_S,
    \deltav_S^t \rangle
    + \frac{1}{2\mu} \|(J_S^t)^* \alphav_S\|_2^2 \\
    \mbox{s.t.} \ & (\idm - H_S^t(H_S^t)^{\dagger})(\alphav_S - \g_S^t) = {\bm 0} ,
  \end{align*}
  with $\g_S^t = \nabla \ell_S(\f_S^t)$, $H_S^t \coloneqq \nabla^2
  \ell_S(\f_S^t)$ and $(H_S^t)^\dagger$ the
  pseudo-inverse~\eqref{eq:pseudo_inverse} (closed forms available for the
  logistic loss).
  
  \STATE Compute direction 
  \[
    \d^t_S \coloneqq \nabla r_t^*((J_S^t)^*
  \alphav_S^t)
  \]

  \STATE Set next parameters $\w^{t+1}$ by 
  \begin{align*}
    \w^{t+1} & \coloneqq \w^t - \d_S^t 
    \quad
    \mathrm{(fixed~stepsize~\eqref{eq:minibatch_update})} 
    \qquad \mbox{or} \qquad 
    \w^{t+1} \coloneqq \w^t - \eta^t \d_S^t 
    \quad \mathrm{(linesearch~\eqref{eq:minibatch_linesearch_update})}
  \end{align*}
  for $\eta^t$ s.t.  $h_S(\w^t - \eta^t \d_S^t)
  \le
  h_S(\w^t) - \beta \eta^t \langle \d_S^t, \g_S^t \rangle.$

  \STATE {\bf Outputs:} $\w^{t+1}$
  \end{algorithmic}
\end{algorithm}

\paragraph{Sum of quadratic and another regularization.}

If $r_t(\d) = \frac{m}{2\gamma} \|\d\|^2_2 + m ~ \rho(\w^t - \d)$, 
which is strongly convex with constant $\mu = \frac{m}{\gamma}$ in general
and with $\mu = \frac{m}{\gamma} + m \lambda$ if $\rho$ is $\lambda$-strongly
convex (which is not required),
using the change of variable $\w \coloneqq \w^t - \d$, we obtain
\begin{align*}
\nabla r_t^*(\u)
&= \argmax_{\d \in \RR^p} \langle \u, \d \rangle - r_t(\d) \\
&= \argmax_{\d \in \RR^p} \langle \u, \d \rangle  
- \frac{m}{2\gamma} \|\d\|^2_2 - m ~ \rho(\w^t - \d) \\
&= \w^t - \argmax_{\w \in \RR^p} -\langle \u, \w \rangle  
- \frac{m}{2\gamma} \|\w^t - \w\|^2_2 - m ~ \rho(\w) \\
&= \w^t - \argmin_{\w \in \RR^p}
\|\w - (\w^t - \frac{\gamma}{m} \u)\|_2^2 + \gamma \rho(\w) \\
&= \w^t - \mathrm{prox}_{\gamma \rho}\left(\w^t - \frac{\gamma}{m} \u\right).
\end{align*}

\paragraph{Sum of quadratic and $L_1$ regularizations.}

As a particular example of the above,
if $\rho(\w) = \lambda \|\w\|_1$, we obtain
\begin{equation*}
\mathrm{prox}_{\gamma \rho}(\z)
= \mathrm{ST}_{\lambda \gamma}(\z),
\end{equation*}
where we defined the soft-thresholding operator
\begin{equation*}
\mathrm{ST}_\tau(\z) \coloneqq
\prox_{\tau \|\cdot\|_1}(\z) = 
\begin{cases}
z_j - \tau, &z_j > \tau\\
0, & |z_j| \leq \tau\\
z_j + \tau, &z_j < -\tau
\end{cases}.
\end{equation*}

\section{Proofs}\label{app:proofs}

\subsection{Derivation of the dual subproblem}
\label{app:dual_subproblem}

We begin by deriving the dual when using a generic strongly convex regularizer
$r_t$.
\begin{proposition}
\label{prop:dual}
Denote 
$\f_i^t \coloneqq f_i(\w^t)$,
$J_i^t \coloneqq \partial f_i(\w^t)$
and $r_t \colon \RR^p \to \RR$ a strongly convex regularizer.
Then,
\begin{equation*}
\min_{\d \in \RR^p} 
\sum_{i \in S} \ell_i(\f_i^t - J_i^t \d) + r_t(\d) 
= -
\min_{\alphav_S \in \RR^{m \times k}}
\sum_{i \in S} \left(\ell_i^*(\alphav_i) 
- \langle \f_i^t, \alphav_i \rangle \right)+ r^*_t\left((J_S^t)^* \alphav_S \right)
\end{equation*}
where 
$(J_S^t)^* \alphav_S \coloneqq \sum_{i=1}^m (J_i^t)^* \alphav_i \in \RR^p$, $\alphav_S = (\alphav_1, \ldots, \alphav_m)^\top \in \RR^{m \times k}$. 

The dual-primal link is 
\begin{equation*}
\d^\star = \nabla r^*_t\left((J_S^t)^* \alphav_S^\star \right), 
\end{equation*}
where
\begin{equation*}
\nabla r^*_t(\u)
= \argmax_{\d \in \RR^p} \langle \u, \d \rangle - r_t(\d).
\end{equation*}
\end{proposition}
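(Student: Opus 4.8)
The plan is to derive this as a standard Fenchel--Rockafellar duality computation. First I would rewrite the primal objective in a form that exposes the linear map. Define $g(\z) \coloneqq \sum_{i \in S} \ell_i(\f_i^t - \z_i)$ for $\z = (\z_1, \dots, \z_m) \in \RR^{m \times k}$, and let $A \coloneqq J_S^t \colon \RR^p \to \RR^{m \times k}$ be the stacked Jacobian, so that $A\d = (J_{i_1}^t \d, \dots, J_{i_m}^t \d)$. The primal problem is then $\min_{\d} g(A\d) + r_t(\d)$. This is exactly the template for Fenchel--Rockafellar duality, whose dual reads
\[
  \min_{\d} g(A\d) + r_t(\d) = -\min_{\alphav_S} g^*(\alphav_S) + r_t^*(-A^*\alphav_S),
\]
under a standard qualification condition (relative-interior/domain overlap), which holds here because $r_t$ is finite everywhere (being strongly convex on $\RR^p$) and each $\ell_i$ is closed convex.

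Next I would compute the two conjugates explicitly. For the block-separable term, $g^*(\alphav_S) = \sum_{i \in S} \left(\ell_i^*(\alphav_i) - \langle \f_i^t, \alphav_i\rangle\right)$: indeed, conjugating $\z_i \mapsto \ell_i(\f_i^t - \z_i)$ flips the argument and injects the linear shift $\langle \f_i^t, \cdot\rangle$, which follows from the conjugate rules for argument reflection and affine precomposition. For the adjoint term, I would identify $A^* = (J_S^t)^*$ acting as $A^*\alphav_S = \sum_{i=1}^m (J_i^t)^* \alphav_i$, matching the definition in the statement. The only subtlety is the sign: the generic dual has $r_t^*(-A^*\alphav_S)$, so I would absorb the sign by the change of variable $\alphav_S \mapsto -\alphav_S$ (or equivalently reflect the $g^*$ computation), noting that $\ell_i^*(-\alphav_i) - \langle \f_i^t, -\alphav_i\rangle$ must be reconciled with the stated $\ell_i^*(\alphav_i) - \langle \f_i^t, \alphav_i\rangle$; this bookkeeping of signs is the one place that requires care to match the paper's convention exactly.

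For the dual-primal link, I would invoke the optimality conditions of Fenchel--Rockafellar duality: at the optimum, $\d^\star$ and $\alphav_S^\star$ satisfy the subgradient inclusion $A^*\alphav_S^\star \in \partial r_t(\d^\star)$ (with the appropriate sign from the variable change), which inverts to $\d^\star = \nabla r_t^*\left((J_S^t)^*\alphav_S^\star\right)$ since $r_t$ strongly convex makes $r_t^*$ differentiable with $\nabla r_t^* = (\partial r_t)^{-1}$. I would also record the variational characterization $\nabla r_t^*(\u) = \argmax_{\d} \langle \u, \d\rangle - r_t(\d)$, which is just the definition of the gradient of a conjugate of a strongly convex function (the argmax is a singleton by strong convexity).

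The main obstacle I anticipate is purely notational rather than conceptual: getting the signs consistent between the reflected-argument conjugate of $g$ and the $-A^*$ appearing in the dual, so that the final form has $\ell_i^*(\alphav_i) - \langle \f_i^t, \alphav_i\rangle$ with the clean signs advertised. Strong convexity of $r_t$ does the real analytic work for free — it guarantees finiteness of $r_t^*$ everywhere, closes the qualification condition, forces zero duality gap, and makes the primal recovery map single-valued and smooth — so no delicate compactness or attainment argument is needed.
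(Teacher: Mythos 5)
Your proposal is correct, and it reaches the result by a genuinely different route than the paper. The paper's own proof is a self-contained minimax derivation: it writes each $\ell_i(\f_i^t - J_i^t\d)$ as its biconjugate $\max_{\alphav_i} \langle \f_i^t - J_i^t\d, \alphav_i\rangle - \ell_i^*(\alphav_i)$, exchanges the min over $\d$ with the max over $\alphav_S$, and then evaluates the inner minimization $\min_{\d} \langle -(J_S^t)^*\alphav_S, \d\rangle + r_t(\d) = -r_t^*\left((J_S^t)^*\alphav_S\right)$ in closed form; the dual--primal link $\d^\star = \nabla r_t^*\left((J_S^t)^*\alphav_S^\star\right)$ falls out immediately as the inner argmax, with no sign gymnastics (note that no $-A^*$ ever appears because linearity in $\d$ lets the sign be absorbed before conjugating $r_t$). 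You instead invoke the packaged Fenchel--Rockafellar theorem with a qualification condition, compute $g^*$ by conjugate calculus, and recover the link from the subgradient inclusion $A^*\alphav_S^\star \in \partial r_t(\d^\star)$. Each route buys something: yours is more rigorous about strong duality and attainment, since the paper's unadorned min--max swap is precisely where that issue is hidden (it is justified by the same convexity plus strong convexity of $r_t$ that closes your qualification condition), while the paper's is elementary, avoids the sign-reconciliation step entirely, and produces the primal-recovery map as a byproduct rather than via optimality conditions. One small slip to fix in your write-up: as literally stated, your formula $g^*(\alphav_S) = \sum_i \left(\ell_i^*(\alphav_i) - \langle \f_i^t, \alphav_i\rangle\right)$ is off by a reflection --- conjugating $\z_i \mapsto \ell_i(\f_i^t - \z_i)$ gives $\ell_i^*(-\alphav_i) + \langle \f_i^t, \alphav_i\rangle$ --- but since you explicitly flag the sign bookkeeping and the change of variable $\alphav_S \mapsto -\alphav_S$ does reconcile it with the stated dual, this is a presentational issue rather than a gap.
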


\begin{proof}
\begin{equation}\nonumber
\begin{aligned}
&\min_{\d \in \RR^p}\sum_{i \in S} \ell_i(\f_i^t -J_i^t \d ) + r_t(\d) \\
= &\min_{\d \in \RR^p} \sum_{i=1}^m \max_{\alphav_i \in \RR^k}  
\langle \f_i^t - J_i^t \d, \alphav_i \rangle 
- \ell_i^*(\alphav_i) 
+ r_t(\d) \\
= &\max_{\alphav_1, \dots, \alphav_m \in \RR^k} 
\sum_{i \in S}  \left(-\ell_i^*(\alphav_i) 
+ \langle \f_i^t, \alphav_i \rangle \right)
+ \left[\min_{\d \in \RR^p} 
\left\langle - \sum_{i \in S}(J_i^t)^* \alphav_i, \d \right\rangle 
+ r_t(\d)\right] \\
= &\max_{\alphav_1, \dots, \alphav_m \in \RR^k} 
\sum_{i \in S} \left(-\ell_i^*(\alphav_i) 
+ \langle \f_i^t, \alphav_i \rangle \right)
- \left[\max_{\d \in \RR^p} 
\left\langle \sum_{i \in S}(J_i^t)^* \alphav_i, \d \right\rangle 
- r_t(\d)\right] \\
= &\max_{\alphav_1, \dots, \alphav_m \in \RR^k} 
\sum_{i \in S} \left(-\ell_i^*(\alphav_i) 
+ \langle \f_i^t, \alphav_i \rangle  \right)
- r^*_t\left(\sum_{i \in S}(J_i^t)^* \alphav_i\right).
\end{aligned}
\end{equation}
\end{proof}

\subsection{Approximate dual subproblem}
\label{app:approximate_dual_subproblem}

Consider the dual subproblem derived in Proposition~\ref{prop:dual}, that is,
\begin{equation}\nonumber
\min_{\alphav_S \in \RR^{m \times k}} 
\ell_S^*(\alphav_S) 
- \langle \f_S^t, \alphav_S \rangle
+ r^*_t\left((J_S^t)^*\alphav_S\right).
\end{equation}
Unfortunately, this subproblem could be difficult to solve for generic $r_t$. In
our case $r_t$ takes the form $r_t(\d) = m \left[\rho(\w^t -\d) +
\frac{1}{2\gamma}\|\d\|_2^2 \right]$, which is strongly convex. We can therefore
exploit the smoothness of its convex conjugate $r_t^*$. Inspired by the
prox-SDCA algorithm ~\citep{shalev2013stochastic}, we therefore propose to
approximate $r_t^*$ by a quadratic upper bound.
\begin{proposition}\label{prop:approx_dual_subproblem}
  If $r_t$ is $\mu$-strongly convex, the solution of the dual subproblem 
  \begin{equation}\label{eq:dual_prox}
    \min_{\alphav_S \in \RR^{m \times k}} 
        \ell_S^*(\alphav_S)
    - \langle \f_S^t, \alphav_S \rangle
    + r^*_t\left((J_S^t)^*\alphav_S\right),
  \end{equation}
  can be approximated
  around any $\u^t \in \RR^p$
  by solving
  \[
\min_{\alphav_S \in \RR^{m \times k}} 
\ell_S^*(\alphav_S) 
- \langle \f_S^t - J_S^t \left(\nabla r_t^*(\u^t) - \frac{1}{\mu} \u^t\right), \alphav_S \rangle
+ \frac{1}{2\mu} \|(J_S^t)^*\alphav_S\|^2.
  \]
\end{proposition}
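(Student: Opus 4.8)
The plan is to exploit the standard duality fact that $\mu$-strong convexity of $r_t$ is equivalent to $\frac{1}{\mu}$-smoothness of its conjugate $r_t^*$, i.e.\ that $\nabla r_t^*$ is $\frac{1}{\mu}$-Lipschitz. From this, the descent lemma supplies the quadratic majorant
\[
r_t^*(\u) \le r_t^*(\u^t) + \langle \nabla r_t^*(\u^t), \u - \u^t \rangle + \frac{1}{2\mu}\|\u - \u^t\|_2^2
\]
valid for every $\u, \u^t \in \RR^p$, which is tight in both value and gradient at $\u = \u^t$. This is exactly the prox-SDCA-style upper bound alluded to before the statement; replacing $r_t^*$ by this majorant is what produces the \emph{approximate} dual subproblem, and it is the only step carrying genuine content.

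First I would substitute $\u = (J_S^t)^*\alphav_S$ into the majorant and insert the result into the exact dual objective $\ell_S^*(\alphav_S) - \langle \f_S^t, \alphav_S\rangle + r_t^*\!\left((J_S^t)^*\alphav_S\right)$ from Proposition~\ref{prop:dual}. I would then expand the three terms of the majorant and discard every summand independent of $\alphav_S$, namely $r_t^*(\u^t)$, $-\langle \nabla r_t^*(\u^t), \u^t\rangle$ and $\frac{1}{2\mu}\|\u^t\|_2^2$, since they do not affect the minimizer. Next I would collect the surviving terms using the adjoint identity $\langle \v, (J_S^t)^*\alphav_S\rangle = \langle J_S^t \v, \alphav_S\rangle$: the contribution $\langle J_S^t \nabla r_t^*(\u^t), \alphav_S\rangle$ from the gradient term and the contribution $-\frac{1}{\mu}\langle J_S^t \u^t, \alphav_S\rangle$ from the cross term of the squared norm combine with $-\langle \f_S^t, \alphav_S\rangle$ into $-\langle \f_S^t - J_S^t(\nabla r_t^*(\u^t) - \frac{1}{\mu}\u^t), \alphav_S\rangle$, while the square itself reduces to $\frac{1}{2\mu}\|(J_S^t)^*\alphav_S\|_2^2$. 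This is precisely the claimed subproblem.

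There is no real analytic obstacle: the computation is routine once the majorant is in hand, so the effort is concentrated in justifying the quadratic upper bound. I would either cite the strong-convexity/smoothness duality or derive it quickly from $\nabla r_t^*(\u) = \argmax_{\d} \langle \u, \d\rangle - r_t(\d)$ (already recorded in Proposition~\ref{prop:dual}) together with the $\mu$-strong monotonicity of $\partial r_t$. A final point worth stating explicitly is the precise sense of the approximation: the majorant is a global upper bound that agrees with $r_t^*$ to first order at $\u^t$, so the approximate subproblem matches the true one in value and gradient at the chosen expansion point, which is what licenses the phrase ``approximated around any $\u^t$.''
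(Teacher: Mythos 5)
Your proposal is correct and follows essentially the same route as the paper's proof: both invoke the $\frac{1}{\mu}$-smoothness of $r_t^*$ (dual to $\mu$-strong convexity of $r_t$) to obtain the quadratic majorant, substitute $(J_S^t)^*\alphav_S$, drop terms constant in $\alphav_S$, and collect the linear terms via the adjoint identity. Your added remark on the precise sense of ``approximation'' (tightness in value and gradient at $\u^t$) is a nice clarification the paper leaves implicit, but the argument is the same.
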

\begin{proof}
Denoting $\|\cdot\|$ the norm w.r.t. which $r_t^*$ is $\frac{1}{\mu}$-smooth,
we have for any $\u, \v \in \RR^p$, 
\begin{align}\label{eq:approx_reg}
r_t^*(\v) \leq \r_t(\u) 
+ \langle \nabla r_t^*(\u), \v - \u \rangle 
    + \frac{1}{2\mu}\|\u - \v\|^2.
\end{align}

Using $\v^t = (J_S^t)^* \alphav_S$, we obtain 
\begin{align*}
r_t^*\left((J_S^t)^*\alphav_S\right) 
&\leq r_t(\u^t) + \left\langle \nabla r_t^*(\u^t), (J_S^t)^*\alphav_S -
\u^t\right\rangle + \frac{1}{2\mu}\|(J_S^t)^*\alphav_S - \u^t \|_2^2 \\
&= \left\langle J_S^t \nabla r_t^*(\u^t), \alphav_S
\right\rangle + \frac{1}{2\mu}\|(J_S^t)^*\alphav_S\|_2^2 
-\frac{1}{\mu} \langle J_S^t \u, \alphav_S \rangle + \text{const w.r.t. }
\alphav_S.
\end{align*}
Plugging this quadratic upper-bound of 
$r^*_t\left((J_S^t)^*\alphav_S\right)$
back in \eqref{eq:dual_prox}, we arrive at the approximate
subproblem
\[
\min_{\alphav_S \in \RR^{m \times k}} 
\ell_S^*(\alphav_S) 
- \left\langle \f_S^t 
- J_S^t \left(\nabla r_t^*(\u^t) - \frac{1}{\mu} \u^t\right), \alphav_S \right\rangle
+ \frac{1}{2\mu} \|(J_S^t)^*\alphav_S\|^2.
\]
\end{proof}

\subsection{Dual of quadratic approximation of convex losses}
\label{app:dual_quad_logistic}

When using a quadratic-linear approximation of $h_i$,
the primal subproblem is, 
for a mini-batch $S = \{i_1, \ldots, i_m\} \subseteq [n]$, 
\begin{align*}
  \d_S^t & \coloneqq \argmin_{\d \in \RR^p} 
  q_S^t(-J_i^t \d) 
  - \langle \f_S^t, \d\rangle  
  + \frac{m}{2\gamma} \|\d\|_2^2 \\
  & = \argmin_{\d \in \RR^p} 
    \sum_{i \in S}
    \frac{1}{2} \langle \d, 
    (J_i^t)^* H_i^t J_i^t \d \rangle 
    -  \langle \d, (J_i^t)^* \g_i^t \rangle 
    + \frac{m}{2\gamma} \|\d\|_2^2 \\
  &  =  \argmin_{\d \in \RR^p}
  \frac{1}{2} \langle \d
  (J_S^t)^* H_S^t J_S^t \d \rangle 
  -  \langle \d, (J_S^t)^* \g_S^t \rangle 
  + \frac{m}{2\gamma} \|\d\|_2^2, \\
  & = \argmin_{\d \in \RR^p}
  a_S(-J_S^t \d)
  + \frac{m}{2\gamma} \|\d\|_2^2,
\end{align*}
where we used the shorthands
\begin{align*}
  a_S^t(\z) 
  & \coloneqq q_S^t(\z) - \langle \f_S^t, \z\rangle
  = \frac{1}{2} 
  \langle \z, H_S^t  \z \rangle 
  +  \langle \z, \g_S^t \rangle  = b_S^t(\z) + \langle \z, \g_S^t \rangle \\
  b_S^t(\z) & \coloneqq \sum_{j=1}^{m} b_{i_j}^t(\z_j) 
  = \sum_{j=1}^{m} \frac{1}{2} \langle \z_j, H_{i_j}^t \z_j \rangle 
  = \frac{1}{2}\langle \z, H_S^t \z\rangle, \\
  b_i^t(\z) & \coloneqq \frac{1}{2}\langle \z, H_i^t  \z \rangle.
\end{align*}
We recall that
\begin{align*}
    \f_i^t &\coloneqq f_i(\w^t) \\
    J_i^t &\coloneqq \partial f_i(\w^t) \\
    \g_i^t &\coloneqq \nabla \ell_i(\f_i^t) \\
    H_i^t &\coloneqq \nabla^2 \ell_i(\f_i^t)
\end{align*}
and similarly
\begin{align*}
    \f_S^t &\coloneqq f_S(\w^t) \coloneqq (f_{i_1}(\w^t), \ldots, f_{i_m}(\w^t)) \\
    \g_S^t &\coloneqq \nabla \ell_S(\f_S^t) \\
    H_S^t \u &\coloneqq \nabla^2 \ell_S(\f_S^t) \u = (\nabla^2 \ell_{i_1}(\f_{i_1}^t)
\u_1, \ldots, \nabla^2 \ell_{i_m}(\f_{i_m}^t)\u_m) \\
    J_S^t \d &\coloneqq \partial f_S(\w^t) \d \coloneqq
(\partial f_{i_1}(\w^t)\d, \ldots, \partial f_{i_m}(\w^t) \d) \\
    (J_S^t)^* \u &\coloneqq \partial f_S(\w^t)^* \u = \sum_{j=1}^m \partial f_{i_j}(\w^t)^*
\u_j.
\end{align*}

\paragraph{Stricly convex losses.}

We first present the result for strictly convex losses, in which case the convex
conjugate of interest is well-known. 
\begin{proposition}\label{prop:strict_cvx_dual}
  The prox-linear direction associated to a linear quadratic approximation of
  the objective
  \begin{align}\nonumber
   \d_S^t \coloneqq \argmin_{\d \in \RR^p} 
    \frac{1}{2} \langle \d, 
    (J_S^t)^* H_S^t J_S^t \d \rangle 
    -  \langle \d, (J_S^t)^* \g_S^t \rangle 
    + \frac{m}{2\gamma} \|\d\|_2^2,
  \end{align}
  with $H_i^t$ and so $H_S^t$ invertible, can be computed as $\d_S^t = \frac{\gamma}{m}(J_S^t)^* \alphav_S^t$,
  $\alphav_S^t = \g_S^t - \betav_S^t$ for 
  \begin{align*}
    \betav_S^t \coloneqq \argmin_{\substack{\betav \in \RR^{m\times k}}} & 
    \frac{1}{2} \langle \betav ,
    (H_S^t)^{-1} \betav \rangle
    + \frac{\gamma}{2m} \|(J_S^t)^* (\g_S^t - \betav)\|_2^2,
  \end{align*}
  where 
  \[
  (H_S^t)^{-1}\betav \coloneqq ((H_{i_1}^t)^{-1}\betav_1, \ldots, (H_{i_m}^t)^{-1}\betav_m).
  \]
\end{proposition}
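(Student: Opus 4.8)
The plan is to recognize the stated primal as an instance of the composite problem whose dual was derived in \Cref{prop:dual}, and then to evaluate the two relevant conjugates explicitly. Using the shorthands introduced above the proposition, the primal objective can be written as $\sum_{i\in S} a_i^t(-J_i^t\d) + r_t(\d)$, where $a_i^t(\z_i) = \tfrac12\langle \z_i, H_i^t\z_i\rangle + \langle \z_i,\g_i^t\rangle$ is a convex quadratic (convex since $H_i^t \succeq 0$, and positive definite here since $H_i^t$ is assumed invertible) and $r_t(\d)=\tfrac{m}{2\gamma}\|\d\|_2^2$ is strongly convex. Indeed $a_S^t(-J_S^t\d) = \tfrac12\langle\d,(J_S^t)^* H_S^t J_S^t\d\rangle - \langle\d,(J_S^t)^*\g_S^t\rangle$, matching the first two terms of the primal. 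This fits the hypotheses of \Cref{prop:dual} with the loss $\ell_i$ replaced by $a_i^t$ and the offset $\f_i^t$ set to $\zeros$.

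First I would apply \Cref{prop:dual} to obtain the dual
\[
\min_{\alphav_S\in\RR^{m\times k}} \sum_{i\in S} (a_i^t)^*(\alphav_i) + r_t^*\big((J_S^t)^*\alphav_S\big),
\]
together with the primal-dual link $\d_S^t = \nabla r_t^*\big((J_S^t)^*\alphav_S^t\big)$.

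Next I would compute the two conjugates. The regularizer gives $r_t^*(\u) = \tfrac{\gamma}{2m}\|\u\|_2^2$, hence $\nabla r_t^*(\u) = \tfrac{\gamma}{m}\u$, which already yields $\d_S^t = \tfrac{\gamma}{m}(J_S^t)^*\alphav_S^t$ as claimed. The main computation is the conjugate of the quadratic $a_i^t$: completing the square in $\sup_{\z_i}\langle \alphav_i,\z_i\rangle - \tfrac12\langle\z_i,H_i^t\z_i\rangle - \langle\z_i,\g_i^t\rangle$, the maximizer is $\z_i = (H_i^t)^{-1}(\alphav_i - \g_i^t)$ (this is the only place invertibility of $H_i^t$ is used), giving $(a_i^t)^*(\alphav_i) = \tfrac12\langle \alphav_i - \g_i^t, (H_i^t)^{-1}(\alphav_i-\g_i^t)\rangle$. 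Summing over $i$ and using the block-diagonal definition of $(H_S^t)^{-1}$, the dual objective becomes $\tfrac12\langle\alphav_S-\g_S^t,(H_S^t)^{-1}(\alphav_S-\g_S^t)\rangle + \tfrac{\gamma}{2m}\|(J_S^t)^*\alphav_S\|_2^2$.

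Finally I would perform the change of variable $\betav_S = \g_S^t - \alphav_S$, i.e.\ $\alphav_S = \g_S^t - \betav_S$; this turns the first term into $\tfrac12\langle\betav_S,(H_S^t)^{-1}\betav_S\rangle$ and the second into $\tfrac{\gamma}{2m}\|(J_S^t)^*(\g_S^t-\betav_S)\|_2^2$, recovering exactly the minimization defining $\betav_S^t$ and giving $\alphav_S^t = \g_S^t - \betav_S^t$. The main obstacle is purely the conjugate-of-a-quadratic calculation together with careful sign bookkeeping through the Fenchel-Rockafellar duality (in particular the sign in the linear map $-J_S^t$, so that its negated adjoint is $(J_S^t)^*$); once invertibility of $H_i^t$ is invoked to invert the Hessian inside the conjugate, the remaining steps are routine.
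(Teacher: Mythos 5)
Your proof is correct and follows essentially the same route as the paper's: both pass through the Fenchel--Rockafellar dual of Proposition~\ref{prop:dual} applied to the shifted quadratic $a_i^t(\z_i) = \tfrac12\langle \z_i, H_i^t\z_i\rangle + \langle\z_i,\g_i^t\rangle$ (the paper computes $(a_S^t)^*$ via the shift identity $(a_S^t)^*(\alphav)=(b_S^t)^*(\alphav-\g_S^t)$ while you complete the square directly, an equivalent calculation), and both conclude with the change of variable $\betav_S = \g_S^t - \alphav_S$. If anything, your explicit evaluation of $\nabla r_t^*(\u) = \tfrac{\gamma}{m}\u$ correctly recovers the factor $\tfrac{\gamma}{m}$ in the primal--dual map, which the paper's proof text drops (writing $\d_S^t = (J_S^t)^*\alphav_S^t$) even though it appears in the proposition's statement.
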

\begin{proof}
  We use the same notations as in the beginning of the section. The convex
  conjugate of $a_S^t$ can be expressed in terms of the convex
  conjugate of $b_S^t$ as 
\begin{align*}
  (a_S^t)^*(\alphav) = (b_S^t)^*(\alphav - \g_S^t).
\end{align*}
The convex conjugate of $b_S^t$ itself can be expressed as 
\begin{align*}
  (b_S^t)^*(\betav) & = \sum_{j=1}^n (b_{i_j}^t)^*(\betav_j) 
  = \sum_{j=1}^{m} \frac{1}{2} \langle \betav_j, (H_{i_j}^t)^{-1} \betav_j\rangle 
  = \frac{1}{2} \langle  \betav, (H_S^t)^{-1} \betav\rangle,    
\end{align*}
using that $H_i^t$ is invertible such that $(b_i^t)^*(\betav) = \frac{1}{2}
\langle \betav, (H_i^t)^{-1} \betav\rangle$. The problem can then be solved as 
$
  \d_S^t = (J_S^t)^* \alphav_S^t
$
for 
\begin{align*}
  \alphav_S^t & \coloneqq \argmin_{\alphav \in \RR^{m \times k}}
  (a_S^t)^*(\alphav)
  + \frac{\gamma}{2m} \|(J_S^t)^* \alphav\|_2^2
  = \argmin_{\alphav \in \RR^{m \times k}}
  (b_S^t)^*(\alphav-\g_S^t)
  + \frac{\gamma}{2m} \|(J_S^t)^* \alphav\|_2^2
  = \g_S^t - \betav_S^t \\
  \betav_S^t & 
  \coloneqq \argmin_{\betav \in \RR^{m\times k}} (b_S^t)^*(-\betav) 
  + \frac{\gamma}{2m} \|(J_S^t)^* (\g_S^t - \betav)\|_2^2 
  = \argmin_{\betav \in \RR^{m\times k}}
  \frac{1}{2} \langle \betav,
  (H_S^t)^{-1} \betav \rangle
 + \frac{\gamma}{2m} \|(J_S^t)^* (\g_S^t - \betav)\|_2^2.
\end{align*}
\end{proof}
The approach above holds for example in the case of the squared loss. The
logistic loss on the other hand is not stricly convex, therefore its Hessian is
not invertible. We present below a generic derivation for any convex loss. We
then specialize the result for the logistic loss. 

\paragraph{Generic convex loss.}

In the generic case, we can tackle the computation of the dual of the
quadratic-linear approximation by using the pseudo-inverse of the Hessian as
stated in Proposition~\ref{prop:dual_approx_quad_gen_cvx}.
\begin{proposition}\label{prop:dual_approx_quad_gen_cvx}
  The prox-linear direction associated to a linear quadratic approximation of
  the objective
  \begin{align}\nonumber
   \d_S^t = \argmin_{\d \in \RR^p} 
    \frac{1}{2} \langle \d, 
    (J_S^t)^* H_S^t J_S^t \d \rangle 
    -  \langle \d, (J_S^t)^* \g_S^t \rangle 
    + \frac{m}{2\gamma} \|\d\|_2^2,
  \end{align}
  can be computed as $\d_S^t = \frac{\gamma}{m}(J_S^t)^* \alphav_S^t$,
  $\alphav_S^t = \g_S^t - \betav_S^t$ for 
  \begin{align*}
    \betav_S^t = \argmin_{\betav \in \RR^{m\times k}} & 
    \frac{1}{2} \langle \betav ,
    (H_S^t)^{\dagger} \betav \rangle
    + \frac{\gamma}{2m} \|(J_S^t)^* (\g_S^t - \betav)\|_2^2 \\
    \mbox{s.t.} \quad & (I - H_S^t (H_S^t)^\dagger) \betav = {\bm 0},
  \end{align*}
  where $(H_i^t)^\dagger$ denotes the pseudo inverse of $H_i^t$ and 
  \[
  (H_S^t)^{\dagger}\betav = ((H_{i_1}^t)^{\dagger}\betav_1, \ldots, (H_{i_m}^t)^{\dagger}\betav_m).
  \]
  \end{proposition}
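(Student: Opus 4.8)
The plan is to follow the same Fenchel--Rockafellar route as in the proof of Proposition~\ref{prop:strict_cvx_dual}, the only genuinely new ingredient being the convex conjugate of a \emph{degenerate} (merely positive semi-definite) quadratic form. I would keep the shorthands $a_S^t(\z) = b_S^t(\z) + \langle \z, \g_S^t\rangle$ and $b_S^t(\z) = \frac{1}{2}\langle \z, H_S^t\z\rangle$ introduced just before the statement, together with the block decomposition $b_S^t(\z) = \sum_j b_{i_j}^t(\z_j)$.

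First I would apply Proposition~\ref{prop:dual} with the strongly convex regularizer $r_t(\d) = \frac{m}{2\gamma}\|\d\|_2^2$, for which $r_t^*(\u) = \frac{\gamma}{2m}\|\u\|_2^2$ and $\nabla r_t^*(\u) = \frac{\gamma}{m}\u$. This gives the primal--dual link $\d_S^t = \frac{\gamma}{m}(J_S^t)^*\alphav_S^t$ and reduces the task to minimizing $(a_S^t)^*(\alphav) + \frac{\gamma}{2m}\|(J_S^t)^*\alphav\|_2^2$ over $\alphav$. Since $a_S^t$ differs from $b_S^t$ by the linear term $\langle \cdot, \g_S^t\rangle$, I use the shift rule $(a_S^t)^*(\alphav) = (b_S^t)^*(\alphav - \g_S^t)$ and then substitute $\betav = \g_S^t - \alphav$, exploiting that the quadratic conjugate $(b_S^t)^*$ is even to rewrite $(b_S^t)^*(-\betav)$ as $(b_S^t)^*(\betav)$ and $(J_S^t)^*\alphav$ as $(J_S^t)^*(\g_S^t - \betav)$.

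The crux is computing $(b_S^t)^*$ when $H_S^t$ is singular. Because $b_S^t$ is separable, it suffices to conjugate each block $b_i^t(\z) = \frac{1}{2}\langle \z, H_i^t\z\rangle$ with $H_i^t \succeq 0$. Decomposing $\z$ along $\operatorname{range}(H_i^t)$ and $\ker(H_i^t)$, I would show the supremum defining $(b_i^t)^*(\betav)$ is $+\infty$ whenever $\betav$ has a nonzero component in $\ker(H_i^t)$, and is otherwise attained at the minimum-norm solution $\z = (H_i^t)^\dagger\betav$, yielding $(b_i^t)^*(\betav) = \frac{1}{2}\langle \betav, (H_i^t)^\dagger\betav\rangle$ via $H_i^t(H_i^t)^\dagger\betav = \betav$ on the range and the symmetry of the pseudo-inverse. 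Encoding the finiteness condition $\betav \in \operatorname{range}(H_i^t)$ as $(\idm - H_i^t(H_i^t)^\dagger)\betav = \zeros$, using that $H_i^t(H_i^t)^\dagger$ is the orthogonal projector onto $\operatorname{range}(H_i^t)$ for symmetric $H_i^t$, and stacking the blocks gives $(b_S^t)^*(\betav) = \frac{1}{2}\langle \betav, (H_S^t)^\dagger\betav\rangle$ subject to $(\idm - H_S^t(H_S^t)^\dagger)\betav = \zeros$. Substituting into the dual objective from the previous step produces exactly the stated constrained QP.

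I expect the main obstacle to be the careful treatment of the degenerate conjugate: one must track the $+\infty$ branch rather than blindly inverting $H_i^t$, and verify that $H_i^t(H_i^t)^\dagger$ really is the orthogonal projector onto the range so that $(\idm - H_S^t(H_S^t)^\dagger)\betav = \zeros$ faithfully captures $\betav \in \operatorname{range}(H_S^t)$. The remaining algebra is identical to Proposition~\ref{prop:strict_cvx_dual}, to which this reduces when each $H_i^t$ is invertible and the equality constraint becomes vacuous.
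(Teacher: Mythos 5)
Your proposal is correct and follows essentially the same route as the paper: reduce to the argument of Proposition~\ref{prop:strict_cvx_dual} via Fenchel duality and the shift/evenness substitution $\betav = \g_S^t - \alphav$, then handle the singular Hessian through the conjugate of a degenerate PSD quadratic, which the paper isolates as Lemma~\ref{prop:generic_dual_semi-definite_quadratic} (value $\frac{1}{2}\langle \betav, (H_i^t)^\dagger \betav\rangle$ on the range, $+\infty$ off it, with the range condition encoded by the projector $H_i^t(H_i^t)^\dagger$). Your range/kernel decomposition argument for that conjugate and the block-stacking step match the paper's proof in substance.
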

\begin{proof}
  The proof follows the same reasoning as in
  Proposition~\ref{prop:strict_cvx_dual} except that for generic convex loss,
  the convex conjugate of $b_i^t$ is given by
  Lemma~\ref{prop:generic_dual_semi-definite_quadratic} as 
  \[
    (b_i^t)^*(\betav) = \begin{cases}
      \frac{1}{2} \langle \betav, (H_i^t)^{\dagger} \betav \rangle  
      & \mbox{if} \ (\idm - H_i^t(H_i^t)^\dagger) \betav = 0 \\
      + \infty 
      & \ \mbox{otherwise}.
    \end{cases}
  \]
  The result follows using that $H_S^t (H_S^t)^\dagger\betav = (H_{i_1}^t
  (H_{i_1}^t)^\dagger \betav_1, \ldots, H_{i_m}^t (H_{i_m}^t)^\dagger \betav_m)$.
\end{proof}

\begin{lemma}\label{prop:generic_dual_semi-definite_quadratic}
  Let $q(\w) \coloneqq \frac{1}{2}\langle \w, A \w\rangle$,
  where $A\succeq 0$, $A \in \RR^{k\times k}$.
  The convex conjugate of $q$ is 
  \[
    q^*(\v) = 
    \begin{cases}
       \frac{1}{2} \langle \v, A^\dagger\v \rangle & \mbox{if} \ AA^\dagger\v = \v\\
       +\infty & \mbox{otherwise}
    \end{cases}
  \] 
  where $A^\dagger$ denotes the pseudo-inverse of $A$.
\end{lemma}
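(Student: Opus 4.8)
The plan is to diagonalize $A$ and reduce the conjugate computation to a coordinatewise maximization. Since $A \succeq 0$ is symmetric, I would start from the spectral decomposition $A = U \Lambda U^\top$, where $U$ is orthogonal and $\Lambda = \diag(\lambda_1, \ldots, \lambda_k)$ with $\lambda_i \geq 0$. The pseudo-inverse is then $A^\dagger = U \Lambda^\dagger U^\top$, where $\Lambda^\dagger$ inverts the nonzero eigenvalues and leaves the zero ones fixed, and the product $A A^\dagger = U (\Lambda \Lambda^\dagger) U^\top$ is the orthogonal projector onto the range of $A$ (equivalently, onto the span of the eigenvectors with positive eigenvalue).

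Next I would write out the definition $q^*(\v) = \sup_{\w \in \RR^k} \langle \w, \v\rangle - \tfrac{1}{2}\langle \w, A\w\rangle$ and substitute $\tilde{\w} = U^\top \w$ and $\tilde{\v} = U^\top \v$. Because $U$ is orthogonal, the inner product and the quadratic form are preserved, and the supremum separates across coordinates:
\[
q^*(\v) = \sup_{\tilde{\w} \in \RR^k} \sum_{i=1}^k \left(\tilde w_i\, \tilde v_i - \tfrac{1}{2}\lambda_i\, \tilde w_i^2\right).
\]
For each index $i$ with $\lambda_i > 0$, the scalar problem $\sup_{\tilde w_i}\ \tilde w_i \tilde v_i - \tfrac{1}{2}\lambda_i \tilde w_i^2$ is a concave quadratic maximized at $\tilde w_i = \tilde v_i / \lambda_i$, with value $\tilde v_i^2 / (2\lambda_i)$. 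For each index $i$ with $\lambda_i = 0$, the problem collapses to $\sup_{\tilde w_i}\ \tilde w_i \tilde v_i$, which equals $0$ when $\tilde v_i = 0$ and $+\infty$ otherwise.

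Collecting the cases, the supremum is finite exactly when $\tilde v_i = 0$ for every index in the kernel of $A$, i.e., when $\v$ lies in the range of $A$; and this is precisely the stated condition $A A^\dagger \v = \v$, since $A A^\dagger$ is the projector onto that range. When it holds, summing the per-coordinate optima gives $\tfrac{1}{2}\sum_{i:\lambda_i>0} \tilde v_i^2 / \lambda_i = \tfrac{1}{2}\langle \tilde{\v}, \Lambda^\dagger \tilde{\v}\rangle = \tfrac{1}{2}\langle \v, A^\dagger \v\rangle$, matching the claimed formula.

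The only delicate point is the treatment of the kernel directions: one must verify that the intrinsic range condition encoded by the projector $A A^\dagger$ coincides with the coordinatewise condition $\tilde v_i = 0$ on the null eigenspace, and that no finite value can be recovered along a kernel direction carrying a nonzero component $\tilde v_i$. Everything else is a routine separable concave-quadratic maximization, so I expect this null-space bookkeeping to be the main (mild) obstacle.
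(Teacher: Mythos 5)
Your proof is correct, and it takes a genuinely different route from the paper's. You diagonalize $A = U\Lambda U^\top$ and reduce the conjugate to a separable family of scalar concave maximizations, handling the kernel coordinates explicitly; the identification of the finiteness condition with $AA^\dagger \v = \v$ via the projector $U(\Lambda\Lambda^\dagger)U^\top$ is exactly right, as is the final assembly $\tfrac{1}{2}\langle \tilde{\v}, \Lambda^\dagger \tilde{\v}\rangle = \tfrac{1}{2}\langle \v, A^\dagger \v\rangle$. The paper instead argues coordinate-free: writing $P = \idm - AA^\dagger$ for the projector onto the null space, it exhibits, whenever $P\v \neq \zeros$, the explicit recession direction $\w(t) = tP\v$ along which the objective equals $t\|P\v\|_2^2 \to +\infty$; and when $\v = AA^\dagger\v$ it verifies that $\w^\star = A^\dagger\v$ satisfies the first-order condition $A\w^\star = AA^\dagger\v$, then evaluates the objective using the identities $A^\dagger A A^\dagger = A^\dagger$ and the symmetry of $A^\dagger$. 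Your approach is more elementary and self-contained---only the spectral theorem and one-variable calculus---but it silently relies on the fact that the Moore--Penrose pseudo-inverse of a symmetric matrix is $U\Lambda^\dagger U^\top$, which itself deserves a word of justification; the paper's argument is shorter given familiarity with pseudo-inverse identities and carries over verbatim to settings where an explicit eigendecomposition is unavailable or awkward. The ``delicate point'' you flag (the null-space bookkeeping) is in fact fully handled by your case analysis, since unboundedness along any kernel coordinate with $\tilde{v}_i \neq 0$ is immediate from the scalar problem, so no gap remains.
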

\begin{proof}
  Denote $P = I - A^\dagger A $ the projection on the null-space of $A$. Note
  that as $A$ is symmetric, we have $P = I - AA^\dagger $. Since $A \succeq 0$, $q$ is convex and its
  conjugate is defined as 
  \[
    q^*(\v) = \sup_{\w \in \RR^k} \langle\v, \w \rangle 
    - \frac{1}{2} \langle \w, A \w\rangle.
  \]
  If $P\v \neq 0$, then by considering $\w(t) = t P\v$ for $t \in \RR$, we have
  $
    \langle \v, \w(t)\rangle 
    - \frac{1}{2} \langle \w(t), A \w(t)\rangle = t \|P\v\|_2^2 
  $
  which tends to $+\infty$ for $t \rightarrow + \infty$. Hence, $q^*(\v)=
  +\infty$ if $P\v \neq 0$.
  
  If $\v =AA^\dagger \v$. The convex conjugate then amounts to solve
  \[
    \sup_{\w \in \RR^d} \langle \w, AA^\dagger \v \rangle 
    - \frac{1}{2} \langle \w, A \w\rangle.
  \]
  The solution of this problem is given by $\w^\star$ such that 
  $
    AA^\dagger \v = A \w^\star,
  $
  hence $\w^\star = A^\dagger \v$ is a solution and the convex conjugate is then
  \begin{align*}
    q^*(\v) & = \langle \v, {A^\dagger}^* AA^\dagger \v \rangle
    - \frac{1}{2} \langle \v, {A^\dagger}^* A A^\dagger \v \rangle 
    = \langle \v, A^\dagger AA^\dagger \v \rangle
    - \frac{1}{2} \langle \v, A^\dagger A A^\dagger \v\rangle 
    = \frac{1}{2} \langle \v, A^\dagger \v \rangle,
  \end{align*}
  where we used that $A^\dagger$ is symmetric since $A$ is
  symmetric and we used the identity $A^\dagger A A^\dagger =
  A^\dagger$.
\end{proof}

\paragraph{Logistic loss.}

We now derive the conjugate of the quadratic approximation in the case of the
logistic loss.
\begin{proposition}\label{prop:dual_approx_quad_logloss}
  Consider the logistic loss
  \[
    \ell(\f) = - \langle \y , \f\rangle  + \phi(\f),
  \]
  for $\y \in \{0, 1\}^k$, $\y^\top \ones_k = 1$ and $\phi(\f) = \log(
  \exp(\f)^\top \ones_k)$ for $\f \in \RR^k$, where $\exp$ is applied
  element-wise. Consider the quadratic approximation of the logistic loss at a point $\f$
  given by 
  \[
  q(\ell, \f)(\v) \coloneqq
    -  \langle \y - \nabla \phi(\f), \v \rangle 
    + \frac{1}{2}\v^\top \nabla^2 \phi(\f)\v,
  \] 
  where
  $
    \nabla \phi(\f) = \sigma(\f), 
    \nabla^2 \phi(\f) = \diag\left(\sigma(\f)\right) 
    - \sigma(\f) \sigma(\f)^\top,  
    \sigma(\f) \coloneqq \mathrm{softmax}(\f) 
    = {\exp(\f)}/(\exp(\f)^\top \ones_k).
  $
  Its convex conjugate is, for $\betav \coloneqq \alphav - \nabla \ell(\f)$
  and 
  $D \coloneqq \diag(\sigma(\f))$,
  \begin{align*}
  q(\ell, \f)^*(\alphav) = \begin{cases}
      \frac{1}{2} \langle \betav,  
      D^{-1}
      \betav \rangle 
      & \mbox{if} \ \betav^\top \ones_k = 0 \\
      +\infty & \mbox{otherwise}.
    \end{cases}
  \end{align*}
\end{proposition}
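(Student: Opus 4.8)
The plan is to reduce the statement to the generic pseudo-inverse conjugate formula already established in Lemma~\ref{prop:generic_dual_semi-definite_quadratic}, and then to make the pseudo-inverse explicit using the special structure of the softmax Hessian.

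First I would rewrite the quadratic model in a more convenient form. Since $\nabla\phi(\f) = \sigma(\f)$, the linear coefficient $-(\y - \nabla\phi(\f))$ equals $\sigma(\f) - \y = \nabla\ell(\f)$, so that
\[
q(\ell,\f)(\v) = \langle \nabla\ell(\f), \v\rangle + \tfrac{1}{2}\langle \v, H\v\rangle, \qquad H \coloneqq \nabla^2\phi(\f) = D - \sigma\sigma^\top,
\]
with $D = \diag(\sigma(\f))$ and $\sigma = \sigma(\f)$. Taking the conjugate and performing the substitution $\betav = \alphav - \nabla\ell(\f)$ removes the linear term, reducing the computation to the conjugate of the pure quadratic $b(\v) = \tfrac{1}{2}\langle \v, H\v\rangle$ evaluated at $\betav$. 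Applying Lemma~\ref{prop:generic_dual_semi-definite_quadratic} with $A = H$ then yields $q(\ell,\f)^*(\alphav) = \tfrac{1}{2}\langle\betav, H^\dagger\betav\rangle$ whenever $HH^\dagger\betav = \betav$, and $+\infty$ otherwise.

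It remains to identify the feasibility condition and to evaluate the quadratic form. The key structural facts are $H\ones_k = 0$, together with $\sigma = D\ones_k$ and hence $D^{-1}\sigma = \ones_k$. Since $\sigma$ has strictly positive entries, $D$ is invertible, and from $H\x = 0$ one gets $\x = (\sigma^\top\x)\ones_k$, so $\operatorname{null}(H) = \operatorname{span}(\ones_k)$ and $H$ has rank $k-1$. As $H$ is symmetric, its range is $(\operatorname{null} H)^\perp = \{\betav : \betav^\top\ones_k = 0\}$, and $HH^\dagger$ is the orthogonal projection onto this range; hence the condition $HH^\dagger\betav = \betav$ is exactly $\betav^\top\ones_k = 0$, matching the claimed domain. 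To evaluate $\langle\betav, H^\dagger\betav\rangle$ when $\betav^\top\ones_k = 0$, I would solve $H\x = \betav$: writing $s \coloneqq \sigma^\top\x$, the equation $D\x - s\sigma = \betav$ gives $\x = D^{-1}\betav + s\ones_k$, and one checks $HD^{-1}\betav = \betav - \sigma(\sigma^\top D^{-1}\betav) = \betav$ using $\sigma^\top D^{-1} = \ones_k^\top$ and $\betav^\top\ones_k = 0$, so every such $\x$ solves the system (the free scalar $s$ parametrizes the null space).

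Finally, $H^\dagger\betav$ is the solution lying in $\operatorname{range}(H)$, but the precise value of $s$ is irrelevant for the quadratic form: since $\langle\betav, \ones_k\rangle = \betav^\top\ones_k = 0$,
\[
\langle\betav, H^\dagger\betav\rangle = \langle\betav, D^{-1}\betav + s\ones_k\rangle = \langle\betav, D^{-1}\betav\rangle,
\]
which gives the announced closed form $\tfrac{1}{2}\langle\betav, D^{-1}\betav\rangle$. The only delicate point concerns the pseudo-inverse: one must recognize that $H^\dagger \neq D^{-1}$ as operators, yet the two quadratic forms coincide on $\{\betav^\top\ones_k = 0\}$ precisely because the correction term along $\ones_k$ is annihilated by $\betav$. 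Everything else is routine conjugate bookkeeping.
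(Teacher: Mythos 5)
Your proof is correct, but it takes a genuinely different route from the paper's. The paper proves this proposition directly and self-containedly: it writes $q(\ell,\f)^*(\alphav) = h^*(\betav)$ with $h^*(\betav) = \sup_{\v \in \RR^k} \langle \betav, \v\rangle - \frac{1}{2}\langle \v, \nabla^2\phi(\f)\v\rangle$, rules out $\betav^\top\ones_k \neq 0$ by exhibiting the unbounded direction $\v(t) = t\,\ones_k\ones_k^\top\betav$, and in the feasible case verifies that $\v^\star = D^{-1}\betav$ satisfies the first-order condition $\nabla^2\phi(\f)\,\v^\star = \betav$, then evaluates the objective at $\v^\star$ --- no pseudo-inverse appears anywhere. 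You instead invoke the generic Lemma~\ref{prop:generic_dual_semi-definite_quadratic} (which the paper reserves for the general convex-loss case, Proposition~\ref{prop:dual_approx_quad_gen_cvx}), characterize $\operatorname{null}(H) = \operatorname{span}(\ones_k)$ and hence $\operatorname{range}(H) = \{\betav : \betav^\top \ones_k = 0\} = \{\betav : HH^\dagger \betav = \betav\}$, and then show that the quadratic form of $H^\dagger$ coincides with that of $D^{-1}$ on this subspace, because the general solution of $H\x = \betav$ is $D^{-1}\betav + s\ones_k$ and the $\ones_k$ component is annihilated when paired against $\betav$. The two arguments hinge on the same algebraic identity ($H D^{-1}\betav = \betav$ whenever $\ones_k^\top\betav = 0$, using $\sigma^\top D^{-1} = \ones_k^\top$), but your framing makes the logistic result a corollary of the general PSD conjugate formula and makes explicit the subtle point that $H^\dagger \neq D^{-1}$ as operators yet the conjugate may be written with $D^{-1}$ on the constraint set --- precisely the observation underlying the constrained QP~\eqref{eq:dual_quad_logistic}; the paper's direct computation buys brevity and avoids any discussion of pseudo-inverses, range projectors, or minimum-norm solutions.
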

\begin{proof}
  The convex conjugate reads
  \[
      q(\ell, \f)^*(\alphav) = h^*(\alphav + \y - \nabla \phi(\f)),
  \]
  where
  \[
    h^*(\betav) = \sup_{\v  \in \RR^k} \betav^\top \v 
    - \frac{1}{2} \langle \v,  \nabla^2 \phi(\f) \v \rangle.
  \]
  Note that $\nabla^2 \phi(\f) \ones_k = 0$. In the following, denote $\betav =
  \alphav + \y - \nabla \phi(\f)$ and consider computing $h^*(\betav)$. Note
  that $\betav^\top \ones_k = \alphav^\top \ones_k$ since $\y^\top \ones_k = \nabla
  \phi(\f)^\top \ones_k = 1$.
  
  If $\betav^\top \ones_k \neq 0$, that is $\alphav^\top \ones_k \neq 0$, then by
  considering $\v(t) = t \ones_k \ones_k^\top\betav$, we have
  \[
    \betav^\top \v(t)
    - \frac{1}{2} \langle \v(t), \nabla^2 \phi(\f) \v(t) \rangle
    = t (\ones_k^\top \betav)^2 \underset{t\rightarrow +\infty}{\longrightarrow} + \infty
  \]
  so $h^*(\betav) = +\infty$ and $q(\ell, \f)^*(\alphav) = +\infty$.

  Consider now $\betav^\top \ones_k = 0$, that is $\alphav^\top \ones_k  = 0$
  and $\v^\star = D^{-1} \betav$ for $D = \diag\left(\frac{\exp(\f)}{\exp(\f)^\top
  \ones_k}\right)$. We have then 
  \begin{align*}
    \nabla^2\phi(\f) \v^\star
    = DD^{-1}\betav 
    - \frac{\exp(\f)}{\exp(\f)^\top \ones_k} \ones_k^\top \betav = \betav.
  \end{align*}
  Hence $\v^\star$ satisfies the first-order conditions of the problem defining
  $h^*(\betav)$, so it is a solution of that problem. The expression of the
  convex conjugate follows.
\end{proof}
Using this expression of the conjugate, we obtain \eqref{eq:dual_quad_logistic}
as formally stated below.
\begin{corollary}
  The prox-linear direction associated to a linear quadratic approximation of
  the objective
  \begin{align}\nonumber
   \d_S^t = \argmin_{\d \in \RR^p} 
    \frac{1}{2} \langle \d, 
    (J_S^t)^* H_S^t J_S^t \d \rangle 
    -  \langle \d, (J_S^t)^* \g_i^t \rangle 
    + \frac{m}{2\gamma} \|\d\|_2^2,
  \end{align}
  for $\ell_i$ the logistic loss,  
  can be computed as $\d_S^t = \frac{\gamma}{m}(J_S^t)^* \alphav_S^t$,
  $\alphav_S^t = \g_S^t - \betav_S^t$ for 
  \begin{align*}
    \betav_S^t = \argmin_{\betav \in \RR^{m\times k}} & 
    \frac{1}{2} \langle \betav ,
    (D_S^t)^{-1} \betav \rangle
    + \frac{\gamma}{2m} \|(J_S^t)^* (\g_S^t - \betav)\|_2^2, \\
    \mbox{s.t.} \quad & \ones_k^\top \betav_i = {\bm 0} \ \mbox{for} \ i \in \{1, \ldots m\}
  \end{align*}
  for $(D_S^t)^{-1}\betav = (\betav_1 /\sigma(\f_{i_1}^t), \ldots, \betav_m / \sigma(\f_{i_m}^t))$.
\end{corollary}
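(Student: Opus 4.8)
The plan is to obtain this corollary as the logistic-loss specialization of \cref{prop:dual_approx_quad_gen_cvx}, in which the generic pseudo-inverse quadratic $\frac{1}{2}\langle\betav, (H_S^t)^\dagger\betav\rangle$ and the range constraint $(\idm - H_S^t(H_S^t)^\dagger)\betav = \zeros$ both acquire the closed forms announced in \eqref{eq:dual_quad_logistic}, thanks to the explicit conjugate computed in \cref{prop:dual_approx_quad_logloss}. Concretely, I would replay the proof of \cref{prop:strict_cvx_dual} verbatim, merely substituting the step that evaluates the conjugate of the quadratic part.

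First I would recall from the preamble of this section that the primal subproblem equals $\argmin_{\d} a_S^t(-J_S^t\d) + \frac{m}{2\gamma}\|\d\|_2^2$, with $a_S^t(\z) = \frac{1}{2}\langle\z, H_S^t\z\rangle + \langle\z, \g_S^t\rangle$ separating as $a_S^t(\z) = \sum_{j=1}^m a_{i_j}^t(\z_j)$. Dualizing via \cref{prop:dual} with the quadratic regularizer $r_t(\d) = \frac{m}{2\gamma}\|\d\|_2^2$, whose conjugate is $r_t^*(\u) = \frac{\gamma}{2m}\|\u\|_2^2$, yields the primal-dual link $\d_S^t = \frac{\gamma}{m}(J_S^t)^*\alphav_S^t$ together with the dual problem $\alphav_S^t = \argmin_{\alphav}(a_S^t)^*(\alphav) + \frac{\gamma}{2m}\|(J_S^t)^*\alphav\|_2^2$, and since $a_S^t$ separates, $(a_S^t)^*(\alphav) = \sum_{j=1}^m (a_{i_j}^t)^*(\alphav_{i_j})$.

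The key step is to identify each $a_i^t$ with the quadratic approximation whose conjugate is furnished by \cref{prop:dual_approx_quad_logloss}. Writing the logistic loss as $\ell_i(\f) = -\langle\y_i, \f\rangle + \phi(\f)$, one has $\nabla\ell_i(\f_i^t) = \sigma(\f_i^t) - \y_i = \g_i^t$, so that $\y_i - \nabla\phi(\f_i^t) = -\g_i^t$ and hence $a_i^t(\v) = \langle\g_i^t, \v\rangle + \frac{1}{2}\langle\v, H_i^t\v\rangle$ coincides exactly with $q(\ell_i, \f_i^t)(\v)$. \cref{prop:dual_approx_quad_logloss} then gives $(a_i^t)^*(\alphav_i) = \frac{1}{2}\langle\alphav_i - \g_i^t, (D_i^t)^{-1}(\alphav_i - \g_i^t)\rangle$ when $\ones_k^\top(\alphav_i - \g_i^t) = 0$ and $+\infty$ otherwise, with $D_i^t = \diag(\sigma(\f_i^t))$. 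Substituting the change of variable $\betav_i \coloneqq \g_i^t - \alphav_i$, using that the quadratic form is even in its argument and the constraint sign-invariant, and stacking over the mini-batch through the block-diagonal operator $(D_S^t)^{-1}\betav = (\betav_1/\sigma(\f_{i_1}^t), \ldots, \betav_m/\sigma(\f_{i_m}^t))$, turns the dual into the constrained quadratic program over $\betav$ with $\alphav_S^t = \g_S^t - \betav_S^t$, as claimed.

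The computation is almost entirely bookkeeping; the only point requiring care is the identification $a_i^t = q(\ell_i, \f_i^t)$, that is, matching the linear term $\g_i^t = \sigma(\f_i^t) - \y_i$ against the $-(\y_i - \nabla\phi(\f_i^t))$ appearing in \cref{prop:dual_approx_quad_logloss}, together with the attendant sign convention $\betav_i = \g_i^t - \alphav_i$ versus the $\betav = \alphav - \nabla\ell(\f)$ used there. Since both the quadratic objective and the equality constraint are invariant under $\betav \mapsto -\betav$, this sign discrepancy is harmless, and no genuine obstacle arises beyond confirming that \cref{prop:dual_approx_quad_logloss} applies per sample and that the per-sample null-space constraint $\ones_k^\top\betav_i = 0$ is exactly the range constraint $(\idm - H_i^t(H_i^t)^\dagger)\betav_i = \zeros$.
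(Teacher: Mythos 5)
Your proposal is correct and takes essentially the same route as the paper: the paper's own (one-line) justification of this corollary is precisely your plan, namely replaying the dualization underlying \cref{prop:strict_cvx_dual} and \cref{prop:dual_approx_quad_gen_cvx} while substituting the closed-form conjugate supplied by \cref{prop:dual_approx_quad_logloss}. Your identification $a_i^t = q(\ell_i, \f_i^t)$ via $\g_i^t = \sigma(\f_i^t) - \y_i$, the harmless sign flip between $\betav_i = \g_i^t - \alphav_i$ and the proposition's $\betav = \alphav - \nabla \ell(\f)$, and the per-sample separability are exactly the bookkeeping the paper leaves implicit.
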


\subsection{Conjugate gradient method for quadratic approximations}\label{app:ssec:quad_dual_resolution}

\begin{proposition}\label{prop:ecqp_via_precond}
  Consider a quadratic problem under linear constraints of the form 
  \begin{align}\label{eq:ecqp}
    \min_{\betav\in \RR^d} \ & \frac{1}{2} \langle \betav,  Q \betav \rangle - \langle \betav, \c \rangle \\
    \mbox{s.t.} \ & (I - P) \betav= 0 \nonumber
  \end{align}
  for $Q$ semi-definite positive and $P$ an orthonormal projector, that is, $P =
  P^*$ and $PP = P$. Assume that $\betav\mapsto\frac{1}{2} \langle \betav,  Q
  \betav \rangle - \langle \betav, \c\rangle $ is bounded below.

  Any convergent first-order optimization algorithm applied to the unconstrained problem
  \begin{align}\label{eq:ecqp_unconstrained}
    \min_{\betav\in \RR^d} \ & \frac{1}{2} \langle \betav, P Q P\betav \rangle 
    - \langle \betav, P \c \rangle
  \end{align}
  and initialized at $\betav^0 = \zeros$ 
  converges to a solution of~\eqref{eq:ecqp}.
\end{proposition}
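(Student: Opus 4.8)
The plan is to reduce the constrained problem \eqref{eq:ecqp} to the unconstrained surrogate \eqref{eq:ecqp_unconstrained} by exploiting two structural facts: the iterates of a first-order method started at the origin never leave $\mathrm{range}(P)$, and on $\mathrm{range}(P)$ the two objectives coincide while off it the surrogate $g(\betav) \coloneqq \frac{1}{2}\langle \betav, PQP\betav \rangle - \langle \betav, P\c \rangle$ is flat along $\mathrm{null}(P)$.

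First I would show that every iterate is feasible. Since $P = P^*$ and $Q = Q^*$, the operator $PQP$ is symmetric and $\nabla g(\betav) = PQP\betav - P\c = P(QP\betav - \c) \in \mathrm{range}(P)$ for all $\betav$. A first-order method is characterized by the property that $\betav^k$ lies in $\betav^0 + \mathrm{span}\{\nabla g(\betav^0), \ldots, \nabla g(\betav^{k-1})\}$ (this holds for gradient descent, heavy-ball/momentum, and conjugate gradient alike). With $\betav^0 = \zeros \in \mathrm{range}(P)$ and every gradient in $\mathrm{range}(P)$, an immediate induction gives $\betav^k \in \mathrm{range}(P)$ for all $k$; as $\mathrm{range}(P)$ is closed, the limit $\betav^\star$ satisfies $P\betav^\star = \betav^\star$, i.e.\ $(I-P)\betav^\star = 0$.

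Next I would match the objectives. For feasible $\betav$ (so $P\betav = \betav$), using $P = P^*$ one checks $g(\betav) = \frac{1}{2}\langle \betav, Q\betav \rangle - \langle \betav, \c \rangle$, the objective of \eqref{eq:ecqp}. Moreover $g(\betav + \z) = g(\betav)$ whenever $P\z = 0$, because $PQP\z = 0$ and $\langle \z, P\c \rangle = \langle P\z, \c \rangle = 0$; hence $g$ is constant along $\mathrm{null}(P)$ and $g(\betav) = g(P\betav)$ for every $\betav$. Consequently $\inf_{\betav \in \RR^d} g(\betav) = \inf_{\betav \in \mathrm{range}(P)} g(\betav)$, which is exactly the optimal value of the constrained problem. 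Since $Q \succeq 0$ gives $PQP \succeq 0$, $g$ is convex, and the bounded-below assumption—transferred through the identity $g = (\text{constrained objective})$ on $\mathrm{range}(P)$—ensures $g$ is bounded below on all of $\RR^d$, so its minimum is attained.

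Finally, a convergent first-order method on the convex, bounded-below quadratic $g$ reaches a global minimizer $\betav^\star$ (for a convex function, stationarity is global optimality). By the feasibility step $\betav^\star \in \mathrm{range}(P)$, and by the matching step minimizing $g$ over $\mathrm{range}(P)$ is precisely minimizing the constrained objective over the feasible set; hence $\betav^\star$ solves \eqref{eq:ecqp}. The main obstacle is pinning down the precise oracle-model notion of ``first-order method'' that makes the span/induction argument rigorous—once that span property is granted, the remainder is routine linear algebra. A secondary point worth verifying is that bounded-belowness genuinely transfers so that the infimum is attained (equivalently $P\c \in \mathrm{range}(PQP)$), which the matching identity supplies.
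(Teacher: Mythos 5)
Your proof is correct and follows essentially the same approach as the paper's: both rest on the observation that $\nabla g(\betav) = PQP\betav - P\c \in \mathrm{range}(P)$, so the span property of first-order methods started at $\betav^0 = \zeros$ keeps every iterate (and hence the limit) in $\mathrm{range}(P)$, after which stationarity of the surrogate yields optimality for the constrained problem. The only difference is cosmetic: the paper certifies optimality of the limit via the KKT conditions of \eqref{eq:ecqp} with multiplier $\lambdav^\star = \zeros$, whereas you argue directly by comparing objective values through the identity $g(\betav) = g(P\betav)$; both closing steps are routine.
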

\begin{proof}
  First, note that problem~\eqref{eq:ecqp} is necessarily feasible as its
  constraints are satisfied for $\betav=\zeros$. 
  Moreover problem~\eqref{eq:ecqp} admits
  a minimizer since it is the minimization of a convex quadratic bounded below
  on a subspace of $\RR^d$. 

  Similarly, if $\betav\mapsto\frac{1}{2} \langle \betav,  Q \betav\rangle -
  \langle \betav, \c \rangle$ is bounded below, then necessarily $\c$ does not
  belong to the null space of $Q$, otherwise taking $\betav = t\c$,
  $t\rightarrow + \infty$, would lead to $-\infty$. Since the null space and the
  image of $Q$ are orthonormal spaces, $\c$ belongs to the image of $Q$, so that
  there exists $\d \in \RR^d$ satisfying $\c= Q\d$. The objective
  of~\eqref{eq:ecqp_unconstrained} can then be factorized as $\frac{1}{2}
  \langle \betav,  P Q P\betav \rangle - \langle \betav, P \c \rangle   =
  \frac{1}{2}\langle (P\betav-\d),  Q(P\betav -\d)\rangle  -
  \frac{1}{2}\langle \d, Q \d \rangle$. Hence, it is a convex quadratic bounded
  below, so it admits a minimizer.
  
  A point $\betav^{\star}$ is optimal for~\eqref{eq:ecqp} if there exists
  $\lambdav^{\star} \in \RR^d$ such that 
  \[
    P Q P \betav^{\star} - P\c + P\lambdav^{\star} - \lambdav^{\star} = \zeros, 
    \quad P\betav^{\star} = \betav^{\star}.
  \]
  In comparison, a convergent first-order optimization algorithm applied
  to~\eqref{eq:ecqp_unconstrained} converges to a point $\hat \betav$
  satisfying the first order optimality conditions
  of~\eqref{eq:ecqp_unconstrained}, that is,
  \[
    P QP \hat \betav  - P\c = \zeros.
  \]
  The iterates of any first-order optimization algorithm are built such that 
  \[
    \betav^k \in \mathrm{Span}(
    \betav^0, \nabla f(\betav^0),
    \betav^1, \nabla f(\betav^1), 
    \ldots,
    \betav^{k-1}, \nabla f(\betav^{k-1})
    ),
  \]
  where $f(\betav) = \frac{1}{2} \langle \betav,  P QP\betav \rangle  -
  \langle \betav,  P\c \rangle$. Denote $\mathcal{C} \coloneqq \{\betav :
  P\betav =\betav\}$, that is a subspace of $\RR^d$. We have that for any
  $\betav \in \RR^d$, $\nabla f(\betav) = PQP \betav - P\c \in \mathcal{C}$
  since $PP=P$. If $\betav^0 = \zeros$, then $\betav^0 \in \mathcal{C}$ and by
  induction we have that $\betav^k \in \mathcal{C}$. Therefore, $\hat \betav =
  \lim_{k \rightarrow +\infty}\betav^k$ satisfies $P\hat \betav =
  \lim_{k\rightarrow + \infty} P \betav^k = \lim_{k\rightarrow + \infty}
  \betav^k = \hat \betav$. Therefore, $\hat \betav$ satisfies
  \[
    P QP \hat \betav  - P\c = 0, \quad P\hat \betav = \hat \betav.
  \]
  It is therefore a solution of~\eqref{eq:ecqp} with associated
  $\lambdav^{\star} = \zeros$.
\end{proof}

The above proposition can directly be applied to the problems associated to
$\betav_S^t$ in Proposition~\ref{prop:dual_approx_quad_gen_cvx}
and~\ref{prop:dual_approx_quad_logloss} they are convex quadratic problem
unbounded below.

\begin{corollary}\label{cor:dual_approx_quad_gen_cvx}
  The prox-linear direction associated to a linear quadratic approximation of
  the objective
  \begin{align}\nonumber
   \d_S^t = \argmin_{\d \in \RR^p} 
    \frac{1}{2} \langle \d, 
    (J_S^t)^* H_S^t J_S^t \d \rangle 
    -  \langle \d, (J_S^t)^* \g_S^t \rangle 
    + \frac{m}{2\gamma} \|\d\|_2^2,
  \end{align}
  can be computed as $\d_S^t = \frac{\gamma}{m}(J_S^t)^* \alphav_S^t$,
  $\alphav_S^t = \g_S^t - \betav_S^t$ for $\betav_S^t$ solving
  \begin{align*}
    \betav_S^t = \argmin_{\betav \in \RR^{m\times k}} & 
    \frac{1}{2} \langle \betav , P 
    (H_S^t)^\dagger P  \betav \rangle
    + \frac{\gamma}{2m} \|(J_S^t)^* (\g_S^t - P\betav)\|_2^2, \
  \end{align*}
  for $P = H_S^t (H_S^t)^\dagger$ and $\betav_S^t$ computed by a conjugate
  gradient method initialized at ${\bm 0}$.
\end{corollary}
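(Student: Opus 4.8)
The plan is to recognize the stated unconstrained problem as the preconditioned reformulation produced by Proposition~\ref{prop:ecqp_via_precond}, applied to the equality-constrained dual QP of Proposition~\ref{prop:dual_approx_quad_gen_cvx}. First I would invoke Proposition~\ref{prop:dual_approx_quad_gen_cvx}, which already establishes that $\d_S^t = \frac{\gamma}{m}(J_S^t)^*\alphav_S^t$ with $\alphav_S^t = \g_S^t - \betav_S^t$, where $\betav_S^t$ minimizes $\frac{1}{2}\langle\betav, (H_S^t)^\dagger\betav\rangle + \frac{\gamma}{2m}\|(J_S^t)^*(\g_S^t - \betav)\|_2^2$ subject to $(\idm - H_S^t(H_S^t)^\dagger)\betav = \zeros$. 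Expanding the squared norm and discarding the $\betav$-independent constant, this is exactly the constrained QP~\eqref{eq:ecqp} with
\[
Q = (H_S^t)^\dagger + \tfrac{\gamma}{m}J_S^t(J_S^t)^*, \quad \c = \tfrac{\gamma}{m}J_S^t(J_S^t)^*\g_S^t, \quad P = H_S^t(H_S^t)^\dagger.
\]

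Second, I would check the hypotheses of Proposition~\ref{prop:ecqp_via_precond}. Since $H_S^t \succeq 0$, its pseudo-inverse is PSD and $P = H_S^t(H_S^t)^\dagger$ is the orthogonal projector onto $\operatorname{Im}(H_S^t)$, so $P = P^*$ and $PP = P$. The operator $Q$, a sum of two PSD terms, is PSD with $\operatorname{Im}(Q) = \operatorname{Im}((H_S^t)^\dagger) + \operatorname{Im}(J_S^t(J_S^t)^*)$; since $\c \in \operatorname{Im}(J_S^t(J_S^t)^*) \subseteq \operatorname{Im}(Q)$, writing $\c = Q\d$ shows that $\betav\mapsto\frac{1}{2}\langle\betav, Q\betav\rangle - \langle\betav, \c\rangle$ is bounded below, as required.

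Third, Proposition~\ref{prop:ecqp_via_precond} then guarantees that any convergent first-order method---in particular the conjugate gradient method, whose iterates stay in the span of the past gradients---initialized at $\zeros$ converges to a solution of the constrained QP when run on the unconstrained surrogate $\min_\betav \frac{1}{2}\langle\betav, PQP\betav\rangle - \langle\betav, P\c\rangle$. The remaining step is purely algebraic: using $P = P^*$ to substitute the above $Q$ and $\c$ and re-completing the square on the $J_S^t(J_S^t)^*$ part, the surrogate objective becomes, up to a constant, $\frac{1}{2}\langle\betav, P(H_S^t)^\dagger P\betav\rangle + \frac{\gamma}{2m}\|(J_S^t)^*(\g_S^t - P\betav)\|_2^2$, which is precisely the objective in the corollary. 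I expect the only real subtlety to be the boundedness-below verification (i.e. $\c \in \operatorname{Im}(Q)$, needed for the surrogate to admit a minimizer) together with tracking that the dropped constants do not move the argmin; everything else is a direct instantiation of the two cited propositions.
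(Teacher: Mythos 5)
Your proposal is correct and takes essentially the same route as the paper: the paper obtains this corollary precisely by applying Proposition~\ref{prop:ecqp_via_precond} to the equality-constrained QP of Proposition~\ref{prop:dual_approx_quad_gen_cvx}, with the identification $Q = (H_S^t)^\dagger + \frac{\gamma}{m}J_S^t(J_S^t)^*$, $\c = \frac{\gamma}{m}J_S^t(J_S^t)^*\g_S^t$ and $P = H_S^t(H_S^t)^\dagger$. Your write-up is in fact more complete than the paper's one-line justification, since you explicitly verify the orthogonal-projector and boundedness-below hypotheses (the paper's text even contains a typo, saying ``unbounded below'') and you track that the discarded constants do not change the argmin.
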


\begin{corollary}
  The prox-linear direction associated to a linear quadratic approximation of
  the objective
  \begin{align}\nonumber
   \d_S^t = \argmin_{\d \in \RR^p} 
    \frac{1}{2} \langle \d, 
    (J_S^t)^* H_S^t J_S^t \d \rangle 
    -  \langle \d, (J_S^t)^* \g_S^t \rangle 
    + \frac{m}{2\gamma} \|\d\|_2^2,
  \end{align}
  for $\ell_i$ the logistic loss, can be computed as $\d_S^t = \frac{\gamma}{m}(J_S^t)^* \alphav_S^t$,
  $\alphav_S^t = \g_S^t - \betav_S^t$ for $\betav_S^t$ solving
  \begin{align*}
    \betav_S^t = \argmin_{\betav \in \RR^{m\times k}} & 
    \frac{1}{2} \langle \betav , P 
    (D_S^t)^{-1} P  \betav \rangle
    + \frac{\gamma}{2m} \|(J_S^t)^* (\g_S^t - P\betav)\|_2^2, \
  \end{align*}
  for $P \betav = (\Pi_k \betav_1, \ldots, \Pi_k \betav_m)$, $\Pi_k =
  \frac{1}{n}\ones_k\ones_k^\top$ and $\betav_S^t$ computed by a conjugate
  gradient method initialized at ${\bm 0}$.
\end{corollary}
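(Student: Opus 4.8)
The plan is to recognize this corollary as the specialization of Corollary~\ref{cor:dual_approx_quad_gen_cvx} to the logistic loss: once the two logistic-specific objects, the orthogonal projector $P = H_S^t(H_S^t)^\dagger$ and the pseudo-inverse $(H_S^t)^\dagger$, are made explicit, the stated subproblem follows by direct substitution into the preconditioned unconstrained QP of that corollary, and the formulas $\d_S^t = \frac{\gamma}{m}(J_S^t)^*\alphav_S^t$ with $\alphav_S^t = \g_S^t - \betav_S^t$ are inherited verbatim. So the work reduces to identifying $P$ and $(H_S^t)^\dagger$ blockwise, sample by sample, since all operators are block-diagonal across the mini-batch.

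First I would compute the projector. Writing $\sigma_i \coloneqq \sigma(\f_i^t)$ and $H_i^t = \diag(\sigma_i) - \sigma_i\sigma_i^\top$ as in Proposition~\ref{prop:dual_approx_quad_logloss}, solving $H_i^t \v = \zeros$ gives $\sigma_{ij} v_j = \sigma_{ij}(\sigma_i^\top\v)$ for every coordinate $j$; since the softmax is strictly positive, this forces $v_j = \sigma_i^\top\v$ to be constant, so $\operatorname{null}(H_i^t) = \operatorname{Span}(\ones_k)$ and $H_i^t$ has rank $k-1$. Consequently the range of $H_i^t$ is $\ones_k^\perp$, and the orthogonal projector $P_i = H_i^t(H_i^t)^\dagger$ onto it is $\Pi_k = \idm_k - \frac{1}{k}\ones_k\ones_k^\top$. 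Stacking over the mini-batch yields $P\betav = (\Pi_k\betav_1, \ldots, \Pi_k\betav_m)$, which re-expresses the equality constraint $\ones_k^\top\betav_i = 0$ of Proposition~\ref{prop:dual_approx_quad_logloss} as $(\idm - P)\betav = \zeros$.

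Next I would show that $(H_S^t)^\dagger$ may be replaced by $(D_S^t)^{-1}$ inside the preconditioned quadratic form. This is the only genuinely delicate step, because $(H_i^t)^\dagger$ and $D_i^{-1} = \diag(\sigma_i)^{-1}$ do \emph{not} coincide as operators on all of $\RR^k$: the proof of Proposition~\ref{prop:dual_approx_quad_logloss} shows only that $D_i^{-1}\betav$ solves $H_i^t\v = \betav$ for $\betav \in \ones_k^\perp$, and $D_i^{-1}\betav$ need not lie in $\ones_k^\perp$, so it differs from the minimum-norm solution $(H_i^t)^\dagger\betav$ by a multiple of $\ones_k$. The resolution is that in the preconditioned objective the pseudo-inverse never appears alone but always sandwiched as $P(H_S^t)^\dagger P$. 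For $\betav' = P\betav$, each block $\betav_i'$ lies in $\ones_k^\perp$; writing $D_i^{-1}\betav_i' = (H_i^t)^\dagger\betav_i' + c\ones_k$ and using $\langle\betav_i', \ones_k\rangle = 0$ gives $\langle\betav_i', D_i^{-1}\betav_i'\rangle = \langle\betav_i', (H_i^t)^\dagger\betav_i'\rangle$. Hence $\langle\betav, P(H_S^t)^\dagger P\betav\rangle = \langle\betav, P(D_S^t)^{-1}P\betav\rangle$, so the quadratic form produced by Corollary~\ref{cor:dual_approx_quad_gen_cvx} equals the one stated here, with $(D_S^t)^{-1}\betav = (\betav_1/\sigma(\f_{i_1}^t), \ldots, \betav_m/\sigma(\f_{i_m}^t))$.

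Finally, the boundedness-below hypothesis of Proposition~\ref{prop:ecqp_via_precond} holds for this subproblem (as noted after that proposition), and the conjugate gradient method is a convergent first-order algorithm whose iterates remain in the span of past gradients; initializing it at $\bm 0$ therefore produces, by Proposition~\ref{prop:ecqp_via_precond}, a solution $\betav_S^t$ of the equality-constrained QP. Substituting this $\betav_S^t$ into $\alphav_S^t = \g_S^t - \betav_S^t$ and $\d_S^t = \frac{\gamma}{m}(J_S^t)^*\alphav_S^t$ closes the argument. The main obstacle is the third step, the identity $P(H_S^t)^\dagger P = P(D_S^t)^{-1}P$, since it requires carefully separating the range and null-space components of $D_i^{-1}\betav$ rather than naively equating $(H_i^t)^\dagger$ with $D_i^{-1}$.
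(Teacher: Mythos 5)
Your proof is correct, but it reaches the corollary by a different route than the paper. The paper never passes through the generic pseudo-inverse corollary for the logistic loss: it derives the constrained QP \eqref{eq:dual_quad_logistic} directly from Proposition~\ref{prop:dual_approx_quad_logloss}, whose proof computes the conjugate of the quadratic approximation of the logistic loss in closed form---producing the quadratic form $\frac{1}{2}\langle \betav, D^{-1}\betav\rangle$ together with the constraint $\ones_k^\top \betav = 0$ without ever invoking $(H_i^t)^\dagger$---and then applies Proposition~\ref{prop:ecqp_via_precond} to that constrained problem, reading the constraint $\ones_k^\top\betav_i = 0$ as $(\idm - P)\betav = \zeros$. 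You instead specialize Corollary~\ref{cor:dual_approx_quad_gen_cvx}, which obliges you to compute $H_i^t(H_i^t)^\dagger$ explicitly and to justify replacing $(H_S^t)^\dagger$ by $(D_S^t)^{-1}$ inside the sandwich $P\,\cdot\,P$. Your handling of that replacement is the genuinely nontrivial content of your argument and is exactly right: for $\betav_i' \perp \ones_k$, the vectors $D_i^{-1}\betav_i'$ and $(H_i^t)^\dagger \betav_i'$ both solve $H_i^t \v = \betav_i'$ and hence differ by a multiple of $\ones_k$, so the two quadratic forms agree on the range of $P$; since both sandwiched operators are symmetric, they coincide as operators, so the CG iterates (not just the minimizers) are identical. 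This buys a clarification the paper leaves implicit: the main text's claim that for the logistic loss ``the pseudo-inverse enjoys a closed form'' is false at the level of operators ($D_i^{-1} \neq (H_i^t)^\dagger$), and your identity $P (H_S^t)^\dagger P = P (D_S^t)^{-1} P$ is the precise statement behind it. Conversely, the paper's route is shorter because the conjugate computation delivers $D^{-1}$ directly. One further point in your favor: your identification $\Pi_k = \idm_k - \frac{1}{k}\ones_k\ones_k^\top$, the orthogonal projector onto $\ones_k^\perp$ (the range of $H_i^t$), is the correct one; the statement's $\Pi_k = \frac{1}{n}\ones_k\ones_k^\top$ is evidently a typo, since the constraint $\ones_k^\top \betav_i = 0$ of Proposition~\ref{prop:dual_approx_quad_logloss} must be rewritten as $\betav = P\betav$ with $P$ projecting onto the feasible subspace, which is what both your argument and Proposition~\ref{prop:ecqp_via_precond} require.
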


In practice, the matrix $D^{-1}$ may also be ill-conditioned as it consists of
the diagonal of the reciprocal of the softmax, whose values may be close to 0. To
avoid this problem, we precondition the problem associated with $\betav_S^t$ by
$D^{1/2}$. 

\subsection{Prox-linear directions define critical points}
We recall here that prox-linear directions define critical points. This will
help us refine the results about descent directions. We state it for individual
functions for simplicity. The result can readily be generalized for
mini-batches. Nullity of $\d(\gamma \ell_i, f_i)(\w^t)$ can also be linked to
the fact that $\w^t$ is close to a stationary point as shown
by~\citet{drusvyatskiy2019efficiency}.
\begin{proposition}\label{prop:critical_point} If $\w^\star$ is a minimum of
  $\ell_i \circ f_i$ then the prox-linear direction~\eqref{eq:subproblem_i}
  $\d(\gamma \ell_i, f_i)(\w^\star)$ or its quadratic
  approximation~\eqref{eq:quadratic_subproblem_minibatch} $\d(\gamma q_i^\star,
  f_i)(\w^\star)$, with $q_i^\star$ the quadratic approximation of $\ell_i$
  around $f_i(\w^\star)$, is zero. On the other hand, if $\d(\gamma q_i^\star,
  f_i)(\w^\star)= {\bm 0}$, then $\nabla (\ell_i\circ f_i)(\w^\star) = {\bm 0}$,
  that is $\w^\star$ is a critical point. 
\end{proposition}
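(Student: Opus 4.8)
The plan is to reduce everything to a single identity: the gradient of each subproblem objective, evaluated at $\d = \zeros$, equals $-\nabla h_i(\w^\star)$, where $h_i = \ell_i \circ f_i$. Once this identity is in hand, both the forward and converse implications follow from the fact that, for a strongly convex subproblem, the stationarity condition $\nabla(\cdot)(\d) = \zeros$ is both necessary and sufficient for $\d$ to be the (unique) minimizer.

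First I would fix $\w^\star$ and abbreviate $\f_i^\star \coloneqq f_i(\w^\star)$, $J_i^\star \coloneqq \partial f_i(\w^\star)$, $H_i^\star \coloneqq \nabla^2 \ell_i(\f_i^\star)$. Writing the two subproblem objectives as $G(\d) \coloneqq \ell_i(\f_i^\star - J_i^\star \d) + \tfrac{1}{2\gamma}\|\d\|_2^2$ and $\tilde G(\d) \coloneqq q_i^\star(\f_i^\star - J_i^\star \d) + \tfrac{1}{2\gamma}\|\d\|_2^2$, I note that $\ell_i$ is convex and $q_i^\star$ is convex (its Hessian $H_i^\star \succeq 0$ since $\ell_i$ is convex). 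Composing each with the affine map $\d \mapsto \f_i^\star - J_i^\star \d$ and adding the $\gamma^{-1}$-strongly convex term $\tfrac{1}{2\gamma}\|\d\|_2^2$ makes both $G$ and $\tilde G$ strongly convex, hence each has a unique minimizer characterized by a vanishing gradient.

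The key computation is then the chain rule at $\d = \zeros$, which gives $\nabla G(\zeros) = -(J_i^\star)^* \nabla \ell_i(\f_i^\star)$ and $\nabla \tilde G(\zeros) = -(J_i^\star)^* \nabla q_i^\star(\f_i^\star)$. Because $q_i^\star$ is the second-order Taylor expansion of $\ell_i$ at $\f_i^\star$, it agrees with $\ell_i$ to first order there, so $\nabla q_i^\star(\f_i^\star) = \nabla \ell_i(\f_i^\star)$; combined with the chain-rule identity $\nabla h_i(\w^\star) = (J_i^\star)^* \nabla \ell_i(\f_i^\star)$ recalled earlier, both gradients equal $-\nabla h_i(\w^\star)$.

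The three claims now drop out. If $\w^\star$ minimizes $h_i$, then $\nabla h_i(\w^\star) = \zeros$, so $\nabla G(\zeros) = \nabla \tilde G(\zeros) = \zeros$, and by sufficiency of stationarity for the strongly convex subproblems, $\zeros$ is the unique minimizer, i.e. $\d(\gamma \ell_i, f_i)(\w^\star) = \d(\gamma q_i^\star, f_i)(\w^\star) = \zeros$. Conversely, if $\d(\gamma q_i^\star, f_i)(\w^\star) = \zeros$, then $\zeros$ minimizes $\tilde G$, so by necessity $\nabla \tilde G(\zeros) = \zeros$, whence $\nabla h_i(\w^\star) = \zeros$. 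I do not expect a genuine obstacle here: everything hinges on the first-order accuracy of the quadratic model and the equivalence of stationarity with optimality for convex problems. The only point requiring care is that the converse is stated for $q_i^\star$ precisely because $q_i^\star$ is an explicit quadratic, hence globally differentiable, so that $\nabla \tilde G(\zeros) = \zeros$ is unambiguous even when $\ell_i$ is nonsmooth; for differentiable $\ell_i$ (as for the squared and logistic losses) the same argument applies verbatim to the full prox-linear direction.
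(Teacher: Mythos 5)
Your proof is correct and takes essentially the same route as the paper's: both arguments reduce to the identity that the subproblem objective's gradient at $\d = \zeros$ equals $-\nabla(\ell_i \circ f_i)(\w^\star)$, combined with the fact that stationarity is necessary and sufficient for optimality of the strongly convex subproblems. The only cosmetic difference is that the paper handles the converse by writing out the closed form $\d(\gamma q_i^\star, f_i)(\w^\star) = (\gamma^{-1}\idm + Q)^{-1}\nabla(\ell_i \circ f_i)(\w^\star)$ with $Q \succeq 0$, which is just your first-order optimality condition for $\tilde G$ solved explicitly.
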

\begin{proof}
  Suppose $\w^\star$ is a minimum of $\ell_i \circ f_i$. Denote $F(\d) =
  \ell_i(f_i(\w^\star)-\partial f_i(\d)) + \gamma \|\d\|_2^2/2$. Since
  $\w^\star$ is the minimum of $\ell_i \circ f_i$, ${\bm 0}$ is the minimizer of $F$, hence
  since $\d(\gamma \ell_i, f_i)(\w^\star)$ is defined as the minimizer of $F$ it
  must be ${\bm 0}$. For $\d(\gamma q_i^\star, f_i)(\w^\star)$, then $\nabla
  (\ell_i\circ f_i)(\w^\star) = {\bm 0}$ so the direction reduces to compute
  $\d(\gamma q_i^\star, f_i)(\w^\star) = \argmin_{\d \in \RR^p} \frac{1}{2}
  \langle\d, Q\d\rangle +\frac{1}{2\gamma}\|\d\|_2 ={\bm 0}$ for $Q = \partial
  f_i(\w^\star)^* \nabla^2 \ell_i(f_i(\w^\star)) \partial f_i(\w^\star)\succeq
  0$. On the other hand, we have for any $\w^\star$, $\d(\gamma q_i^\star,
  f_i)(\w^\star) = (\gamma^{-1}\idm + Q)^{-1} \nabla (\ell_i \circ f_i)(\w^*)$,
  so $\d(\gamma q_i^\star, f_i) ={\bm 0} \iff \nabla (\ell_i\circ f_i)(\w^\star)
  = {\bm 0}$.
\end{proof}

\subsection{Proof of Proposition \ref{prop:descent_direction} (descent
direction, exact case)}\label{app:proof_descent_dir} 

We show a slightly stronger result, namely that, unless $\d(\gamma q_S^t,
\f_S)(\w^t) ={\bm 0}$, that is, $\w^t$ is a critical point of $\ell_S \circ
\f_S$ as shown in Proposition~\ref{prop:critical_point}, the direction
$\d(\gamma q_S^t, \f_S)(\w^t)$ satisfies $\langle\d(\gamma q_S^t, \f_S)(\w^t),
\nabla (\ell_S\circ \f_S)(\w^t)\rangle > 0$. The result claims in the main text
holds naturally for $\d(\gamma q_S^t, \f_S)(\w^t)= {\bm 0}$.

Denote 
\[
  F_S^t(\d) \coloneqq \ell_S(f_S(\w^t) - \partial f_S(\w^t) \d ) +
\frac{m}{2\gamma}\|\d\|_2^2
\]
and $\d^\star \coloneqq \d(\gamma \ell_S, f_S)(\w)
= \argmin_{\d \in \RR^p} F_S^t(\d) $. Since $F_S^t$ is strongly convex and
assuming $\d^\star \neq 0$, we have $F_S^t(\d^\star) > F_S^t({\bm 0}) +
\nabla F_S^t({\bm 0})^\top \d^\star$, that is, $\nabla F_S^t({\bm 0})^\top
\d^\star < F_S^t(\d^\star) - F_S^t({\bm 0}) < 0$ since $\d^\star
=\argmin_{\d \in \RR^p} F_S^t(\d)$. Note that $\nabla F_S^t({\bm 0}) = -\partial
f_S(\w^t)^*\nabla \ell_S(f_S(\w^t)) = - \nabla (\ell_S \circ f_S)(\w^t)$. Hence,
$\langle \nabla (\ell_S \circ f_S)(\w^t) , -\d^\star  \rangle < 0$.

For the quadratic case, a similar reasoning applies. Denote now 
\[
  G_S^t(\d)
\coloneqq
q_S^t(\f_S^t - \partial f_S(\w^t)\d) + \frac{m}{2\gamma}\|\d\|_2^2.
\]
We have that $G_S^t$ is strongly convex with minimizer $\d(\gamma q_S,
f_S)(\w^t)$. Moreover, we have that $\nabla G_S^t({\bm 0}) = -\partial
f_S(\w^t)^*\nabla \ell_S(f_S(\w^t)) = - \nabla (\ell_S \circ f_S)(\w^t)$. Hence,
as above, assuming $\d(\gamma q_S, \f_S)(\w^t) \neq 0$, we get that $\langle
\nabla (\ell_S \circ f_S)(\w^t) , -\d(\gamma q_S, f_S)(\w^t) \rangle < 0$.

\subsection{Proof of Proposition \ref{prop:descent_dir_cg_primal} (descent
direction, inexact case)}
~
\paragraph{Primal case.}

\begin{proposition}
  Denote $\d_S^{t, \tau}$ the approximate solution of the following problem
  computed by a conjugate gradient method after $\tau$ iterations,
  \begin{align*}
    \argmin_{\d \in \RR^p} \
    \frac{1}{2} \langle \d, J^* H J \d \rangle
    - \langle \d, J^* \g \rangle
    + \frac{m}{2\gamma}\|\d\|_2^2
  \end{align*}
  for $J \coloneqq \partial \f_S(\w^t)$, $H \coloneqq \nabla^2 \ell_S(\f^t_i)$,
  $\g \coloneqq \nabla \ell_S(\f_i^t)$, $\f_S^t \coloneqq f_S(\w^t)$, $\gamma >
  0$, $m \coloneqq |S|$. Then $-\d_S^{t, \tau}$ is a descent direction for $\ell_S\circ
  f_S$ at $\w^t$. 
\end{proposition}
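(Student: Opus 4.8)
The plan is to reduce the claim to a basic monotonicity property of the conjugate gradient method applied to a positive-definite quadratic. First I would rewrite the subproblem in standard form as $\min_{\d \in \RR^p} \phi(\d)$ with $\phi(\d) = \frac{1}{2}\langle \d, A\d\rangle - \langle \d, \b\rangle$, where $A \coloneqq J^* H J + \frac{m}{\gamma}\idm$ and $\b \coloneqq J^*\g$. Since each $\ell_i$ is convex, $H = \nabla^2 \ell_S(\f_S^t) \succeq 0$, so $J^* H J \succeq 0$ and hence $A \succeq \frac{m}{\gamma}\idm \succ 0$. Thus $\phi$ is strongly convex and $A$ is positive definite, which is exactly the regime in which CG is well defined and the quadratic has a unique minimizer.

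Next I would record the chain-rule identity $\nabla h_S(\w^t) = (J_S^t)^* \nabla \ell_S(\f_S^t) = J^*\g = \b$, already used in the exact case (proof of Proposition~\ref{prop:descent_direction}). Because $-\d_S^{t,\tau}$ is a descent direction precisely when $\langle \d_S^{t,\tau}, \nabla h_S(\w^t)\rangle \ge 0$, the whole statement reduces to showing $\langle \d_S^{t,\tau}, \b\rangle \ge 0$.

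The key step is the CG descent property. Initialized at $\d^0 = \zeros$, the $\tau$-th CG iterate $\d_S^{t,\tau}$ is the minimizer of $\phi$ over the Krylov subspace $\mathcal{K}_\tau = \mathrm{span}\{\b, A\b, \ldots, A^{\tau-1}\b\}$ (the affine search space $\d^0 + \mathcal{K}_\tau$ reduces to $\mathcal{K}_\tau$ here), which contains the origin. In particular $\phi(\d_S^{t,\tau}) \le \phi(\zeros) = 0$, i.e. $\frac{1}{2}\langle \d_S^{t,\tau}, A\d_S^{t,\tau}\rangle \le \langle \d_S^{t,\tau}, \b\rangle$. Since $A \succ 0$ the left-hand side is nonnegative, so $\langle \d_S^{t,\tau}, \b\rangle \ge \frac{1}{2}\langle \d_S^{t,\tau}, A\d_S^{t,\tau}\rangle \ge 0$. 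Substituting $\b = \nabla h_S(\w^t)$ finishes the argument.

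I do not anticipate a genuine obstacle. The only care needed is to invoke the correct CG optimality property, namely exact minimization over the Krylov subspace passing through $\d^0 = \zeros$, rather than an asymptotic convergence guarantee, since the conclusion must hold for every finite $\tau$. The degenerate case $\b = \zeros$ (where $\w^t$ is already critical by Proposition~\ref{prop:critical_point}, forcing $\d_S^{t,\tau} = \zeros$) is covered by the same inequality with equality $\langle \d_S^{t,\tau}, \b\rangle = 0$.
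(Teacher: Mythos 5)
Your proof is correct, but it takes a genuinely different route from the paper's. Both arguments begin identically, rewriting the subproblem as $\min_{\d \in \RR^p} \frac{1}{2}\langle \d, A\d\rangle - \langle \d, \b\rangle$ with $A = J^* H J + (m/\gamma)\idm \succ 0$ and $\b = J^*\g = \nabla(\ell_S \circ f_S)(\w^t)$, and reducing the claim to $\langle \d_S^{t,\tau}, \b\rangle \ge 0$. The paper then invokes Lemma~\ref{lem:cg_positive}, an induction on the explicit CG recursion: the coefficients $a^\tau, b^\tau$ are nonnegative and the residuals are mutually orthogonal, so every conjugate direction satisfies $\langle \p^\tau, \b\rangle \ge 0$, and the iterate $\x^\tau$, being a nonnegative combination of the $\p^s$, inherits $\langle \x^\tau, \b\rangle \ge 0$. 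You instead use the Krylov-subspace optimality of CG: initialized at $\zeros$, the $\tau$-th iterate exactly minimizes the quadratic over a linear subspace containing the origin, so $\phi(\d_S^{t,\tau}) \le \phi(\zeros) = 0$, which together with $A \succeq 0$ yields $\langle \d_S^{t,\tau}, \b\rangle \ge \frac{1}{2}\langle \d_S^{t,\tau}, A \d_S^{t,\tau}\rangle \ge 0$. Your argument is shorter and strictly more general: it applies verbatim to any inner solver whose iterates do not increase the subproblem objective relative to the zero initialization (truncated CG, gradient descent on the subproblem, etc.), since all it needs is $\phi(\d) \le \phi(\zeros)$. What the paper's finer-grained lemma buys is reuse: the sign property of the individual search directions, $\langle \p^\tau, \c\rangle \ge 0$, is precisely what is needed in the dual case of Proposition~\ref{prop:descent_dir_cg_primal}, where the primal direction is the affine image $\frac{\gamma}{m}(J^*\g - J^*\betav^\tau)$ of the dual iterate, the relevant quantity is a limit comparison along the dual trajectory, and your objective-monotonicity argument does not transfer directly.
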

\begin{proof}
  The problem at hand is a convex quadratic of the form 
$
\min_{\d \in \RR^p} \frac{1}{2}\langle  \d, Q \d \rangle  
- \langle  \c, \d \rangle 
$
for $Q \coloneqq J^* H J + (m/\gamma)\idm $ and $\c \coloneqq J^* \g$. Using
Lemma~\ref{lem:cg_positive}, we have that the $\tau$\textsuperscript{th} iterate
of a conjugate gradient method applied to the above problem satisfies $\langle
\d_S^{t, \tau}, \c\rangle \geq 0$. Since $\c = \nabla h_S(\w^t)$, the result
follows. 
\end{proof}

\paragraph{Dual case.}
\begin{proposition}
  Consider computing the prox-linear direction
  \begin{align*}
    \d_S^t = \argmin_{\d \in \RR^p} \
    \frac{1}{2} \langle \d, J^* H J \d \rangle
    - \langle \d, J^* \g \rangle
    + \frac{m}{2\gamma}\|\d\|_2^2
  \end{align*}
  for $J = \partial \f_S(\w^t)$, $H = \nabla^2 \ell_S(\f^t_i)$, $\g = \nabla
  \ell_S(\f_S^t)$, $\f_S^t = f_S(\w^t)$, $\gamma > 0$, via its dual formulation
  as $\d_S^t = \frac{\gamma}{m} J^*\alphav_S^t$, $\alphav_S^t = \g- \betav_S^t$
  for
  \begin{align}\label{eq:dual_cg_gen}
    \betav_S^t = \argmin_{\betav \in \RR^k} \
    \frac{1}{2} \langle \betav, P\left(H^\dagger
    + \frac{\gamma}{m} JJ^*\right) P \betav  \rangle
    - \frac{\gamma}{\beta} \langle \betav, P JJ^* \g \rangle,
  \end{align}
  and $P = HH^\dagger$ as presented in
  Corollary~\ref{cor:dual_approx_quad_gen_cvx}. Let $\betav_S^{t, \tau}$ be the
  $\tau$\textsuperscript{th} iteration of a conjugate gradient method applied
  to~\eqref{eq:dual_cg_gen} with associated primal direction $\d_S^{t, \tau} =
  \frac{\gamma}{m} ( J^* \g - J^* \betav_S^{t, \tau} )$. We have
  $
    \langle {\d_S^{t, \tau}}, \nabla (\ell_i \circ f_i)(\w^t) \rangle \geq 0
  $
  so that $-\d_S^{t, \tau}$ is a descent direction for $\ell_S \circ f_S$ at
  $\w^t$.
\end{proposition}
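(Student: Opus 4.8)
The plan is to show that the descent inner product is not merely nonnegative but in fact \emph{decreases} monotonically in $\tau$ towards the exact-case value, which is itself nonnegative by Proposition~\ref{prop:descent_direction}; the inexact directions will therefore be even better aligned with the gradient than the exact one. Abbreviating $\d^\tau \coloneqq \d_S^{t,\tau}$ and $\betav^\tau \coloneqq \betav_S^{t,\tau}$, I would first recall that $\nabla(\ell_S \circ f_S)(\w^t) = J^*\g$ and substitute the dual-to-primal map $\d^\tau = \frac{\gamma}{m}(J^*\g - J^*\betav^\tau)$ to obtain
\[
\langle \d^\tau, J^*\g\rangle = \frac{\gamma}{m}\left(\|J^*\g\|_2^2 - \langle \betav^\tau, JJ^*\g\rangle\right).
\]
By Proposition~\ref{prop:ecqp_via_precond}, the conjugate-gradient iterates for the preconditioned dual subproblem of Corollary~\ref{cor:dual_approx_quad_gen_cvx} remain in the range of the projector $P = HH^\dagger$, i.e.\ $P\betav^\tau = \betav^\tau$. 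Since $P = P^*$ and the dual linear term is $\tilde\c = \frac{\gamma}{m}PJJ^*\g$, this gives $\langle\betav^\tau, JJ^*\g\rangle = \frac{m}{\gamma}\langle\betav^\tau,\tilde\c\rangle$, so the whole question reduces to upper-bounding $\rho_\tau \coloneqq \langle\betav^\tau,\tilde\c\rangle$ by $\frac{\gamma}{m}\|J^*\g\|_2^2$.

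The main obstacle is precisely here: Lemma~\ref{lem:cg_positive} only yields $\rho_\tau \geq 0$, the \emph{wrong} direction for an upper bound. Instead I would use the sharper fact that $\rho_\tau = \langle\betav^\tau,\tilde\c\rangle$ is \emph{monotonically non-decreasing} in $\tau$ with $\rho_0 = 0$, which is the same computation that underlies Lemma~\ref{lem:cg_positive}: writing $\betav^{\tau+1} = \betav^\tau + \alpha_\tau p_\tau$ with CG step $\alpha_\tau \geq 0$, conjugacy and residual orthogonality give $\langle p_\tau, \tilde\c\rangle = \|r_\tau\|_2^2 \geq 0$, whence $\rho_{\tau+1} - \rho_\tau = \alpha_\tau\|r_\tau\|_2^2 \geq 0$. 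To justify this I must first note that, although $\tilde Q = P(H^\dagger + \frac{\gamma}{m}JJ^*)P$ is only positive semidefinite on all of $\RR^{m\times k}$, it is positive definite on $\mathrm{range}(P)$ (where $H^\dagger$ is invertible); since the iterates live there by Proposition~\ref{prop:ecqp_via_precond}, CG behaves as in the nonsingular case and $\rho_\tau$ increases to its optimal value $\rho_\infty = \langle\betav^\star,\tilde\c\rangle$.

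Finally I would evaluate the limit through the exact case. At optimality the dual-primal link reads $\betav^\star = \g - \alphav^\star$ with $\d^\star = \frac{\gamma}{m}J^*\alphav^\star$ the exact prox-linear direction, so
\[
\tfrac{\gamma}{m}\|J^*\g\|_2^2 - \rho_\infty = \tfrac{\gamma}{m}\langle \g - \betav^\star, JJ^*\g\rangle = \langle \d^\star, J^*\g\rangle = \langle \d^\star, \nabla(\ell_S \circ f_S)(\w^t)\rangle \geq 0,
\]
where nonnegativity is Proposition~\ref{prop:descent_direction}. Combining this with the monotonicity $\rho_\tau \leq \rho_\infty$ yields
\[
\langle\d^\tau, J^*\g\rangle = \tfrac{\gamma}{m}\|J^*\g\|_2^2 - \rho_\tau \;\geq\; \tfrac{\gamma}{m}\|J^*\g\|_2^2 - \rho_\infty = \langle\d^\star, J^*\g\rangle \geq 0,
\]
so $-\d^\tau$ is a descent direction for every $\tau$. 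In summary, the two ingredients that do the real work are the restriction to $\mathrm{range}(P)$ (turning the semidefinite dual quadratic into an effectively definite one) and the monotone non-decrease of $\langle\betav^\tau,\tilde\c\rangle$ up to its optimum; the exact-case descent result of Proposition~\ref{prop:descent_direction} then closes the argument.
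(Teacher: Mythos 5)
Your proof is correct, and it rests on the same three pillars as the paper's own argument: (i) the monotone non-decrease of $\langle \betav^\tau, \c\rangle$ along CG iterations (a consequence of $\langle \p^\tau, \c\rangle \geq 0$ and nonnegative step sizes $a^\tau$), (ii) identification of the CG limit with the exact constrained dual solution via Proposition~\ref{prop:ecqp_via_precond} and the dual-primal link of Corollary~\ref{cor:dual_approx_quad_gen_cvx}, and (iii) the exact-case descent result of Proposition~\ref{prop:descent_direction}. The only structural difference is that the paper runs this reasoning by contradiction (if $\langle \betav^{\tau_0}, JJ^*\g\rangle$ ever exceeded $\langle \g, JJ^*\g\rangle$, monotonicity would propagate the violation to the limit and contradict the exact case), whereas you argue directly that $\rho_\tau \leq \rho_\infty$ and evaluate $\rho_\infty$ through the exact case --- logically the contrapositive of the same argument. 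Your formulation buys two small extras: the sharper CG identity $\langle \p^\tau, \c\rangle = \|\r^\tau\|_2^2$ (the paper's Lemma~\ref{lem:cg_positive} only establishes nonnegativity, which suffices), and the quantitative strengthening
\begin{equation*}
\langle \d_S^{t,\tau}, \nabla(\ell_S \circ f_S)(\w^t)\rangle \;\geq\; \langle \d_S^{t}, \nabla(\ell_S \circ f_S)(\w^t)\rangle \;\geq\; 0,
\end{equation*}
i.e., every inexact direction is at least as well aligned with the gradient as the exact prox-linear direction --- a fact the paper's contradiction argument never makes explicit. You also spell out why CG converges here despite the dual quadratic being only positive semidefinite on $\RR^{m\times k}$ (positive definiteness on the range of $P$, where the iterates live), a point the paper leaves implicit when it passes to the limit $\tau \to \infty$.
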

\begin{proof}
  For simplicity denote $\tilde \gamma = \gamma/m$. Consider
  problem~\eqref{eq:dual_cg_gen} in a canonical form
  \[
    \min_{\betav \in \RR^k} \ 
    \frac{1}{2} \langle \betav, Q \betav  \rangle
    - \langle \betav, \c \rangle,
  \]
  for $Q = P(H^\dagger + \tilde \gamma JJ^*)P$, $\c = \tilde \gamma P JJ^* \g$. Consider
  $\betav^\tau$ the $\tau$\textsuperscript{th} iteration of a conjugate gradient
  method applied to the above problem whose iterations are presented in
  Lemma~\ref{lem:cg_positive}. Our goal is to show that for any $\tau \geq 0$,
  $\langle {\d^\tau},  \nabla (\ell_S \circ f_S)(\w^t)\rangle \geq 0$ for
  $\d^\tau =  \tilde \gamma ( J^* \g - J^* \betav^\tau )$, which reads
  \[
    \langle \betav^\tau, JJ^* \g \rangle \leq \langle \g, JJ^*\g \rangle.
  \]
  Note that given the forms of $Q$ and $\c$ above, the iterates of a conjugate
  gradient method satisfy $\betav^\tau = P \betav^\tau$ with $P$ an orthonormal
  projector satisfying $P = P^*$. The above condition is then equivalent to
  $\tilde \gamma^{-1}\langle \betav^\tau, \c \rangle = \langle \betav^\tau, PJJ^* \g
  \rangle  \leq \langle \g, JJ^*\g \rangle.$ We proceed by contradiction and
  assume there exists $\tau_0 \in \{0, 1, \ldots, +\infty\}$ such that 
  \begin{equation}
    \label{eq:contradiction_gen}
    \langle \betav^{\tau_0}, JJ^* \g \rangle > \langle \g, JJ^*\g \rangle.
  \end{equation}
  Recall that, with the notations of Lemma~\ref{lem:cg_positive}, for any $\tau
  \geq 0$, $\tilde \gamma \langle \p^\tau, JJ^* \g \rangle = \langle \p^\tau,
  \c\rangle \geq 0$. If~\eqref{eq:contradiction_gen} is true, then, for any $\tau >
  \tau_0$, since $\betav^\tau = \betav^{\tau_0} + \sum_{s=\tau_0+1}^\tau a_s
  \p^{s-1}$, $a_s \geq 0$ and $\tilde \gamma \geq 0$, we have that
  $
    \langle \betav^{\tau}, JJ^* \g \rangle > \langle \g, JJ^*\g \rangle.
  $
  Taking $\tau \rightarrow +\infty$, we have then 
  \begin{align*}
    0 \leq \lim_{t\rightarrow + \infty}\langle J^* (\betav^\tau - \g), J^* \g \rangle 
    = \langle J^* (\betav^\star - \g), J^* \g \rangle 
    & = \langle -\d^\star, J^* \g\rangle  ,
  \end{align*}
  for $\d^\star$ the prox-linear direction. This contradicts
  Proposition~\ref{prop:descent_direction} where in the proof we showed that
  the prox-linear direction satisfies $ \langle \d^\star, J^* \g\rangle > 0$, unless
  $\d^\star={\bm 0}$, in which the case, the claim holds trivially. Hence, we
  have shown the claim, i.e., that for any $\tau \geq 0$, $\langle \betav^\tau,
  JJ^* \g \rangle \leq  \langle \g, JJ^*\g \rangle$. Therefore the output primal
  direction $\d^\tau$ satisfies $\langle {\d^\tau},  \nabla (\ell_S \circ
  f_S)(\w^t)\rangle \geq 0$.
\end{proof}

\begin{lemma}\label{lem:cg_positive}
  Consider the iterations of a conjugate gradient method for solving 
  $\min_{\x\in \RR^d} \frac{1}{2} \langle \x,  Q \x\rangle - \langle \c,  \x
  \rangle$, i.e., starting from 
  $\x^0 \coloneqq {\bm 0}$ 
  and
  $\r^0 \coloneqq \p^0 \coloneqq \c$,
    \begin{align*}
      a^\tau &\coloneqq \langle \r^{\tau-1},  \r^{\tau-1} \rangle 
      / \langle \p^{\tau-1}, Q \p^{\tau-1}\rangle \\ 
      \x^\tau &\coloneqq \x^{\tau-1} + a^\tau \p^{\tau-1} \\
      \r^\tau &\coloneqq \r^{\tau-1} - a^\tau Q \p^{\tau-1} \\
      b^\tau &\coloneqq \langle \r^\tau,  \r^\tau \rangle/
       \langle \r^{\tau-1}, \r^{\tau-1}\rangle\\
      \p^\tau &\coloneqq \r^\tau + b^\tau \p^{\tau-1}.
    \end{align*}
  Then, for any $\tau \geq 0$, we have $\langle \p^\tau, \c\rangle \geq 0$ and
    \[
      \langle \x^\tau, \c \rangle \geq 0.
    \]
  \end{lemma}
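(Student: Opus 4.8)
The plan is to exploit the classical orthogonality structure of the conjugate gradient iterates together with the positive semidefiniteness of $Q$. First I would observe that since $\x^0 = \zeros$, the initial residual is $\r^0 = \c$, and that the recursions preserve the residual identity $\r^\tau = \c - Q\x^\tau$. The engine of the argument is the well-known mutual orthogonality of the residuals, $\langle \r^i, \r^j \rangle = 0$ for $i \neq j$, which I would invoke as a standard CG fact (it follows by the usual simultaneous induction with the conjugacy relations $\langle \p^i, Q\p^j\rangle = 0$). The only consequence I actually need is $\langle \r^j, \r^0 \rangle = 0$ for $j \geq 1$.

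Next I would unroll the recursion $\p^\tau = \r^\tau + b^\tau \p^{\tau-1}$ to express each search direction as a combination $\p^\tau = \sum_{j=0}^\tau c_j \r^j$, where $c_\tau = 1$ and $c_j = \prod_{s=j+1}^\tau b^s$ for $j < \tau$. Every coefficient is nonnegative, since each $b^s = \langle \r^s, \r^s\rangle / \langle \r^{s-1}, \r^{s-1}\rangle \geq 0$. Pairing $\p^\tau$ against $\c = \r^0$ and using residual orthogonality annihilates all terms except $j=0$, leaving $\langle \p^\tau, \c \rangle = c_0 \langle \r^0, \r^0\rangle = \big(\prod_{s=1}^\tau b^s\big)\|\c\|_2^2 \geq 0$, which is the first claim.

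For the second claim I would write $\x^\tau = \sum_{s=1}^\tau a^s \p^{s-1}$ (again using $\x^0 = \zeros$), and note that each stepsize $a^s = \langle \r^{s-1}, \r^{s-1}\rangle / \langle \p^{s-1}, Q\p^{s-1}\rangle$ is nonnegative because $Q \succeq 0$ makes the denominator nonnegative while the numerator is a squared norm. Then $\langle \x^\tau, \c \rangle = \sum_{s=1}^\tau a^s \langle \p^{s-1}, \c\rangle \geq 0$ follows immediately from the first claim applied at indices $s-1$.

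The main obstacle is really just careful sign bookkeeping rather than any deep step: one must confirm that the $b^s$ and $a^s$ are genuinely nonnegative, which hinges on $Q \succeq 0$, and handle the degenerate case in which a denominator $\langle \p^{s-1}, Q\p^{s-1}\rangle$ vanishes, i.e. CG terminates early. In that event the current iterate is already the exact minimizer, the later coefficients are taken to be zero, and the two inequalities persist. Everything else reduces to the standard residual orthogonality and conjugacy relations, which I would cite as routine.
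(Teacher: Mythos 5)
Your proof is correct and follows essentially the same route as the paper's: both arguments rest on the decomposition $\x^\tau = \sum_{s=1}^\tau a^s \p^{s-1}$ with $a^s \geq 0$, the nonnegativity of the $b^s$, and residual orthogonality against $\r^0 = \c$; your explicit unrolling $\langle \p^\tau, \c\rangle = \bigl(\prod_{s=1}^\tau b^s\bigr)\|\c\|_2^2$ is just the closed form of the paper's induction. The only addition is your handling of early termination when $\langle \p^{s-1}, Q\p^{s-1}\rangle = 0$, a degenerate case the paper's proof leaves implicit.
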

  \begin{proof}
    Since $\x^\tau = \x^0 + \sum_{s=1}^\tau a^s \p^{s-1}$ with $a_s \geq 0$ for all $s$ and $\x^0 = {\bm 0}$, it suffices to show that $\langle \p^\tau, \c \rangle \geq 0$ for all $\tau \geq 0$. For $\tau = 0$, we have $\langle \p^0, \c\rangle  = \|\c\|^2_2 \geq 0$. Assume $\langle \p^{\tau-1}, \c \rangle \geq 0$ for $\tau \geq 1$, then, as $b^\tau \geq 0$,
    \[
      \langle \p^\tau, \c\rangle  = \langle \r^\tau, \c \rangle + b^\tau \langle \p^{\tau-1}, \c\rangle \geq \langle \r^\tau, \b \rangle = \langle \r^\tau, \r^0 \rangle = 0,
    \]
    where the last equality comes from the orthogonality of the residuals in a
    conjugate gradient method. The proof has then be shown by induction.
  \end{proof}

\section{Computational complexities} \label{app:comput_cost}
The computational complexities of the primal and dual formulation can be
formally compared in the case of the squared loss or the quadratic
approximation of the loss. We consider for simplicity here the formulation of
the prox-linear step for squared losses, whose primal formulation reads, for a
mini-batch $S = \{i_1, \ldots, i_m\}$,
\begin{equation}\label{eq:cpt_cplxity_primal_ls}
  \d_S^t = \argmin_{\d \in \RR^p} \, 
  \frac{1}{2} \langle \d, \left((J_S^t)^* J_S^t + (m/\gamma)\idm\right)\d\rangle
  - \langle (J_S^t)^* \g_S^t, \d\rangle,
\end{equation}
with associated dual formulation,
\begin{equation}\label{eq:cpt_cplxity_dual_ls}
  \alphav_S^t = \argmin_{\alphav \in \RR^p} \, 
  \frac{1}{2} \langle \alphav, \left(J_S^t (J_S^t)^* + (\gamma/m) \idm\right) \alphav\rangle
  - \langle \g_S^t, \alphav\rangle, \quad \d_S^t = (J_S^t)^* \alphav_S^t,
\end{equation}
and associated shortcuts,
\begin{align*}
  \f_S^t & = \f_S(\w^t) = (\f_{i_1}(\w^t), \ldots, \f_{i_m}(\w^t)) \in \RR^{m \times k}, \\
  \g_S^t & = \nabla \ell_S(\f_S^t) 
  = (\nabla \ell_{i_1}(\f_1^t), \ldots, \nabla \ell_{i_m}(\f_m^t)) \in \RR^{m \times k}, \\
  J_S^t \u & = \partial \f_S(\w^t)(\u) 
  = (J_{i_1}^t \u, \ldots, J_{i_m}^t \u), 
  \quad J_i^t \u = \partial \f_i(\w^t)\u, 
  \quad \mbox{for} \ \u \in \RR^p, \\
  (J_S^t)^* \v & = (\partial \f_S(\w^t))^* \v 
  = \sum_{j=1}^m (J_{i_j}^t)^* \v_j,
  \quad \mbox{for} \ \v = (\v_1, \ldots, \v_m) \in \RR^{m \times k}.
\end{align*}
For each formulation, we need to consider (i) the cost of each iteration of the
inner solver, (ii) the condition number of each subproblem.

\paragraph{Computational cost of running $\tau$ iterations of the inner solver.}
Consider a conjugate gradient (CG) applied to solve $\min_{\x \in \RR^d}
\frac{1}{2} \langle \x, Q \x\rangle - \langle \x, \c\rangle$ with $Q$ positive
definite as recalled in Lemma~\ref{lem:cg_positive}. Each iteration of CG requires 
\begin{enumerate}
  \item 1 call to the linear operator $\x \mapsto Q \x$ to compute $Q
  \p^{\tau-1}$; if this linear call amounts to a matrix-vector computation, it
  costs $O(d^2)$ elementary computations, 
  \item 3 inner products computations in $\RR^d$, each at a cost of $O(d)$
  elementary computations, to compute $\langle \r^{\tau-1}, \r^{\tau -1}
  \rangle$, $\langle \r^\tau, \r^\tau \rangle$, $\langle \p^{\tau -1}, Q
  \p^{\tau-1}\rangle$,
  \item 3 additions in $\RR^d$, each at a cost of $O(d)$ elementary
  computations, to compute $\x^\tau, \r^\tau, \p^\tau$. 
\end{enumerate}

For the primal formulation \eqref{eq:cpt_cplxity_primal_ls}, computing the
linear operator call $\x \mapsto Q_{\mathrm{primal}} \x$, with
$Q_{\mathrm{primal}} \coloneqq (J_S^t)^* J_S^t + (m/\gamma)\idm$ amounts to one
call to the JVP $J_S^t$ and one call to the VJP $(J_S^t)^*$. For the dual
formulation, computing the linear operator call $\x \mapsto Q_{\mathrm{dual}}
\x$, with $Q_{\mathrm{dual}} \coloneqq J_S^t (J_S^t)^* + (\gamma/m)\idm$ amounts
to one call to the VJP $(J_S^t)^*$ and one call to the JVP $J_S^t$. Hence, the
computation of the linear operator is the same in both formulations. 

Other computational costs differ whether the primal or the dual formulation is
considered. For the primal formulation \eqref{eq:cpt_cplxity_primal_ls}, the
variables are of the dimension of the parameters, that is all other computations
incur a cost of $O(p)$ elementary computations. On the other hand, for the dual
formulation~\eqref{eq:cpt_cplxity_dual_ls}, the variables are of dimension the
number of samples times the output dimension, that is all other computations
incur a cost of $O(m k)$ in this case. For small batches, we retrieve that the
dual formulation can be advantageous. 

Finally, each formulation requires one additional call to the VJP $(J_S^t)^*$.
For the primal formulation, this call is necessary to compute $(J_S^t)^* \g_S^t$.
For the dual formulation, this call is necessary to map back the dual solution
to the primal solution, i.e., computing $(J_S^t)^* \alphav_S^t$. 

To summarize, the computational cost of running $\tau$ iterations of CG to
compute the prox-lienar update is 
\begin{align*}
  \mbox{Primal formulation:} \ & 
  \tau (\mathcal{T}(\partial \f_S(\w^t)) + \mathcal{T}(\partial \f_S(\w^t)^*) + O(p))
  + \mathcal{T}(\partial \f_S(\w^t)^*), \\
  \mbox{Dual formulation:} \ & 
  \tau (\mathcal{T}(\partial \f_S(\w^t)) + \mathcal{T}(\partial \f_S(\w^t)^*) + O(m k))
  + \mathcal{T}(\partial \f_S(\w^t)^*),
\end{align*}
where $\mathcal{T}(\partial \f_S(\w^t))$ and $\mathcal{T}(\partial
\f_S(\w^t)^*)$ denote the computational complexity of the JVP and VJP of $f_S$
respectively in a differentiable programming framework. This computational
complexity varies with the architecture considered. As an illustrative example,
if the network considered is a fully connected network with $D$ layers of
constant input-output dimensions $H$, the computational cost of a JVP/VJP is of
the order of $O(D H^2)$ elementary computations.

\paragraph{Conditioning of primal and dual formulations}
The convergence rate of a CG method on a problem of the form $\min_{\x \in
\RR^d} \frac{1}{2} \langle \x, Q \x\rangle - \langle \x, \c\rangle$ with $Q$
positive definite is theoretically given by
\[
  \|\x^\tau - \x^*\| 
  \leq 2 \left(\frac{\sqrt \kappa - 1}{\sqrt \kappa +1}\right)^\tau 
  \|\x^0 - \x^*\|_2, 
  \quad \mbox{for} \ \kappa = \frac{\lambda_{\max}(Q)}{\lambda_{\min}(Q)}
\]
where $\x^* = \argmin_{\x \in \RR^d} \frac{1}{2} \langle \x, Q \x\rangle -
\langle \x, \c\rangle$, $\kappa$ is the condition number associated to the
subproblem, and $\lambda_{\max}(Q)$, $\lambda_{\min}(Q)$ are respectively the
largest and smallest eigenvalues of $Q$. 

In our case, $\lambda_{\max} \coloneqq \lambda_{\max}(J_S^t (J_S^t)^*) =
\lambda_{\max}((J_S^t)^* J_S^t)$. If $p > m \times k$, we have that necessarily
$\lambda_{\min}((J_S^t)^* J_S^t) = 0$, while $\lambda_{\min} \coloneqq
\lambda_{\min}(J_S^t (J_S^t)^*) \geq 0$. Hence, the condition numbers associated
to the primal~\eqref{eq:cpt_cplxity_primal_ls} and
dual~\eqref{eq:cpt_cplxity_dual_ls} formulations are respectively, in the case
where $p > m \times k$, 
\begin{align*}
  \mbox{Primal:} \ \kappa_{\mathrm{primal}} 
  & = 1 + \frac{\gamma \lambda_{\max}}{m}\qquad
  \mbox{Dual:} \ \kappa_{\mathrm{dual}} 
  = \frac{\lambda_{\max} + (\gamma/m)}{\lambda_{\min} + (\gamma/m)}.
\end{align*}
The dual condition number can thus take advantage of the minimal eigenvalue
$\lambda_{\min}$ of $J_S^t (J_S^t)^*$. 
In practice, the rate of CG depends highly on the distribution of eigenvalues of
the linear operator considered~\citep{strakovs1991real,greenbaum1997iterative}.
Typically, if the eigenvalues are clustered rather than dispersed, the CG method
can converge must faster~\citep{greenbaum1997iterative}. 

\section{Experimental Details}\label{app:exp}

\paragraph{Architecture.}

The ConvNet we consider in Section~\ref{sec:exp} is defined by (i) a
convolutional layer with $32$ filters of kernel size $3\times 3$ followed by the
SiLU activation function~\citep{elfwing2018sigmoid} and an average pooling layer
of kernel size $2\times 2$, (ii) another convolutional layer with $64$ filters
of kernel size $3\times 3$ followed by the SiLU and an average pooling layer of
kernel size $2\times 2$, (iii) a dense layer of output dimension $256$, followed
by the SiLU, (iv) a final dense layer for classification of output size $10$.

\paragraph{Computing infrastructure}
The experiments have been run on TPUv2 (180 TFLOPS) and TPUv3 (420 TFLOPS)
machines with eight tensor nodes that is a single tray. 

\section{Additional experiments}
\label{app:additional_exp}

\subsection{Diagnosis experiments}
\paragraph{Sensitivity to inner iterations across batch-sizes.}

Figure~\ref{fig:max_iter_batch_size} displays the performance of the {\bf SPL}
algorithm using descent directions for a fixed stepsize $\gamma=1$ and varying
numbers of inner iterations and batch-sizes. We observe that for small
mini-batches or very large mini-batches (at 1024, none of the choice of inner
iterations led to convergence), the algorithm suffers from numerical
instabilities as it stops suddenly. Adding an Armijo-type line-search stabilizes
the algorithm across the board. Overall, a medium batch-size (here 128 or 256)
appears best performing.

\begin{figure}[t]
  \centering
  \includegraphics[width=0.7\linewidth]{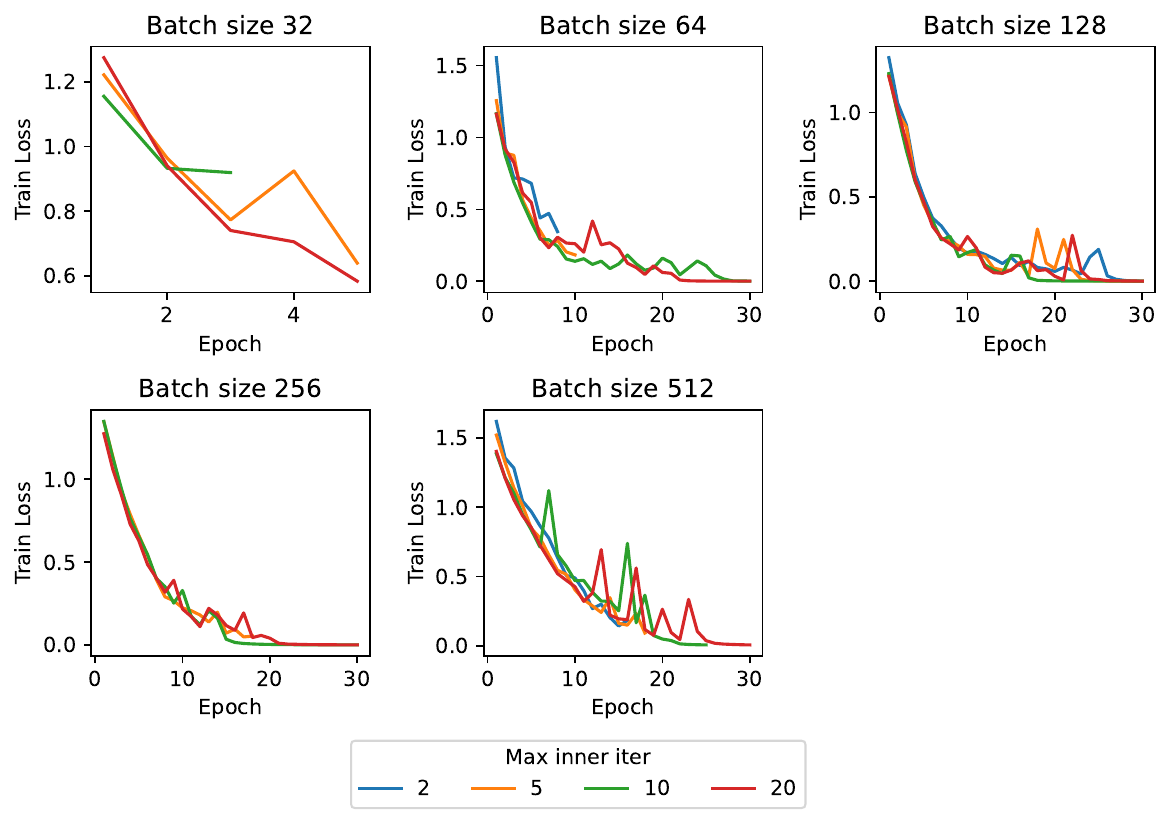}
  \caption{Sensitivity to inner iterations across batch-sizes
  for SPL without linesearch. 
  \label{fig:max_iter_batch_size}}
\end{figure}

\paragraph{Plots in time.}

Figure~\ref{fig:proxlin_x_time} presents the results of
Figure~\ref{fig:proxlin-x} in time for completeness.

\begin{figure}
  \centering
    \centering 
    \includegraphics[width=0.5\linewidth]{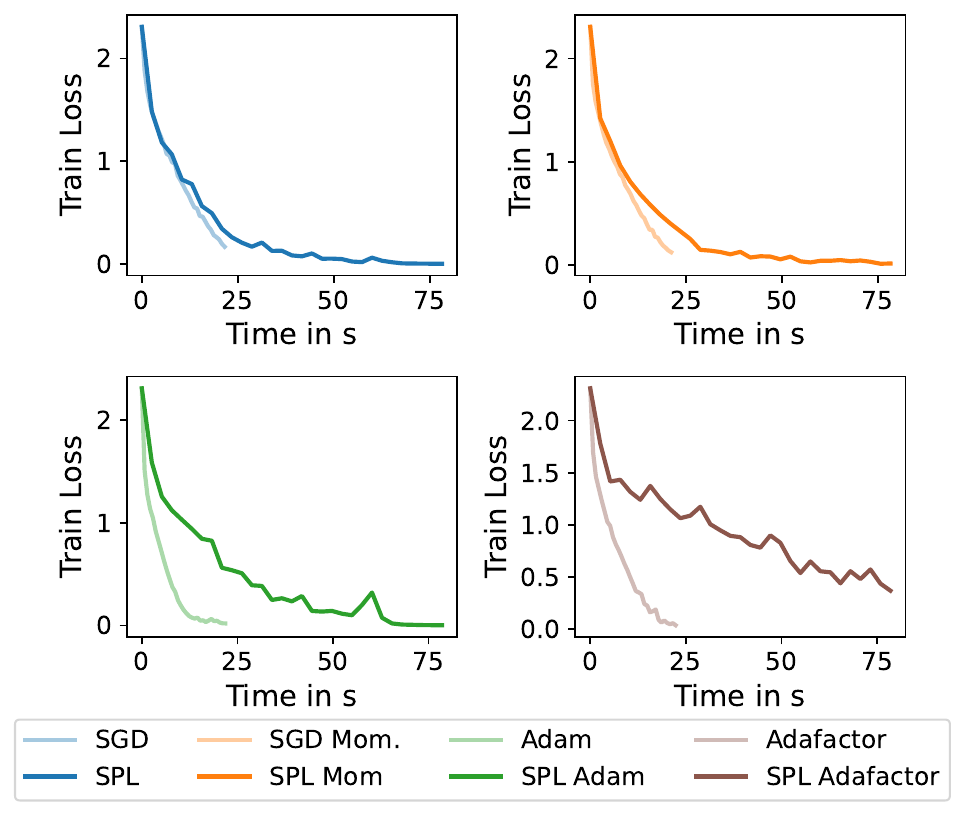}
    \caption{Prox-linear direction as a replacement for the
    stochastic gradient in existing algorithms, in time. \label{fig:proxlin_x_time}}
\end{figure}

\subsection{Additional architectures}

We consider here several other architectures ranging from CNNs
with various depths and a larger ResNet model.

\paragraph{ResNets on ImageNet.}
We consider the ResNet 18 and 34 architecture as presented by~\citet{he2016deep},
except that we consider SiLU activation functions~\citep{elfwing2018sigmoid}
instead of the ReLU activation function. We consider standard preprocessing of
images from ImageNet (crop and center images randomly to a $224\times 224$
size). 

To test the performance of the prox-linear, we consider its implementation (i)
with various $\gamma$ ranging in $\{10^i, i \in \{-2, \ldots, 2\}\}$, (ii) with
$1$ or $2$ inner iterations, (iii) with ({\bf Armijo SPL}) or without additional
linesearch ({\bf SPL}). We consider $2$ inner iterations for the inner solver.
We test SPL against SGD, SGD with momentum and Adam, whose learning rates are
searched on a log 10 scale in $\{10^i, i \in \{-5, \ldots, 0\}\}$.

The results are presented on Figure~\ref{fig:resnet}. We observe that the
potential gains offered by a Gauss-Newton method such as SPL are not apparent in
this architecture, though SPL still optimizes well the objective. 

\begin{figure}
  \includegraphics[width=\linewidth]{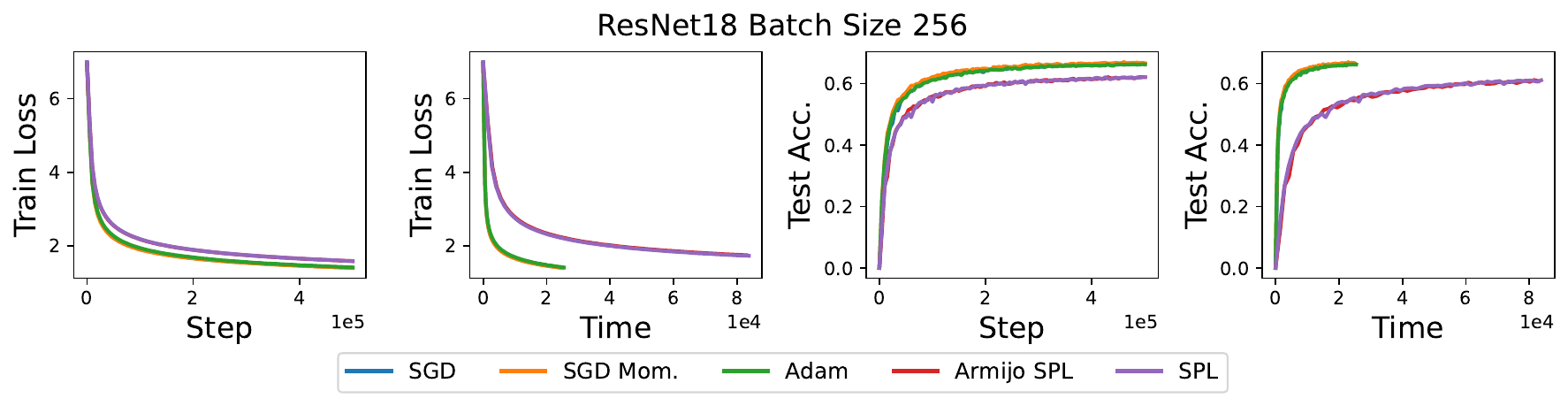}
  \includegraphics[width=\linewidth]{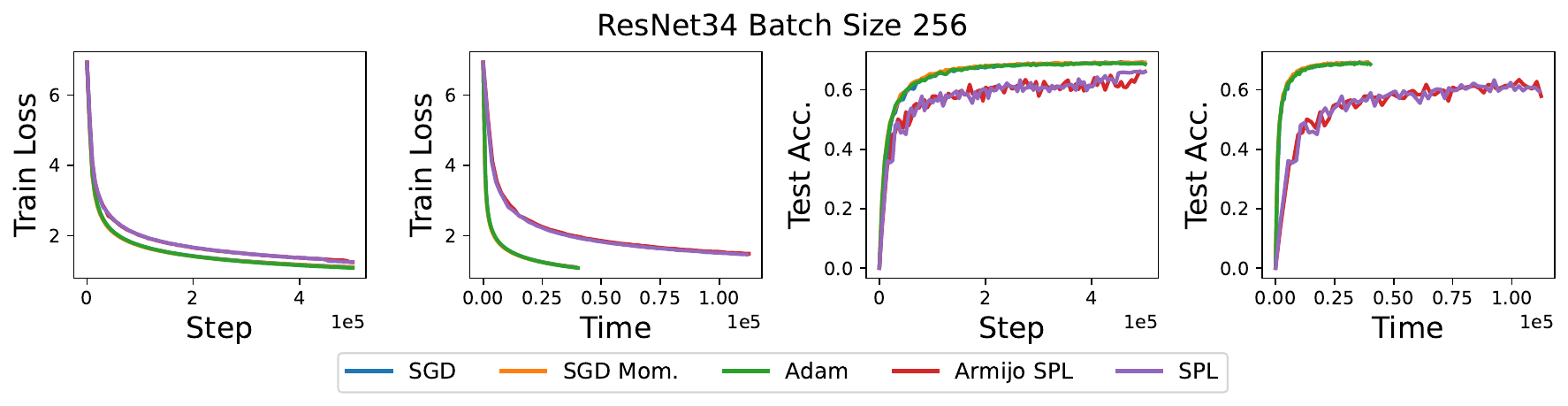}
  \includegraphics[width=\linewidth]{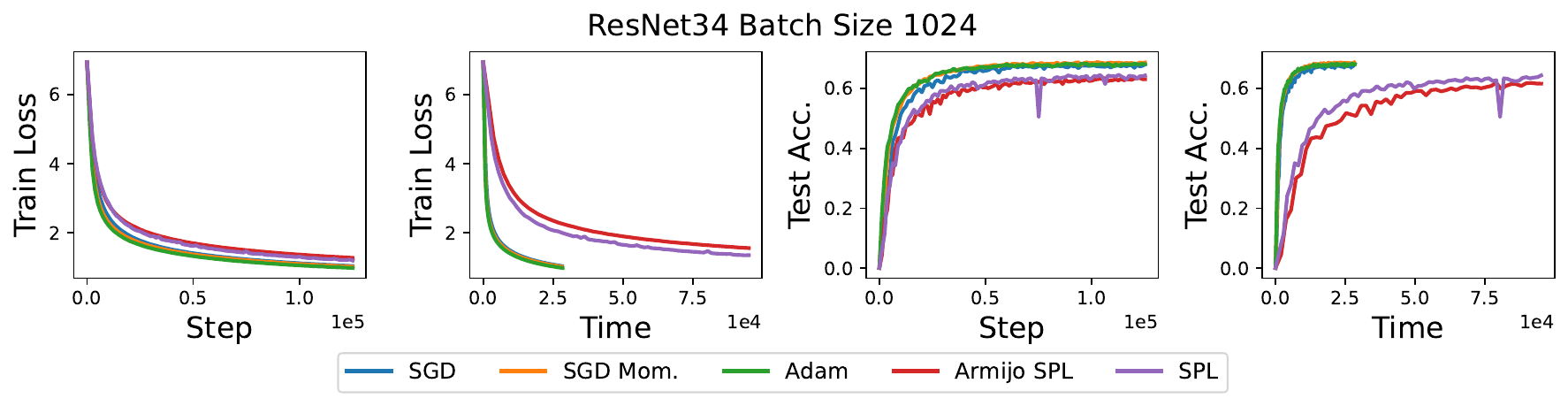}
  \caption{ImageNet.
  Top: ResNet18 batch-size 256,
  Middle: ResNet34 batch-size 256, Bottom: ResNet34 batch-size 1024, for various regularizations
  $\gamma$, various maximum inner iterations $\tau$.\label{fig:resnet}
  }
\end{figure}

\paragraph{CNNs on CIFAR10 with various depths.}
We consider additional experiments on the CIFAR10 datasets with convolutional
networks of various depths consisting in $x$ layers of $128$ filters, $x$ layers
of $64$ filters, $x$ layers of $32$ filters and a final dense layer, for $x \in
\{1, 2, 3\}$, corresponding to ``short'', ``medium'', ``long'' architectures
respectively. Each layer is followed by a SiLU activation
function~\citep{elfwing2018sigmoid}. All convolutional layers have a $3\times 3$
kernel.

In each setting, we consider an implementation of SPL (i) with various $\gamma$
ranging in $\{10^i, i \in \{-3, \ldots, 3\}\}$, (ii) with $1$, $2$, $4$ or $6$
inner iterations, (iii) with (Armijo SPL) or without additional linesearch
(SPL). We test SPL against SGD, SGD with momentum, Adam, Shampoo and KFAC, whose
learning rates are searched on a log 10 scale in $\{10^i, i \in \{-5, \ldots,
0\}\}$. For KFAC, we consider a fixed momentum of $0.9$, a fixed damping of
$10^{-3}$ and used the implementation available at {\small
\url{https://github.com/google-deepmind/kfac-jax}} with support for parallelism
on accelerators.

The results are presented on Figures~\ref{fig:short_cnn}, \ref{fig:medium_cnn},
\ref{fig:long_cnn}. We observe that SPL-Armijo performs generally on par with
its competitors, performing best for small bath-sizes. For small batch-sizes, we
also observed that the best hyper-parameters were generally to fix $\gamma$ to
$1.$, and the number of inner iterations to $2$. On the other hand, for larger
batch-sizes, we observed that the performance of Armij-SGD can deteriorate.
Interestingly, we observed that for larger models and batch-sizes the best
performances were obtained for larger number of inner iterations and smaller
$\gamma$.

\begin{figure}
  \includegraphics[width=\linewidth]{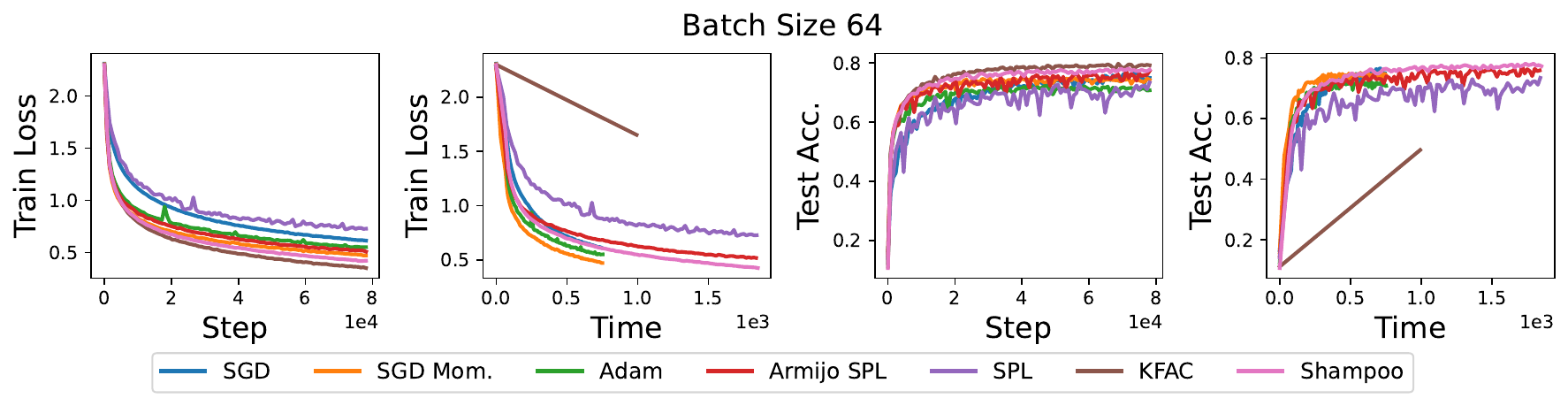}
  \includegraphics[width=\linewidth]{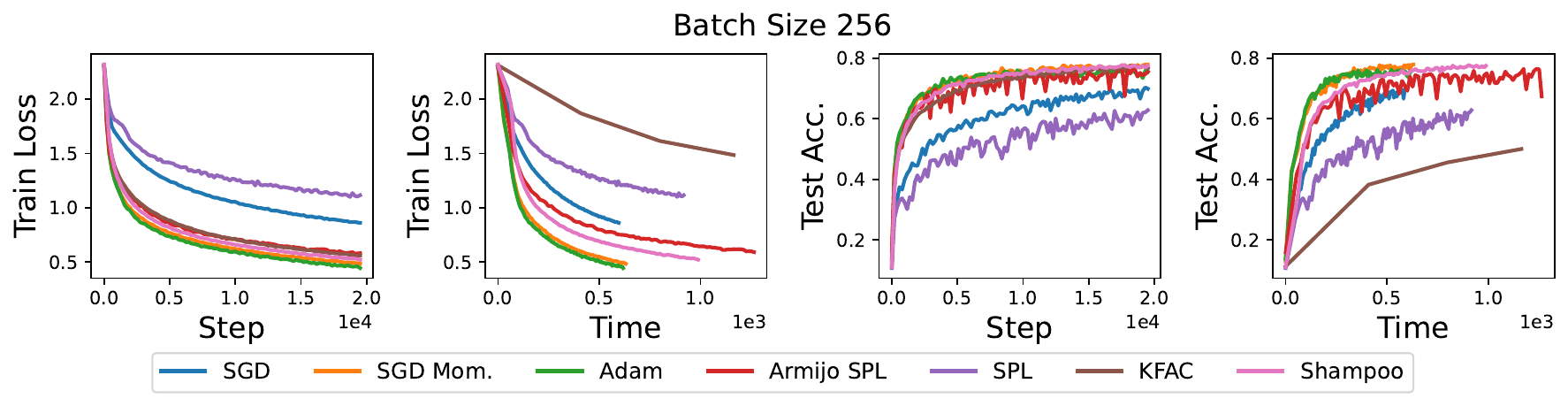}
  \caption{Short CNN $(128 \times 1, 64 \times 1, 32 \times 1)$ \label{fig:short_cnn}
  Top: Batch-size 64, Middle: Batch-size 256, Bottom: Batch-size 1024.
  Best of various regularizations $\gamma$ in $\{10^i, i \in \{-3, \ldots, 3\}\}$.
  Best of various max inner iterations $\tau$ in $(1, 2, 4, 6)$.
  }
\end{figure}

\begin{figure}
  \includegraphics[width=\linewidth]{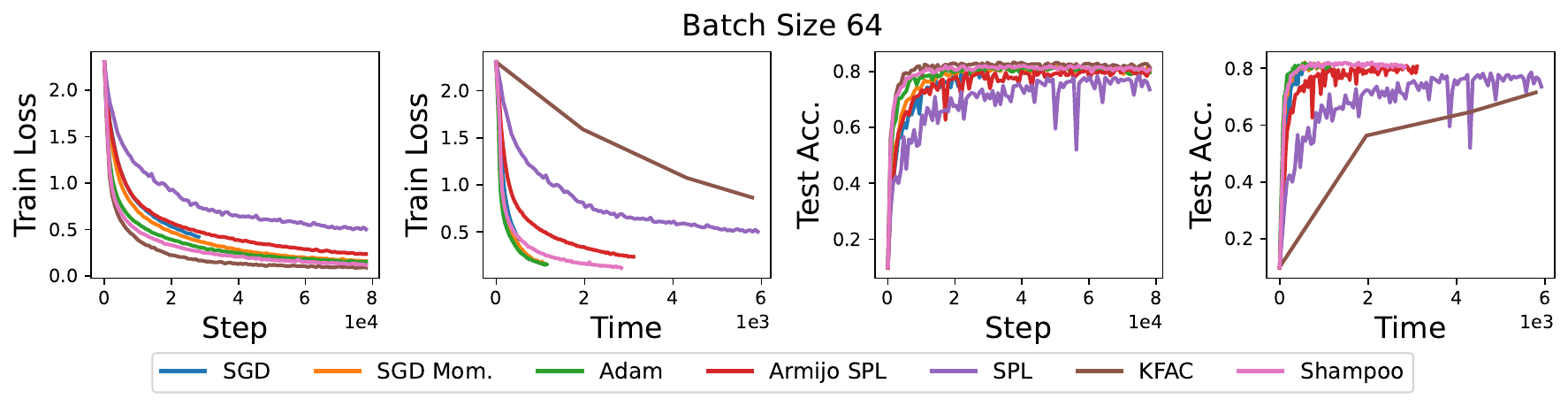}
  \includegraphics[width=\linewidth]{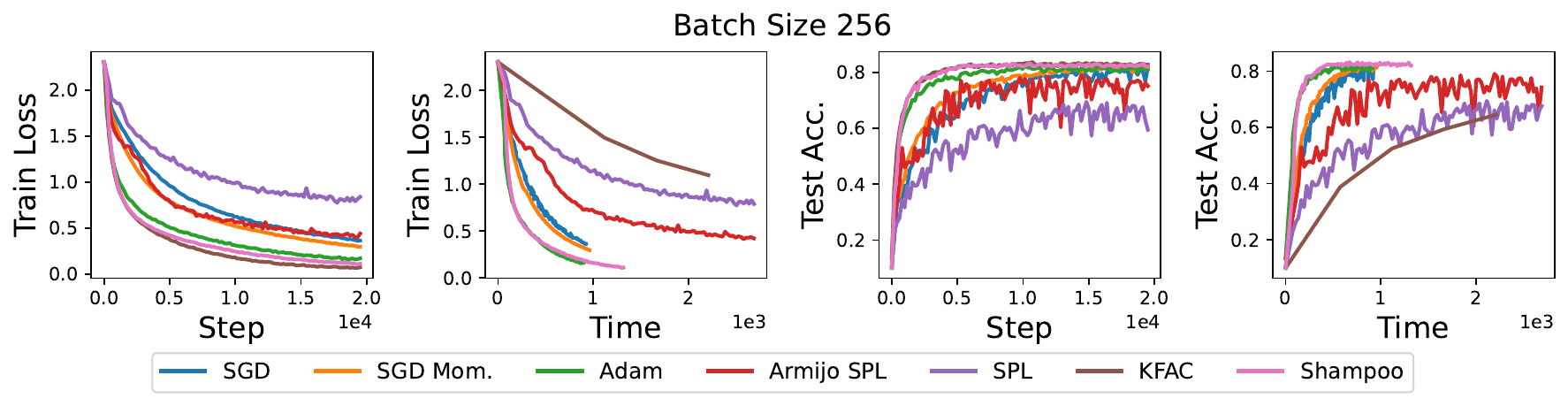}
  \caption{Medium CNN $(128 \times 2, 64 \times 2, 32 \times 2)$ \label{fig:medium_cnn}
  Top: Batch-size 64, Middle: Batch-size 256, Bottom: Batch-size 1024.
  Best of various regularizations $\gamma$ in $\{10^i, i \in \{-3, \ldots, 3\}\}$.
  Best of various max inner iterations $\tau$ in $(1, 2, 4, 6)$.
  }
\end{figure}

\begin{figure}
  \includegraphics[width=\linewidth]{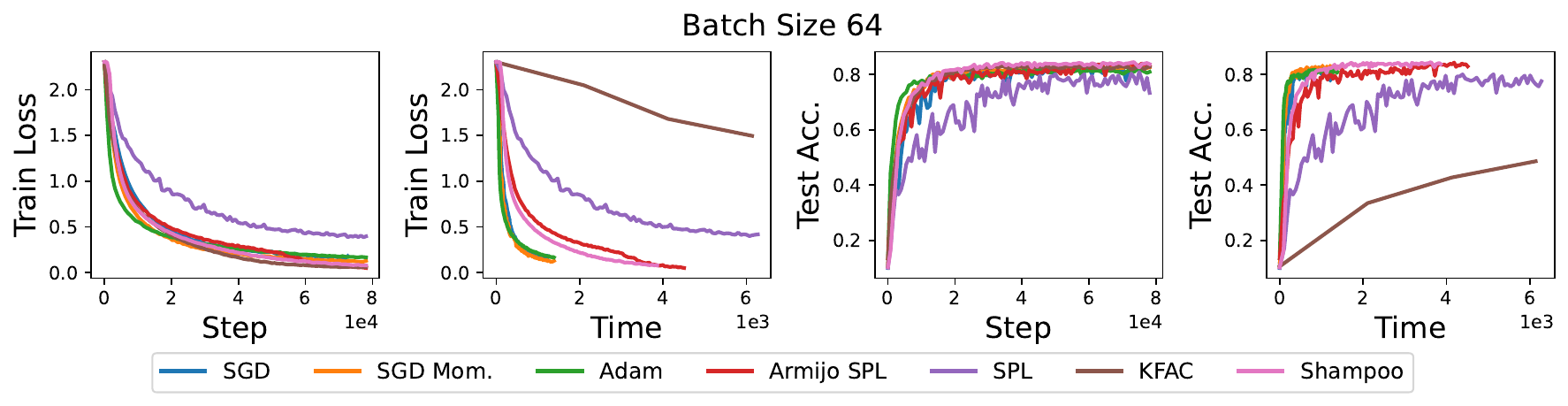}
  \includegraphics[width=\linewidth]{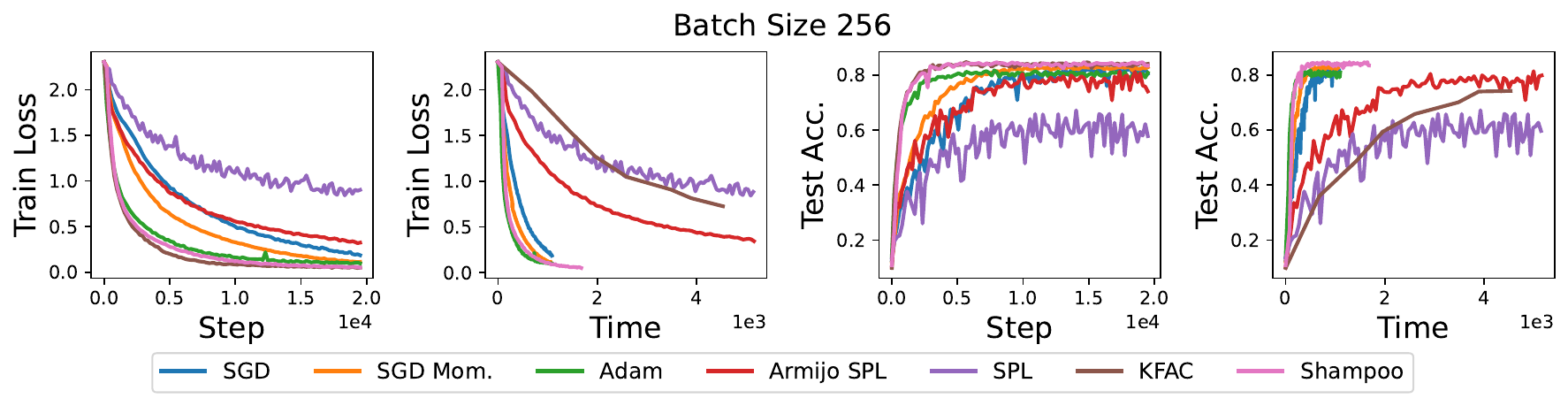}
  \caption{
  Long CNN $(128 \times 3, 64 \times 3, 32 \times 3)$. \label{fig:long_cnn}
  Top: Batch-size 64, Middle: Batch-size 256, Bottom: Batch-size 1024.
  Best of various regularizations $\gamma$ in $\{10^i, i \in \{-3, \ldots, 3\}\}$.
  Best of various max inner iterations $\tau$ in $(1, 2, 4, 6)$.
  }
\end{figure}

\end{document}